\documentclass{article}

\usepackage{microtype}
\usepackage{graphicx}
\usepackage{booktabs} 
\usepackage{multirow}
\usepackage{xcolor} 
\usepackage{array}
\newcommand{\ch}[1]{\ensuremath{\mathrm{#1}}}
\usepackage{adjustbox}
\usepackage{hyperref}
\newcommand{\flame}{FLAME}
\newcommand{\meanstd}[2]{#1 {\scriptsize$\pm$ #2}}

\usepackage[accepted]{icml2026}

\usepackage{amsmath}
\usepackage{amssymb}
\usepackage{amsthm}

\theoremstyle{plain}
\newtheorem{theorem}{Theorem}[section]
\newtheorem{proposition}[theorem]{Proposition}

\theoremstyle{definition}

\theoremstyle{remark}

\icmltitlerunning{FLAME: Physics-Guided Neural Operators for Onboard Methane Detection}

\begin{document}

\twocolumn[
  \icmltitle{
    \texorpdfstring{
      FLAME: Physics-Guided Neural Operators for Onboard Satellite \\ Methane Detection in Hyperspectral Imagery
    }{
      FLAME: Physics-Guided Neural Operators for Onboard Satellite Methane Detection in Hyperspectral Imagery
    }
  }



  \icmlsetsymbol{equal}{*}

  \begin{icmlauthorlist}
    \icmlauthor{Junhyuk Heo}{comp} \quad
    \icmlauthor{Junghwan Park}{comp} \quad
    \icmlauthor{Sangcheol Sim}{comp} \quad
    \icmlauthor{Beomkyu Choi}{comp} \quad
    \icmlauthor{Woojin Cho}{comp} 
  \end{icmlauthorlist}

  \icmlaffiliation{comp}{TelePIX, Seoul, Republic of Korea}

  \icmlcorrespondingauthor{Woojin Cho}{woojin.py@gmail.com}

  \icmlkeywords{Machine Learning, ICML}

  \vskip 0.3in
]



\printAffiliationsAndNotice{}  

\begin{abstract}
Methane is a major driver of near-term climate change, and rapidly identifying its emission sources is a critical climate intervention. Spaceborne hyperspectral imagery is the primary tool for this task, but the volume of data produced by each sensor makes ground-based detection impractical and necessitates onboard detection. Classical methods incur prohibitive computational cost on onboard hardware, while deep learning models are fast but fall short on detection quality. We propose FLAME, a physics-guided neural operator that builds the physics of methane absorption directly into its architecture. On the methane detection benchmark, FLAME achieves the highest detection accuracy among all evaluated methods, reduces the pixel-level false positive rate by nearly $3\times$ over the strongest neural baseline, uses the fewest parameters among learned baselines, and runs within the latency budget of onboard satellite hardware. Codes are available at \url{https://github.com/ROKMC1250/FLAME}.
\end{abstract}

\section{Introduction}
\label{sec:intro}

\begin{figure}[t!]
\centering
\includegraphics[width=\columnwidth]{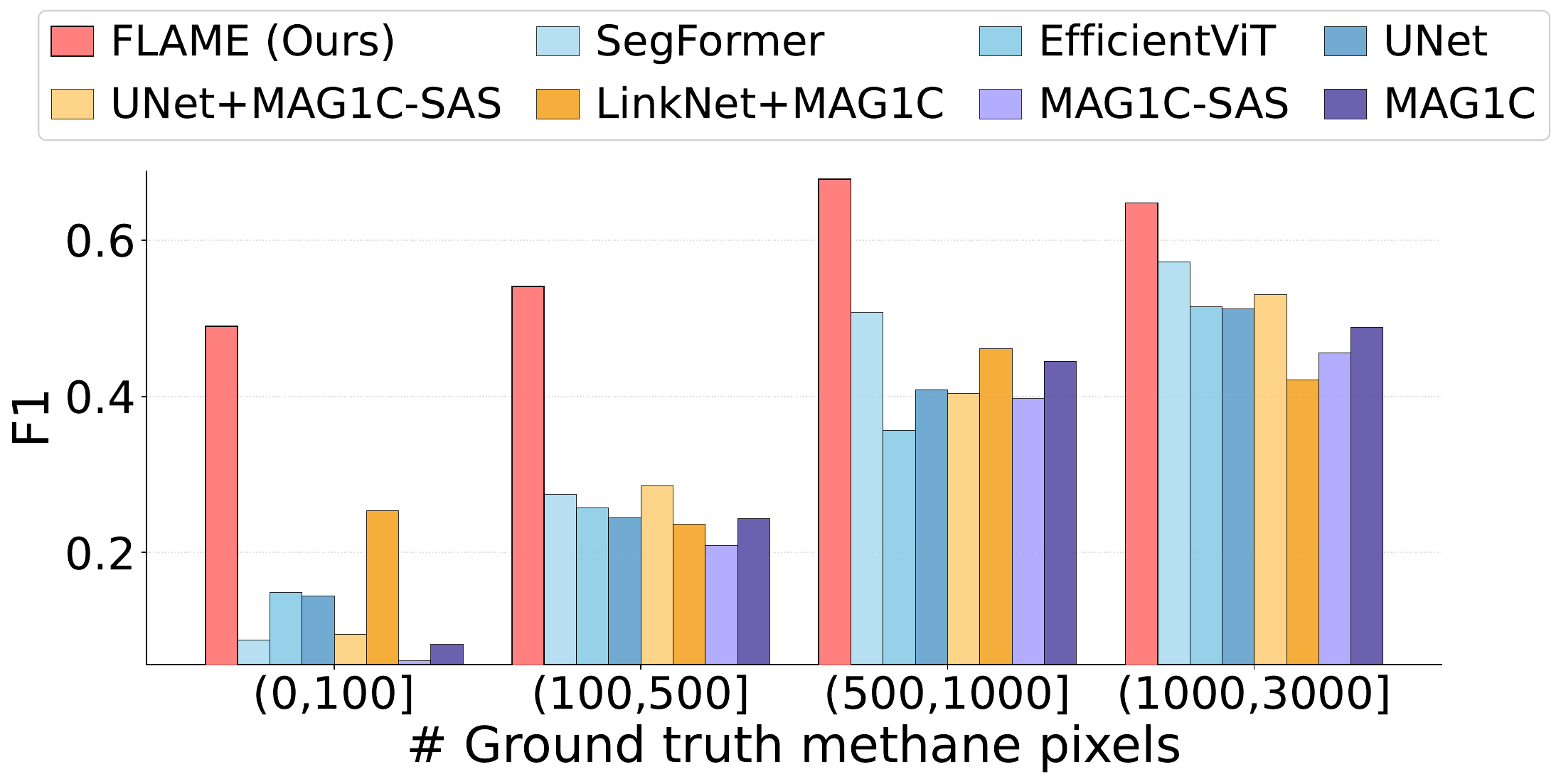}
\caption{\textbf{F1 score across plume sizes.} F1 score is measured on the STARCOP dataset, binned by the number of ground-truth plume pixels per tile. FLAME leads across all plume sizes, with the largest margin in the small-plume regime where weak signals are most easily confused with background.}
\label{fig:plume_analysis_a}
\end{figure}

Methane (\ch{CH_4}) is the second largest anthropogenic driver of
radiative forcing after carbon dioxide, with a global warming
potential roughly 80 times that of \ch{CO_2} over a $20$ year
horizon~\cite{ipcc2021ar6,etminan2016radiative,saunois2019global}.
Because its atmospheric lifetime is only about a decade, near-term
methane reductions translate directly into slowing of global
warming~\cite{shindell2012simultaneously,ocko2021acting}, making
mitigation one of the most cost-effective short-term climate levers.
A disproportionate share of anthropogenic methane comes from a
small number of point sources: large ultra-emitters in the oil and
gas sector account for a substantial fraction of sectoral
emissions~\cite{lauvaux2022global,alvarez2018assessment,duren2019california}.
Identifying and localising these sources is therefore high-leverage,
since a single repaired leak can eliminate emissions equivalent to
those of thousands of vehicles.

Spaceborne hyperspectral imagery offers a scalable tool for such
detection. Methane's distinctive shortwave-infrared absorption
features between $2122$ and $2488\,nm$ are captured by imaging
spectrometers such as AVIRIS-NG and NASA's
EMIT~\cite{thompson2016space}, and upcoming missions including
NASA's Surface Biology and Geology
(SBG)~\cite{cawse2021nasa} and ESA's Copernicus Hyperspectral
Imaging Mission (CHIME)~\cite{nieke2018towards} will extend this
toward near-global coverage. The resulting data volumes, however,
create a delivery bottleneck: hyperspectral tiles reach hundreds of
gigabytes per day per sensor, while downlink bandwidth is limited to
a few ground-station passes~\cite{langer2023robust}. Raw radiance is therefore processed on
the ground days after acquisition, by which time an ultra-emission
event may have vented for its entire duration.

Onboard processing directly addresses this latency bottleneck: detection on
the spacecraft means only plume-containing tiles need full
downlinking, and tasking signals can be issued in minutes. This is
no longer speculative. ESA's $\Phi$-Sat missions have demonstrated
onboard deep-learning inference in
orbit~\cite{giuffrida2021varphi,esposito2019orbit}, and embedded
GPU platforms such as NVIDIA Jetson are being evaluated as payload
computers~\cite{furano2020towards,yost2024state}. The design
constraint thus shifts to accuracy under tight power, memory, and
runtime budgets.

Existing detection methods sit uneasily within these constraints.
Classical matched filters model methane absorption through the
Beer--Lambert law and remain the standard tool for hyperspectral
methane retrieval~\cite{foote2020fast}, but their reliance on
tile- or column-wide background statistics leads to unstable
residuals over heterogeneous scenes and to costly iterative
covariance estimation. Purely data-driven segmentation networks are
fast, but they discard the physical structure of the problem and in
practice over-detect bright or textured surfaces that share no
absorption signature with methane. Hybrid two-stage pipelines retain
the physics but inherit the runtime of the matched filter they sit on
top of. None of these options is simultaneously accurate, fast, and
physically grounded.

We argue that the right place to inject physics is inside the
network, not around it. We propose FLAME (Fourier Learned
Absorption Matched Estimator), a physics-guided neural operator that
preserves the functional form of the log-domain matched filter while
replacing the quantities that make it fragile, namely the tile-wide
background and the global covariance, with pixel-wise estimates
produced by a Fourier-based neural operator. The detection score itself is
computed by a parameter-free inner-product layer, so the
Beer--Lambert structure is baked into the architecture rather than
learned from data. A scheduled auxiliary loss initially aligns the
learned score with a classical matched-filter product and then
decays, allowing the model to surpass its physics teacher while
retaining its inductive bias. 

\paragraph{Contributions.}
\begin{itemize}
    \item \textbf{Physics-guided neural operator.} We propose a new approach to methane detection that integrates the physical signal model into a neural operator, recasting retrieval as a function-to-function mapping rather than coupling physics and neural networks externally as in prior work.
    
    \item \textbf{Strong detection performance with a compact model.} On the STARCOP benchmark, FLAME achieves the highest F1 across all evaluated methods with the fewest parameters among learned baselines, and reduces the pixel-level false positive rate by nearly $3\times$ over the strongest neural baseline.
    
    \item \textbf{Onboard satellite deployment analysis.} We characterise inference time, power draw, and thermal behaviour on three NVIDIA Jetson modules representative of current and next-generation onboard satellite payload computers.
\end{itemize}

\section{Related Works}
\label{sec:related}

\paragraph{\textbf{Onboard Satellite Processing.}} Onboard machine learning for Earth observation has moved from
concept to orbit in recent years. The $\Phi$-Sat-1 mission
demonstrated onboard inference of a cloud-filtering CNN on a
dedicated AI accelerator~\cite{giuffrida2021varphi}, and in-orbit
hyperspectral experiments confirmed feasibility on CubeSat-class
platforms~\cite{esposito2019orbit}. Subsequent work has
benchmarked deep-learning stacks on representative edge
hardware~\cite{ziaja2021benchmarking} and examined the radiation,
power, and thermal constraints that shape payload computer
design~\cite{furano2020towards, yost2024state}. The broader vision
of responsive constellations, in which onboard detections trigger
follow-up tasking across multiple assets, has been articulated for
both disaster response and greenhouse gas
monitoring~\cite{ruuvzivcka2022ravaen,parr2024live}. Within methane
detection specifically, onboard constraints have driven work on
accelerating classical retrievals~\cite{herec2025optimizing} and on
compact learned models with reduced input
bandwidth~\cite{ruuvzivcka2025hyperspectralvits}. FLAME is designed
with these constraints as first-class concerns, trading
matched-filter iteration for a single-pass neural operator.

\paragraph{\textbf{Methane Detection from Hyperspectral Imagery.}} The matched filter has been the standard tool for hyperspectral
methane retrieval for over a decade. This is because methane is
characterized by narrow absorption features in the SWIR range, which
hyperspectral sensors can resolve with dense contiguous bands, whereas
multispectral imagery averages radiance over a small number of broad
bands and can blur these spectral cues. As illustrated in Figure~\ref{fig:ch4_spectrum},
CH$_4$ exhibits distinctive absorption structure across this
region. The matched filter models each pixel as a tile mean perturbed
by a methane-specific absorption signature under the Beer--Lambert
law, whitens the residual by the tile covariance, and projects onto
the methane target
spectrum~\cite{thompson2015real,manolakis2013detection,funk2002clustering}.
MAG1C~\cite{foote2020fast} is the current workhorse in this family,
combining an albedo correction with an iteratively-reweighted $\ell_1$
penalty at the cost of repeated covariance inversions, and variants
extend it with scene-specific enhancement
spectra~\cite{ayasse2019methane}.
Learning-based methods extend the matched-filter family in two directions.
Two-stage pipelines such as HyperSTARCOP~\cite{ruuvzivcka2023semantic}
refine matched-filter score maps with a U-Net~\cite{ronneberger2015u}, and lightweight
variants pair LinkNet~\cite{chaurasia2017linknet} with accelerated
filters such as MAG1C-SAS~\cite{herec2025optimizing}. Such pipelines
inherit the detection ceiling of the underlying filter, since any
plume the filter misses cannot be recovered downstream. End-to-end
alternatives such as MethaneMapper~\cite{kumar2023methanemapper} and
HyperspectralViTs~\cite{ruuvzivcka2025hyperspectralvits}, which adapt
SegFormer~\cite{xie2021segformer} and
EfficientViT~\cite{cai2023efficientvit} to raw hyperspectral input,
remove the matched filter entirely but discard the absorption prior
encoded in the Beer--Lambert law. FLAME keeps the matched-filter
score as an architectural component while replacing its fragile
tile-level quantities with learned pixel-wise estimates.

\begin{figure}[t]
\centering
\includegraphics[width=\columnwidth]{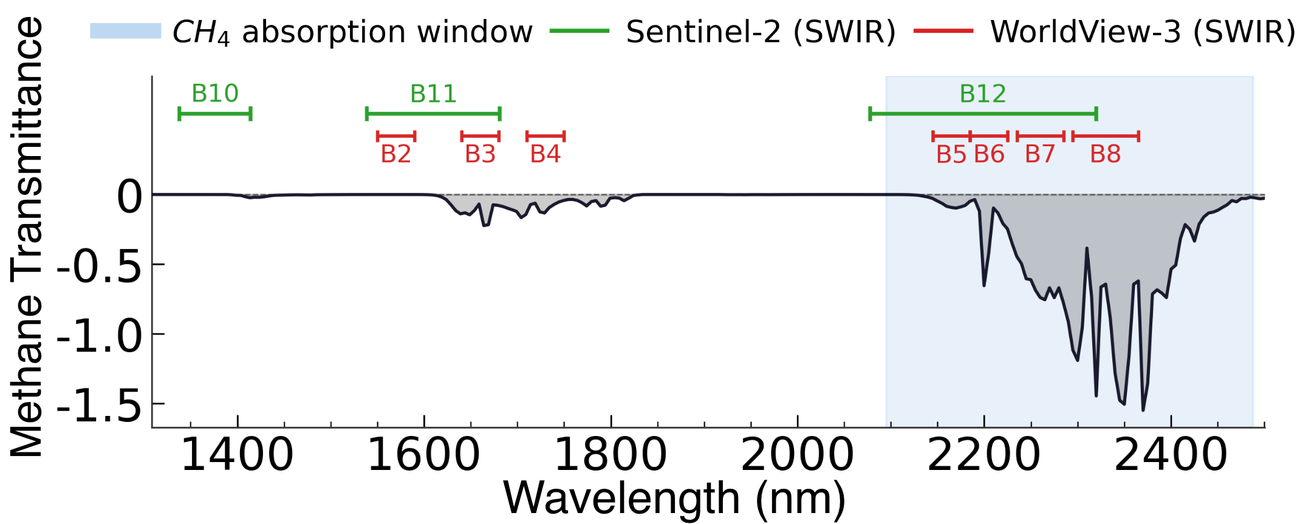}
\caption{Methane absorption spectrum in the SWIR window relevant to plume detection.
The broad SWIR bands of multispectral sensors (Sentinel-2, WorldView-3)
cannot resolve the narrow absorption features near $2122-2488\,nm$.}\label{fig:ch4_spectrum}
\end{figure}

\paragraph{\textbf{Neural Operators for Satellite Data.}} Neural operators learn mappings between function spaces rather than between fixed-dimensional tensors, which aligns well with the continuous spatial and spectral structure of remote-sensing data~\cite{kovachki2023neural}. The Fourier Neural Operator (FNO) parameterises a global convolution in the Fourier domain and has achieved strong results on parametric PDEs~\cite{li2020fourier} and global weather forecasting~\cite{pathak2022fourcastnet,kurth2023fourcastnet}. U-FNO~\cite{wen2022u} adds a U-shaped multi-scale backbone to spectral mixing, producing compact models that capture both global and local structure. Applications to remote sensing so far include tasks such as soil organic carbon estimation~\cite{wong2023image} and climate downscaling~\cite{yang2024fourier}. Our motivation for using a neural operator is that hyperspectral scenes exhibit long-range spatial correlations that global spectral convolutions capture more naturally than local kernels, and that the two unobserved quantities in the log-domain matched filter are both maps from an input function to an output function over the same domain, which matches the native form of neural-operator approximation. FLAME instead integrates the matched-filter signal model into the operator's output, preserving its inner-product structure as an architectural invariant.

\section{Preliminaries}
\label{sec:prelim}
This section reviews the matched filter for methane detection. We first introduce the Beer--Lambert model in the radiance domain and the classical matched filter derived from its linearization, and then present the log-domain formulation that eliminates the linear approximation. The log-domain derivation motivates the FLAME architecture introduced in Section~\ref{sec:flame}.

\subsection{Matched Filter under the Beer--Lambert Law}
\label{sec:mf_prelim}

A hyperspectral sensor measures radiance across $p$ discrete SWIR bands, so we work with $\mathbb{R}^p$-valued spectra throughout. Let $L_i \in \mathbb{R}^p$ denote the radiance observed at pixel $i$ over the $p$ bands, and let $L_i^{\mathcal{B}} \in \mathbb{R}^p$ denote the corresponding methane-free background radiance that would be observed in the absence of a plume. Under the Beer--Lambert law, the two quantities are related through the methane column enhancement $\alpha_i \ge 0$ and the fixed unit absorption spectrum $s \in \mathbb{R}^p$, as follows.
\begin{equation}
    L_i = L_i^{\mathcal{B}} \odot \exp(\alpha_i \, s),
    \label{eq:beer_lambert}
\end{equation}
where $\odot$ denotes element-wise multiplication and $\exp(\cdot)$ is applied element-wise. Following the sign convention of \citet{foote2020fast}, we define $s$ such that $s_j < 0$ in bands of strong methane absorption, which absorbs the sign of the linearization into the spectrum. Classical matched filters operate directly in the radiance domain. Assuming that $\alpha_i \, s$ is small element-wise, a first-order Taylor expansion of Eq.~\eqref{eq:beer_lambert} yields the following approximation, where $L_i^{\mathcal{B}}$ is replaced by the tile-shared sample mean $\mu \in \mathbb{R}^p$ in the second line.
\begin{equation}
    \begin{aligned}
        L_i &\approx L_i^{\mathcal{B}} + \alpha_i \, (L_i^{\mathcal{B}} \odot s), \\
        L_i &\approx \mu + \alpha_i \, (\mu \odot s) + \eta,
    \end{aligned}
    \label{eq:radiance_linearization}
\end{equation}
where $\eta$ is an additive noise term.
Defining the target spectrum $t = \mu \odot s$, where $C \in \mathbb{R}^{p \times p}$ is the background covariance estimated empirically from the tile, the matched-filter estimate of $\alpha_i$ is given as follows.
\begin{equation}
    \hat\alpha_i^{\mathrm{MF}} = \frac{(L_i - \mu)^\top C^{-1}(\mu \odot s)}{(\mu \odot s)^\top C^{-1}(\mu \odot s)}.
    \label{eq:mf_estimate}
\end{equation}

MAG1C~\citep{foote2020fast} extends this estimator with two additional components: an albedo correction $r_i = L_i^\top \mu / \mu^\top \mu$ that compensates for the multiplicative nature of the Beer-Lambert attenuation across pixels of varying brightness, and an iteratively reweighted $\ell_1$ regularization that suppresses already-detected methane pixels during background re-estimation. At iteration $k$, the update is given as follows.
\begin{equation}
    \hat\alpha_i^{(k)} = \max\left( \frac{(L_i - \mu^{(k)})^\top v^{(k)} - w_i^{(k)}}{r_i^{(k)} \cdot m^{(k)}}, \; 0 \right),
    \label{eq:mag1c_iter}
\end{equation}
where $v^{(k)} = (C^{(k)})^{-1}(\mu^{(k)} \odot s)$ is the whitened target and $m^{(k)} = (\mu^{(k)} \odot s)^\top v^{(k)}$ is the normalization constant, both re-estimated after subtracting previous methane estimates, and $w_i^{(k)} = 1 / (r_i^{(k)} (\hat\alpha_i^{(k-1)} + \delta))$ is an adaptive sparsity term with stabilization constant $\delta > 0$.

Three limitations follow from this formulation. First, a single tile-level $\mu$ cannot represent heterogeneous surfaces. Second, computing $v^{(k)}$ requires an explicit covariance inversion $(C^{(k)})^{-1}$, which dominates the runtime on onboard satellite hardware. Third, iteration is unavoidable because the initial $\mu$ is contaminated by methane pixels and must be re-estimated after suppressing previous detections.

\subsection{Log-Domain Matched Filter}
\label{sec:log_mf}
The linear approximation in Eq.~\eqref{eq:radiance_linearization} can be avoided by operating in log space. Taking the element-wise natural logarithm of Eq.~\eqref{eq:beer_lambert} yields the following exact linear relation, to which we attach an additive noise term $\eta_i$ that captures sensor noise and unmodeled deviations.
\begin{equation}
    \ell_i = \ell_i^{\mathcal{B}} + \alpha_i \, s + \eta_i,
    \label{eq:log_beer_lambert}
\end{equation}
where $\ell_i = \log L_i \in \mathbb{R}^p$ is the log-radiance and $\ell_i^{\mathcal{B}} = \log L_i^{\mathcal{B}} \in \mathbb{R}^p$ is the log-background, with $\log(\cdot)$ applied element-wise. Unlike Eq.~\eqref{eq:radiance_linearization}, this relation holds without approximation and remains valid for strong plumes in which $\alpha_i \, s$ is not small. Crucially, the Beer--Lambert attenuation is additive in the log domain rather than multiplicative, so the background radiance no longer scales the target spectrum; as a consequence, the albedo correction $r_i$ present in Eq.~\eqref{eq:mf_estimate} is not needed. The covariance of $\eta_i$, denoted $\mathcal{C}_i \in \mathbb{R}^{p \times p}$, is in general pixel-dependent because sensor noise scales with radiance and its log-space counterpart is therefore heteroscedastic.

Under the observation model in Eq.~\eqref{eq:log_beer_lambert}, the weighted least-squares estimator of $\alpha_i$ is given as follows.
\begin{equation}
    \hat\alpha_i^{\mathrm{log}} = \frac{(\ell_i - \ell_i^{\mathcal{B}})^\top \, \mathcal{C}_i^{-1} \, s}{s^\top \, \mathcal{C}_i^{-1} \, s}.
    \label{eq:log_mf_estimator}
\end{equation}
The component that determines whether a pixel is detected is the inner product in the numerator, which can be written band-wise as follows.
\begin{equation}
    (\ell_i - \ell_i^{\mathcal{B}})^\top \, \mathcal{C}_i^{-1} \, s = \sum_{j=1}^{p} \big[\ell_i - \ell_i^{\mathcal{B}}\big]_j \cdot \big[\mathcal{C}_i^{-1} \, s\big]_j,
    \label{eq:log_mf_numerator}
\end{equation}
where $[\,\cdot\,]_j$ denotes the $j$-th SWIR-band component. This expression is a spectrally weighted projection of the log-residual onto the methane spectrum, whitened by the inverse noise covariance. Two unobserved quantities appear: the log-background $\ell_i^{\mathcal{B}}$, and the whitened target $\mathcal{C}_i^{-1} s$. FLAME replaces both with learned estimates while preserving the inner-product structure of Eq.~\eqref{eq:log_mf_numerator}.

\section{Proposed Methods: FLAME}
\label{sec:flame}
Building on the log-domain formulation in Section~\ref{sec:log_mf}, FLAME estimates the log-background and the whitened target in Eq.~\eqref{eq:log_mf_numerator} using a shared neural operator followed by two lightweight heads.
\subsection{Formulation}
\label{sec:flame_formulation}
We view each hyperspectral tile as a spectral field $\ell : \Omega \to \mathbb{R}^p$ defined over the spatial domain $\Omega \subset \mathbb{R}^2$, with $\ell(x) \in \mathbb{R}^p$ the log-radiance vector at spatial location $x$. FLAME replaces the two unobserved quantities in Eq.~\eqref{eq:log_mf_numerator} with predictions of the same functional form---a log-background field $\hat\ell^{\mathcal{B}} : \Omega \to \mathbb{R}^p$ and a spectral-weight field $\hat w : \Omega \to \mathbb{R}^p$---so that detection is cast as a mapping between spectral fields over the same spatial domain. On the discrete $H \times W$ grid, we identify fields with their evaluations at pixel locations and write $\ell_i = \ell(x_i)$, $\hat\ell_i^{\mathcal{B}} = \hat\ell^{\mathcal{B}}(x_i)$, and $\hat w_i = \hat w(x_i)$.
The backbone $\Phi(\,\cdot\,;\theta_\Phi)$ is a neural operator that maps the input field to a latent feature field; its evaluation at pixel $i$ is given as follows.
\begin{equation}
    \mathbf{z}_i = \big[\Phi(\ell; \theta_\Phi)\big](x_i) \in \mathbb{R}^d,
    \label{eq:backbone}
\end{equation}
with feature dimension $d$. Two point-wise heads $f_{bg}$ and $f_{sw}$ then lift the feature vector to the two physical fields, evaluated pixel-wise as follows.
\begin{equation}
    \hat\ell_i^{\mathcal{B}} = f_{bg}(\mathbf{z}_i; \theta_{bg}) \in \mathbb{R}^p, \quad
    \hat w_i = f_{sw}(\mathbf{z}_i; \theta_{sw}) \in \mathbb{R}^p.
    \label{eq:bg_head}
\end{equation}
Here $\theta_\alpha = (\theta_\Phi, \theta_{bg}, \theta_{sw})$ collects the parameters that enter the physics-guided score, $\theta_{sg}$ parameterizes the segmentation head introduced in Section~\ref{sec:heads}, and $\Theta = (\theta_\alpha, \theta_{sg})$ denotes the full parameter set. The head $f_{bg}$ approximates the log-background $\ell_i^{\mathcal{B}}$ pixel-wise. The head $f_{sw}$ approximates the whitened target through the parameterization $\mathcal{C}_i^{-1} s \to \hat w_i \odot s$, in which the fixed absorption spectrum $s$ is retained as a multiplicative physical prior. Substituting Eq.~\eqref{eq:bg_head} and this parameterization into the log-MF numerator in Eq.~\eqref{eq:log_mf_numerator} yields the FLAME detection score, which we refer to as the physics-guided score and define as follows.
\begin{equation}
    \hat\alpha_i(\ell; \theta_\alpha) = \big(\ell_i - \hat\ell_i^{\mathcal{B}}\big)^\top \big(\hat w_i \odot s\big).
    \label{eq:flame_score}
\end{equation}
Although $\hat\ell_i^{\mathcal{B}}$ and $\hat w_i$ appear without explicit arguments in Eq.~\eqref{eq:flame_score}, both depend on the entire input field $\ell$ through $\Phi$, giving the score a non-local receptive field. The full pipeline runs in a single forward pass that avoids the iterative covariance inversion of Eq.~\eqref{eq:mag1c_iter}, and strictly contains the classical log-domain matched filter as a special case, as shown in Appendix~\ref{app:proposition}.
\begin{figure}[t!]
\centering
\includegraphics[width=0.99\columnwidth]{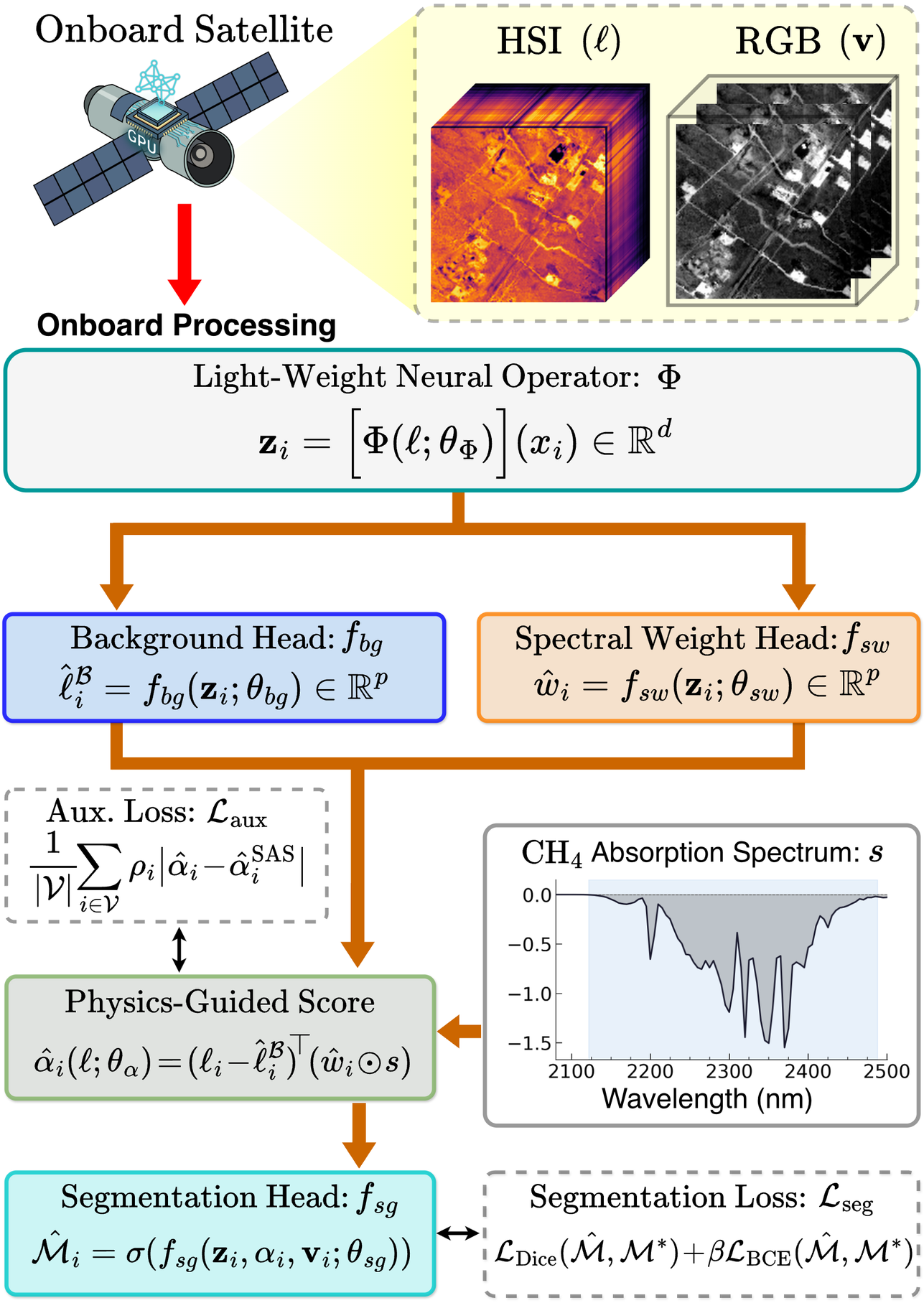}
\caption{\textbf{Overview of FLAME.} A neural operator predicts pixel-wise backgrounds and spectral weights, combined with the fixed \ch{CH_4} prior to yield a physics-guided score and plume mask.}\label{fig:model}
\end{figure}

\subsection{Heads and Score Layer}
\label{sec:heads}

\paragraph{Log-background head.}
The log-background head $f_{bg}$ is implemented as a single $1 \times 1$ convolution, $\hat\ell_i^{\mathcal{B}} = W_{bg} \mathbf{z}_i + b_{bg}$ with $W_{bg} \in \mathbb{R}^{p \times d}$, $b_{bg} \in \mathbb{R}^p$, and parameters $\theta_{bg} = (W_{bg}, b_{bg})$. The bias $b_{bg}$ is initialized to the training-set mean log-spectrum, so the model starts from the shared-mean solution assumed by the classical matched filter and then departs to a pixel-wise estimate as learning proceeds.

\paragraph{Spectral weight head.}
The spectral weight head $f_{sw}$ is also a $1 \times 1$ convolution followed by a softplus activation to enforce non-negativity, $\hat w_i = \mathrm{softplus}(W_{sw} \mathbf{z}_i + b_{sw})$ with $\theta_{sw} = (W_{sw}, b_{sw})$. $\hat w_i$ represents a pixel-adaptive generalization of the diagonal part of $\mathcal{C}_i^{-1}$. Off-diagonal correlations in $\mathcal{C}_i$ are not modeled inside $f_{sw}$, which outputs a vector; instead, they are absorbed by the backbone $\Phi$, which mixes all bands globally before $f_{sw}$ operates on $\mathbf{z}_i$. The explicit factor $s$ in Eq.~\eqref{eq:flame_score} guarantees that the effective spectral weighting aligns with the methane absorption prior regardless of the learned $\hat w_i$.

\paragraph{Score layer.}
The score layer is parameter-free and implements Eq.~\eqref{eq:flame_score} directly. Because Eq.~\eqref{eq:log_beer_lambert} is exact, the score remains structurally unbiased across the full range of plume intensities, including strong emissions for which the radiance-domain linearization in Eq.~\eqref{eq:radiance_linearization} would underestimate $\alpha_i$. For notational convenience, throughout the remainder of the paper we let $\hat\alpha_i$ denote the score after a fixed normalization by a constant $\tau > 0$ and clipping to $[0, \tau_{\max}]$, where $\tau$ is determined from log-domain MAG1C-SAS statistics on the training set. The same normalization is applied to the MAG1C-SAS reference $\hat\alpha_i^{\mathrm{SAS}}$ used for auxiliary supervision in Section~\ref{sec:training}.

\paragraph{Segmentation head.}
The segmentation head $f_{sg}$ incorporates spatial context and implicitly handles the albedo correction $r_i$, the normalization $m$, and the sparsity suppression that appear in Eq.~\eqref{eq:mag1c_iter}. It consumes the concatenation $\mathbf{u}_i = [\mathbf{z}_i, \hat\alpha_i, \mathbf{v}_i] \in \mathbb{R}^{d+4}$ of the backbone features, the normalized score, and the normalized RGB triplet $\mathbf{v}_i \in \mathbb{R}^3$. $f_{sg}$ consists of two $3 \times 3$ convolutional blocks with batch normalization and ReLU, followed by a $1 \times 1$ projection to a single logit. The predicted plume probability is $\hat{\mathcal{M}}_i = \sigma(f_{sg}(\mathbf{u}_i; \theta_{sg}))$, where $\sigma$ denotes the sigmoid function.

\subsection{Training}
\label{sec:training}

All parameters $\Theta = (\theta_\alpha, \theta_{sg})$ are trained jointly with the objective
\begin{equation}
    \mathcal{L}(\Theta) = \mathcal{L}_{\mathrm{seg}}(\Theta) + \gamma(t) \cdot \mathcal{L}_{\mathrm{aux}}(\theta_\alpha),
    \label{eq:total_loss}
\end{equation}
where $t$ indexes the training epoch. The segmentation loss $\mathcal{L}_{\mathrm{seg}}$ supervises the predicted plume mask, while the auxiliary loss $\mathcal{L}_{\mathrm{aux}}$ aligns the physics-guided score with a classical matched-filter teacher and is annealed by $\gamma(t)$.

\paragraph{Segmentation loss.}
We combine Dice loss with positive-weighted binary cross-entropy.
\begin{equation}
    \mathcal{L}_{\mathrm{seg}} = \mathcal{L}_{\mathrm{Dice}}(\hat{\mathcal{M}}, \mathcal{M}^*) + \beta \cdot \mathcal{L}_{\mathrm{BCE}}(\hat{\mathcal{M}}, \mathcal{M}^*),
    \label{eq:seg_loss}
\end{equation}
with $\beta = \min(n_-/n_+, 50)$, where $n_+$ and $n_-$ are the positive and negative pixel counts.

\begin{table*}[t!]
\centering
\caption{\textbf{Quantitative Comparison on the STARCOP test set}. Score columns are reported as mean $\pm$ standard deviation. Best results are in bold, second best are underlined. For Pixel FPR, Time, and Params, lower is better. Pixel FPR is reported as $\times 10^{-4}$. Backbones and decoders are abbreviated in the Method column as MobileNetV2 (MNv2), ResNet18 (R18), ConvUp (CU), and ConvUpStride (CUS). Inference time is measured on an RTX 4090 with batch size 1.}
\label{tab:main}
\small
\setlength{\tabcolsep}{5pt}
\begin{tabular}{ll cccc c cc}
\toprule
& Method & F1 & IoU & Precision & Recall & Pixel FPR & Time (ms) & Params \\
\midrule
\multirow{5}{*}{\rotatebox{90}{\scriptsize Classical}} & CEM & 0.178 & 0.098 & 0.114 & 0.399 & 74.0 & 19.6 & -- \\
 & MF & 0.178 & 0.098 & 0.115 & 0.398 & 74.0 & 21.0 & -- \\
 & ACE & 0.154 & 0.084 & 0.111 & 0.254 & 49.0 & 32.5 & -- \\
 & MAG1C-tile & 0.300 & 0.176 & 0.228 & 0.437 & 35.0 & 343.7 & -- \\
 & MAG1C-SAS & 0.284 & 0.166 & 0.194 & 0.528 & 52.0 & 116.2 & -- \\
\midrule
\multirow{4}{*}{\rotatebox{90}{\scriptsize Two-stage}} & UNet+MAG1C-tile & \meanstd{0.477}{0.036} & \meanstd{0.314}{0.032} & \meanstd{0.320}{0.034} & \meanstd{\underline{0.946}}{0.013} & \meanstd{49.0}{7.0} & 348.1 & 6.60M \\
 & UNet+MAG1C-SAS & \meanstd{0.441}{0.020} & \meanstd{0.283}{0.017} & \meanstd{0.300}{0.022} & \meanstd{0.844}{0.042} & \meanstd{48.0}{8.0} & 120.6 & 6.60M \\
 & LinkNet+MAG1C-tile & \meanstd{0.402}{0.026} & \meanstd{0.252}{0.021} & \meanstd{0.256}{0.023} & \meanstd{\textbf{0.947}}{0.018} & \meanstd{67.0}{9.0} & 347.5 & \underline{0.85M} \\
 & LinkNet+MAG1C-SAS & \meanstd{0.397}{0.017} & \meanstd{0.248}{0.013} & \meanstd{0.260}{0.017} & \meanstd{0.846}{0.027} & \meanstd{58.0}{7.0} & 120.0 & \underline{0.85M} \\
\midrule
\multirow{8}{*}{\rotatebox{90}{\scriptsize End-to-end}} & UNet (MNv2) & \meanstd{0.421}{0.007} & \meanstd{0.267}{0.006} & \meanstd{0.337}{0.023} & \meanstd{0.570}{0.042} & \meanstd{27.0}{5.0} & \underline{4.9} & 6.65M \\
 & UNet (R18) & \meanstd{0.309}{0.020} & \meanstd{0.183}{0.014} & \meanstd{0.218}{0.015} & \meanstd{0.535}{0.032} & \meanstd{46.0}{3.0} & \textbf{4.1} & 14.54M \\
 & SegFormer (base) & \meanstd{0.393}{0.055} & \meanstd{0.246}{0.044} & \meanstd{0.286}{0.085} & \meanstd{0.736}{0.129} & \meanstd{52.0}{23.0} & 6.3 & 3.82M \\
 & SegFormer (CU) & \meanstd{\underline{0.515}}{0.009} & \meanstd{\underline{0.347}}{0.008} & \meanstd{\underline{0.417}}{0.012} & \meanstd{0.679}{0.059} & \meanstd{\underline{23.0}}{3.0} & 7.3 & 4.30M \\
 & SegFormer (CUS) & \meanstd{0.449}{0.022} & \meanstd{0.290}{0.019} & \meanstd{0.314}{0.038} & \meanstd{0.831}{0.116} & \meanstd{46.0}{14.0} & 20.3 & 4.30M \\
 & EfficientViT (base) & \meanstd{0.345}{0.035} & \meanstd{0.209}{0.026} & \meanstd{0.222}{0.021} & \meanstd{0.785}{0.135} & \meanstd{66.0}{12.0} & 7.1 & 4.81M \\
 & EfficientViT (CU) & \meanstd{0.404}{0.003} & \meanstd{0.253}{0.002} & \meanstd{0.262}{0.003} & \meanstd{0.880}{0.009} & \meanstd{59.0}{1.0} & 7.6 & 4.85M \\
 & EfficientViT (CUS) & \meanstd{0.414}{0.077} & \meanstd{0.264}{0.060} & \meanstd{0.279}{0.063} & \meanstd{0.829}{0.041} & \meanstd{55.0}{17.0} & 8.2 & 4.85M \\
\midrule
& \flame{} (Ours) & \meanstd{\textbf{0.608}}{0.005} & \meanstd{\textbf{0.437}}{0.005} & \meanstd{\textbf{0.651}}{0.056} & \meanstd{0.576}{0.035} & \meanstd{\textbf{8.0}}{2.0} & 6.2 & \textbf{0.78M} \\
\bottomrule
\end{tabular}
\end{table*}

\paragraph{Curriculum auxiliary supervision.}
Since the score map is initially uninformative, we stabilize early training by matching $\hat\alpha_i$ against precomputed MAG1C-SAS outputs $\hat\alpha_i^{\mathrm{SAS}}$ as follows.
\begin{equation}
    \mathcal{L}_{\mathrm{aux}}(\theta_\alpha) = \frac{1}{|\mathcal{V}|} \sum_{i \in \mathcal{V}} \rho_i \cdot \big| \hat\alpha_i(\ell; \theta_\alpha) - \hat\alpha_i^{\mathrm{SAS}} \big|,
    \label{eq:aux_loss}
\end{equation}
where $\mathcal{V}$ is the set of valid pixels and $\rho_i = 1 + \lambda_\rho \hat\alpha_i^{\mathrm{SAS}}$ with $\lambda_\rho = 10$. The auxiliary weight follows a half-period cosine decay $\gamma(t) = \frac{1}{2}(1 + \cos(\pi \min(t/T, 1)))$ with $T = 10$. This induces two phases: in Phase 1 $\mathcal{L}_{\mathrm{aux}}$ drives optimization, aligning the backbone with the matched filter, and in Phase 2 $\mathcal{L}_{\mathrm{seg}}$ takes over, driving the model beyond the MAG1C-SAS teacher.

\section{Experiments}

We evaluate FLAME along four axes. First, we benchmark detection accuracy and efficiency against classical, two-stage, and end-to-end baselines on the STARCOP test set. Second, we qualitatively characterize FLAME against representative baselines across plume regimes. Third, we ablate the physics-guided score to isolate its contribution. Fourth, we assess deployability by profiling inference on hardware representative of onboard satellite platforms.

\subsection{Experimental Setup}
\label{sec:setup}
\paragraph{Dataset.}
We evaluate on the STARCOP dataset \cite{ruuvzivcka2023semantic}, which pairs AVIRIS-NG hyperspectral tiles from the Permian Basin campaign with pixel-level methane plume annotations curated on $512 \times 512$ chips. We use 72 SWIR bands in the 2122--2488~nm range together with three RGB bands, follow the original train/test split, and report all metrics on the held-out test set of 342 tiles.
\paragraph{Evaluation Metrics.}
We report pixel-level recall, precision, F1, and IoU, together with the pixel-level false positive rate to quantify spurious detections on regions without plumes. Inference time is measured on a single NVIDIA RTX 4090 with batch size $1$.
\paragraph{Implementation Details.}
We train FLAME with a curriculum that first aligns the physics-guided score with a matched-filter teacher and then transitions to the segmentation objective, and report mean and standard deviation over 3 random seeds. Full optimizer settings, training schedule, and loss configuration are provided in Appendix~\ref{app:impl}.

\begin{figure*}[t]
\centering
\setlength{\tabcolsep}{1pt}
\begin{tabular}{@{}ccccc@{}}
\scriptsize RGB & \scriptsize UNet+MAG1C-SAS & \scriptsize SegFormer (CU) & \scriptsize \flame{} (Ours) & \scriptsize GT \\[2pt]
\includegraphics[width=0.19\textwidth]{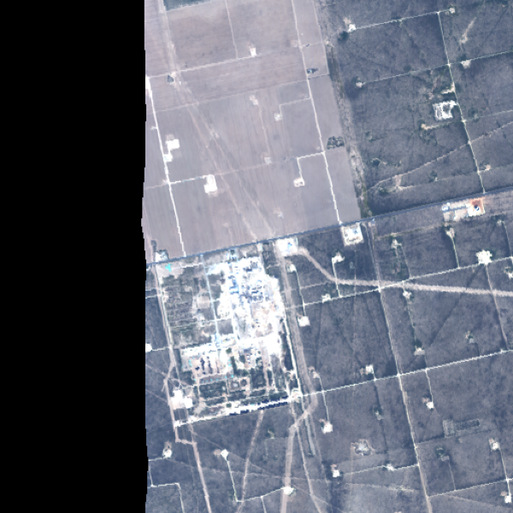} &
\includegraphics[width=0.19\textwidth]{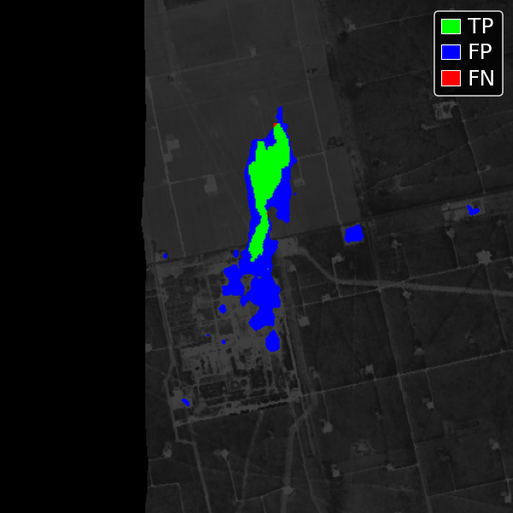} &
\includegraphics[width=0.19\textwidth]{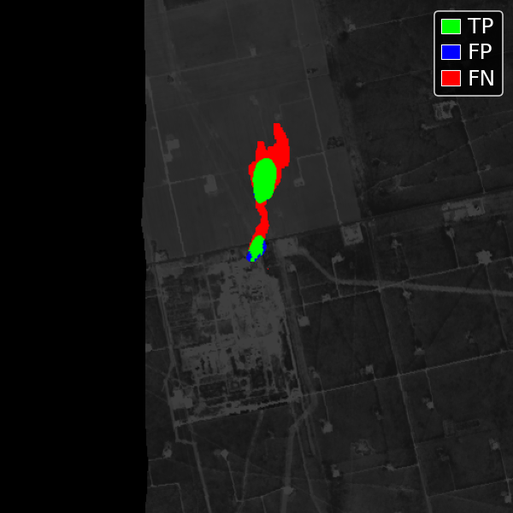} &
\includegraphics[width=0.19\textwidth]{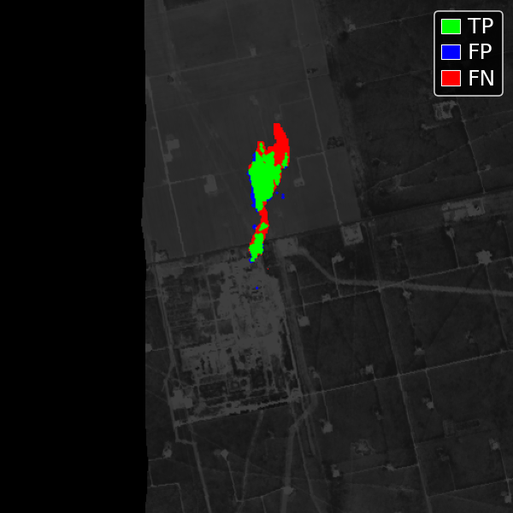} &
\includegraphics[width=0.19\textwidth]{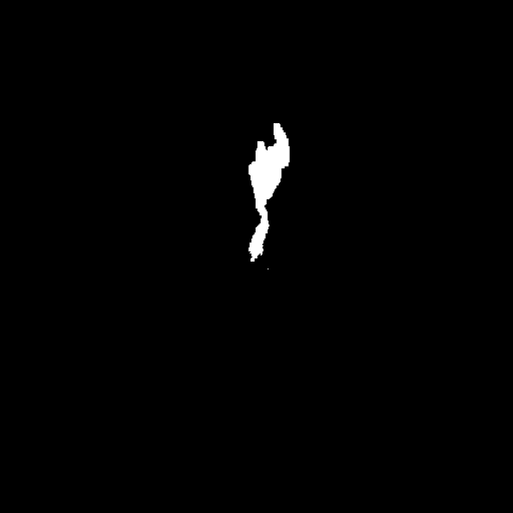} \\[2pt]
\includegraphics[width=0.19\textwidth]{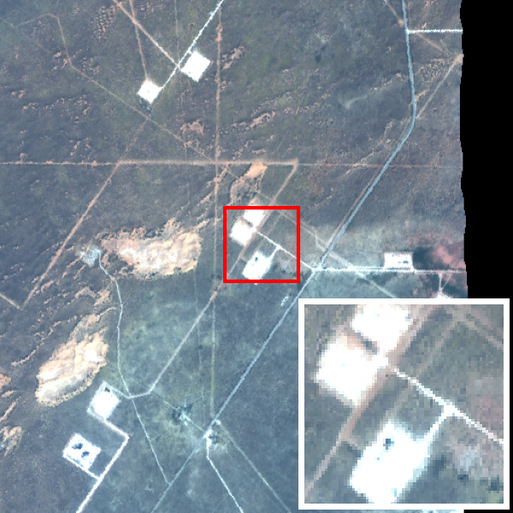} &
\includegraphics[width=0.19\textwidth]{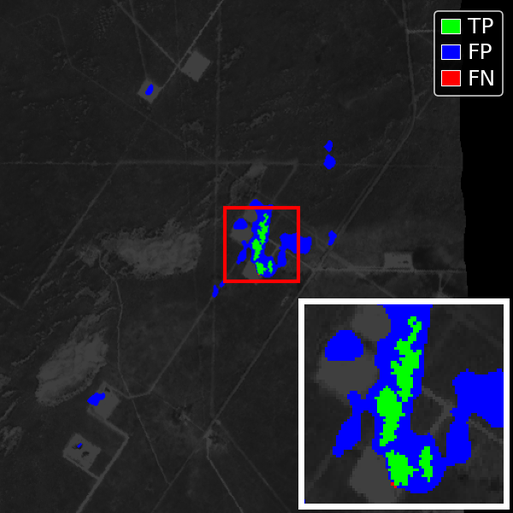} &
\includegraphics[width=0.19\textwidth]{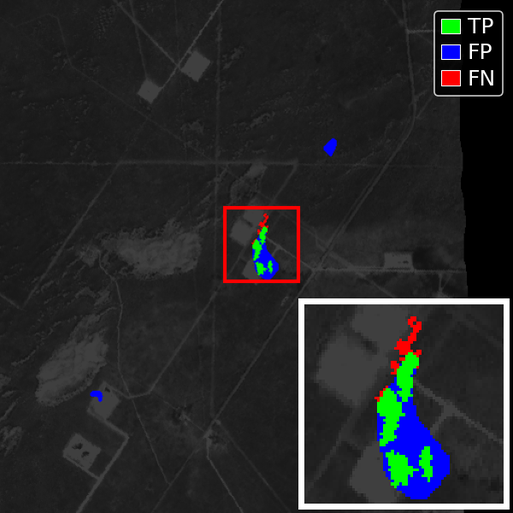} &
\includegraphics[width=0.19\textwidth]{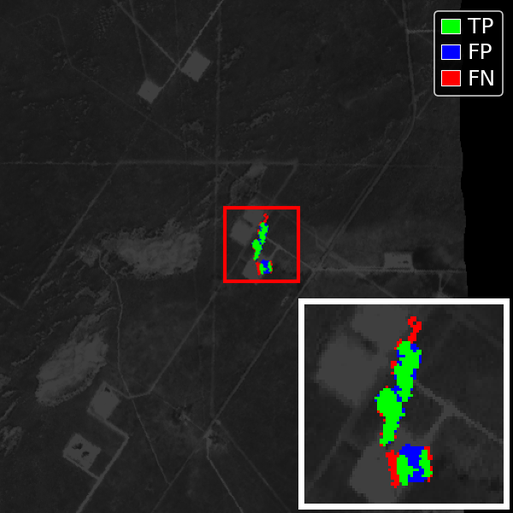} &
\includegraphics[width=0.19\textwidth]{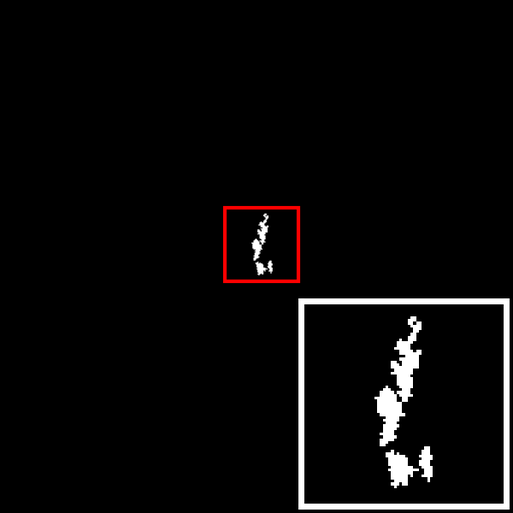} \\[2pt]
\includegraphics[width=0.19\textwidth]{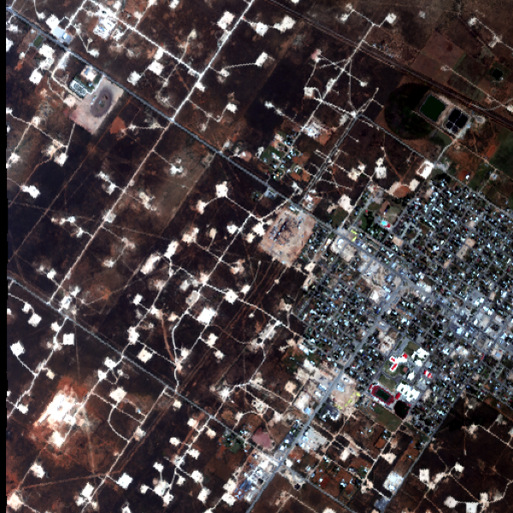} &
\includegraphics[width=0.19\textwidth]{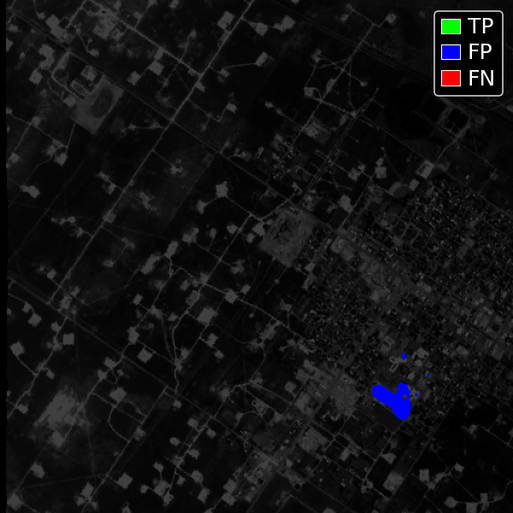} &
\includegraphics[width=0.19\textwidth]{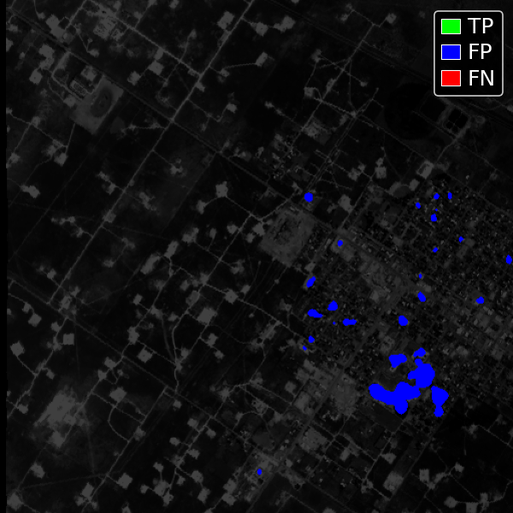} &
\includegraphics[width=0.19\textwidth]{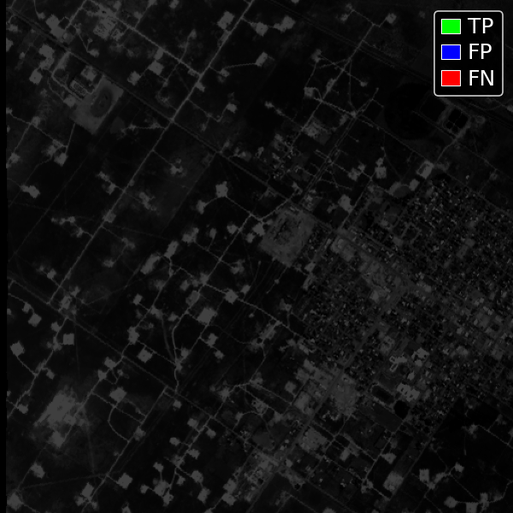} &
\includegraphics[width=0.19\textwidth]{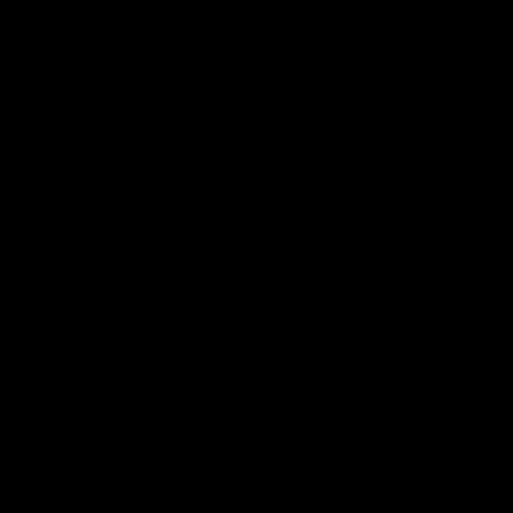} \\
\end{tabular}
\caption{\textbf{Qualitative Comparison on the STARCOP test set}. Three test tiles representing a strong-plume regime (top), a weak-plume regime (middle), and a plume-free regime (bottom), shown for UNet+MAG1C-SAS, SegFormer with the ConvUp decoder, and FLAME. Prediction panels overlay true positives in green, false positives in blue, and false negatives in red on a darkened RGB backdrop. The ground-truth column shows the binary plume mask.}
\label{fig:qualitative2}
\end{figure*}

\paragraph{Baselines.}
We compare against three categories of methods. Classical matched filters include CEM~\cite{kraut2005adaptive}, MF~\cite{manolakis2002detection,funk2002clustering}, ACE~\cite{chang2000constrained}, MAG1C~\cite{foote2020fast}, and MAG1C-SAS~\cite{herec2025optimizing}. Two-stage pipelines combine a matched filter with a segmentation network: the UNet~\cite{ronneberger2015u} variants paired with MAG1C products follow the HyperSTARCOP design introduced in~\cite{ruuvzivcka2023semantic}, while the LinkNet~\cite{chaurasia2017linknet} variants paired with MAG1C-SAS are taken from~\cite{herec2025optimizing}. End-to-end models process raw hyperspectral data without a matched filter, and we evaluate SegFormer~\cite{xie2021segformer} and EfficientViT~\cite{cai2023efficientvit} under the HyperspectralViTs adaptations proposed in~\cite{ruuvzivcka2025hyperspectralvits}, along with UNet~\cite{ronneberger2015u} baselines. Classical matched-filter scores are reproduced from~\cite{herec2025optimizing}, while all learned baselines are trained under the same protocol as FLAME.

\subsection{Comparison with Existing Methods}
\label{sec:fpr_analysis}
We compare FLAME against the classical, two-stage, and end-to-end baselines introduced in Section~\ref{sec:setup}.

\paragraph{Detection performance.}
FLAME achieves the highest F1 of 0.608 and IoU of 0.437 in Table~\ref{tab:main}, improving over the strongest two-stage pipeline UNet+MAG1C-tile by 13.1 F1 points while eliminating the matched-filter computation, and over the strongest end-to-end baseline SegFormer with the ConvUp decoder by 9.3 F1 points. Its precision of 0.651, the highest among all methods, indicates substantially cleaner masks than any learned alternative. FLAME's recall of 0.576 is lower than that of the two-stage pipelines, which exceed 0.84, but the high recall of those pipelines is achieved by predicting overly large plume masks rather than by recovering more true methane pixels, as the qualitative comparison in Figure~\ref{fig:qualitative2} makes visible.

\paragraph{False positive behavior.}
FLAME's pixel-level false positive rate of $8\times 10^{-4}$ is the lowest among learned methods, roughly threefold below the next best end-to-end model SegFormer with the ConvUp decoder at $2.3\times 10^{-3}$ and an order of magnitude below two-stage pipelines, whose rates span $4.8\times 10^{-3}$ to $6.7\times 10^{-3}$. This advantage is preserved across plume sizes, with the smallest degradation on tiles containing fewer than 100 plume pixels. The corresponding per-bin F1 in Figure~\ref{fig:plume_analysis_a} shows that FLAME also leads in F1 across all plume-size bins, so the low FPR reflects cleaner predictions rather than conservative thresholding.

\paragraph{Efficiency trade-off.}
FLAME runs in 6.2~ms per tile on an RTX 4090, competitive to the fastest end-to-end models and roughly 18 times faster than MAG1C-SAS. With 0.78M parameters, it is smaller than LinkNet's 0.85M and roughly an order of magnitude smaller than U-Net variants. A bubble chart visualization of the joint speed, accuracy, and size trade-off is provided in Appendix~\ref{app:fpr}.

\subsection{Qualitative Analysis}
\label{sec:qualitative}
Figure~\ref{fig:qualitative2} compares FLAME against the two-stage baseline UNet+MAG1C-SAS and the end-to-end baseline SegFormer on three tiles spanning a strong plume, a weak plume, and an urban scene without plumes.

In the strong-plume tile in the top row, all three methods localize the plume core, but the failure modes differ. UNet+MAG1C-SAS scatters false positives across the surrounding area, SegFormer under-detects portions of the plume itself, shown as red false negatives, and FLAME produces a tight mask that aligns with the plume boundary. 

The contrast widens in the weak-plume tile in the middle row, where the two baselines exhibit opposing failure modes. UNet+MAG1C-SAS spreads false positives across the surrounding region, while SegFormer misses much of the plume. The inset of the middle row makes this gap visible at higher resolution, where FLAME tracks the fine, fragmented structure of the plume that the baselines either oversmooth into surrounding clutter or fail to detect. This pattern is consistent with FLAME's pixel-wise modeling of the background and noise statistics. Classical pipelines share a single background and covariance across the tile, so pixels whose true background deviates from the tile mean leave residuals that are easily mistaken for weak methane signals. FLAME instead predicts $\hat\ell^B_i$ and $\hat w_i$ per pixel, so background variations are absorbed into the background model itself and no longer leak into the score for weak plumes. The log-domain formulation further removes the first-order Taylor approximation of the Beer--Lambert law that classical matched filters rely on, ensuring that strong plumes are also treated under the same exact signal model as weak ones. The urban tile without plumes in the bottom row confirms this. Both baselines generate spatially extended false detections on building rooftops and other man-made structures whose spectral signatures are easily confused with methane absorption, while FLAME suppresses the response and yields a clean prediction.


These observations explain the F1 gain and pixel-FPR reduction reported in Table~\ref{tab:main}, as FLAME suppresses the excessive false positives of two-stage pipelines while avoiding the incomplete plume masks often produced by end-to-end models. Further qualitative results, including per-pixel probability maps for every method, are provided in Appendix~\ref{app:additional_qual}.

\begin{table}[t]
\centering
\caption{\textbf{Effect of the physics-guided score}. The w/o physics-guide row ablates the score layer, with backbone features passed directly to the segmentation head. The w/ physics-guide row corresponds to the full FLAME model. All metrics are pixel-level on the STARCOP test set from a single training seed.}
\label{tab:ablation}
\footnotesize
\begin{tabular}{@{}l c c c c@{}}
\toprule
Variant & F1 & IoU & Precision & Recall \\
\midrule
w/o physics-guide & 0.409 & 0.257 & 0.437 & 0.384 \\
w/\phantom{o} physics-guide & \textbf{0.603} & \textbf{0.432} & \textbf{0.596} & \textbf{0.611} \\
\bottomrule
\end{tabular}%
\end{table}

\subsection{Effect of the Physics-Guided Score}
The physics-guided score in Eq.~\eqref{eq:flame_score} is the core inductive bias of FLAME. To isolate its contribution, we compare the full model against an ablation that removes the score layer entirely, so the backbone features feed directly into the segmentation head without the physics signal $\hat\alpha_i$.

Table~\ref{tab:ablation} shows that adding the physics-guided score raises F1 from 0.409 to 0.603 and IoU from 0.257 to 0.432. The improvement is larger than the gap between FLAME and any learned baseline in Table~\ref{tab:main}, indicating that the parameter-free log-domain inner product, rather than the backbone capacity alone, is what drives FLAME's detection performance. Appendix~\ref{app:physics_guided} further decomposes the score into its learned components and analyzes how it is shaped by the training curriculum and refined by the segmentation head.

\subsection{Onboard Satellite Deployment}
\label{sec:edge}
We profile FLAME on three NVIDIA Jetson platforms representative of the hardware currently considered for onboard satellite deployment: Jetson Orin NX 16GB, Jetson AGX Orin 64GB, and Jetson AGX Thor. Each platform runs FLAME with identical trained weights on a fixed set of STARCOP tiles in FP32 precision, with the full measurement protocol described in Appendix~\ref{sec:edge_hardware}.

Table~\ref{tab:jetson} reports the measurements. Jetson Orin NX processes one $512 \times 512$ tile in 225.5~ms at 9.5~W, within the 15~W cap of this module. Jetson AGX Orin reduces the latency to 117.6~ms at 7.0~W of actual draw, less than a quarter of the 30~W cap, so the workload leaves substantial compute headroom on this class of module. Jetson AGX Thor reaches 36.4~ms at 59.5~W, roughly half of the 120~W cap. All three platforms thermally equilibrate between 41.5 and 51.8~$^\circ$C, indicating sustained operation rather than transient peak performance. These power envelopes fall within the budgets typical of small-satellite payload computers, making FLAME directly deployable on the onboard hardware class that current and upcoming hyperspectral missions are expected to carry.
\begin{table}[t]
\centering
\caption{\textbf{Onboard inference on Jetson platforms}. Time is reported per tile, and TDP is the catalog power cap of each module.}
\label{tab:jetson}
\footnotesize
\setlength{\tabcolsep}{2pt}
\begin{tabular}{@{}l rrrr@{}}
\toprule
Platform & Time (ms) & Power (W) & Temp ($^\circ$C) & TDP (W) \\
\midrule
Jetson Orin NX   & 225.5 & 9.5  & 51.8 & 15  \\
Jetson AGX Orin  & 117.6 & 7.0  & 47.5 & 30  \\
Jetson AGX Thor  & 36.4  & 59.5 & 41.5 & 120 \\
\bottomrule
\end{tabular}
\end{table}

\section{Conclusion}

In this work, we proposed FLAME, a physics-guided neural operator for onboard methane plume detection in hyperspectral imagery. FLAME preserves the log-domain matched-filter structure of methane retrieval while replacing fragile tile-level statistics with pixel-wise log-background and spectral-weight estimates learned by a compact Fourier-based neural operator. By embedding the Beer--Lambert absorption prior directly into a parameter-free score layer, FLAME combines the physical reliability of classical matched filters with the speed and adaptability of learned segmentation models. Experiments on the STARCOP benchmark show that this design improves detection accuracy, substantially reduces false positives, and remains lightweight enough for onboard satellite deployment. These results suggest that physics-guided neural operators provide a practical path toward accurate, efficient, and deployable methane monitoring from future hyperspectral satellite missions.

\paragraph{Impact Statements}
This work targets faster onboard detection of methane point sources from satellite hyperspectral imagery. Since methane is a potent short-lived greenhouse gas driven by a few emitters, locating these sources cuts the delay from emission to response from days to minutes.



\bibliography{icml2026}
\bibliographystyle{icml2026}

\newpage
\appendix
\onecolumn

\section{FLAME Generalizes the Log-Domain Matched Filter}
\label{app:proposition}
In Section~\ref{sec:flame} we claimed that FLAME strictly contains the classical log-domain matched filter as a special case of its hypothesis class. This appendix formalizes the claim and exhibits an explicit parameter setting that realizes the reduction.
\begin{proposition}[FLAME generalizes the log-domain matched filter]
\label{prop:flame_generalizes_mf}
Fix $\bar\ell \in \mathbb{R}^p$ and a vector $\omega^2$ with strictly positive entries, and define the diagonal log-noise covariance $C = \mathrm{diag}(\omega^2)$. There exists a score-level parameter setting $\theta_\alpha = (\theta_\Phi, \theta_{bg}, \theta_{sw})$ such that, for every log-radiance tile $\ell$ and every pixel $i$,
\begin{equation}
    \hat\alpha_i(\ell;\theta_\alpha) \;=\; (\ell_i - \bar\ell)^\top C^{-1} s,
    \label{eq:app_reduction}
\end{equation}
which is the detection-relevant inner product in the numerator of Eq.~\eqref{eq:log_mf_estimator} under a tile-shared background $\ell_i^{B} \equiv \bar\ell$ and a pixel-independent diagonal covariance $C_i \equiv C$.
\end{proposition}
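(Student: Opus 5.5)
The plan is a direct construction: I will choose the parameters so that the backbone output is irrelevant to both heads, and the heads collapse to constants. The score in Eq.~\eqref{eq:flame_score} is $\hat\alpha_i = (\ell_i - \hat\ell_i^{\mathcal{B}})^\top(\hat w_i \odot s)$. So it suffices to have $\hat\ell_i^{\mathcal{B}} = \bar\ell$ and $\hat w_i = \omega^{-2}$ (entrywise reciprocal) for every pixel $i$ and every tile $\ell$. Since $C^{-1} = \mathrm{diag}(\omega^{-2})$, we have $C^{-1}s = \omega^{-2}\odot s$, and the identity $(\ell_i-\bar\ell)^\top(\omega^{-2}\odot s) = (\ell_i-\bar\ell)^\top C^{-1}s$ then gives Eq.~\eqref{eq:app_reduction}.

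\textbf{Background head.} Recall $\hat\ell_i^{\mathcal{B}} = W_{bg}\mathbf{z}_i + b_{bg}$. I set $W_{bg} = 0$ and $b_{bg} = \bar\ell$. Then $\hat\ell_i^{\mathcal{B}} = \bar\ell$ regardless of $\mathbf{z}_i$, so $\theta_\Phi$ may be arbitrary. For definiteness I would take any admissible setting, e.g.\ all backbone weights zero, although no property of $\Phi$ is used.

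\textbf{Spectral weight head.} Recall $\hat w_i = \mathrm{softplus}(W_{sw}\mathbf{z}_i + b_{sw})$. I set $W_{sw}=0$ and choose $b_{sw}$ entrywise as $[b_{sw}]_j = \mathrm{softplus}^{-1}(\omega_j^{-2}) = \log\!\big(\exp(\omega_j^{-2}) - 1\big)$. This is the only step needing a check, and the one I would flag as the (mild) obstacle. Softplus $x\mapsto\log(1+e^x)$ is a strictly increasing bijection $\mathbb{R}\to(0,\infty)$. Since $\omega_j^2>0$ by hypothesis, each target $\omega_j^{-2}$ lies in $(0,\infty)$, so the preimage exists and is finite. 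This is precisely why the statement requires strictly positive entries.

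\textbf{Conclusion.} Substituting these heads into Eq.~\eqref{eq:flame_score} yields Eq.~\eqref{eq:app_reduction} for all $\ell$ and $i$. The result is the unnormalized score; if the fixed normalization by $\tau$ and clipping convention is in force, I would state that the identity holds for the raw score layer output, or equivalently rescale $b_{sw}$ targets by $\tau$ and note that clipping is a fixed monotone postprocessing. I would remark that the segmentation parameters $\theta_{sg}$ play no role, since the claim is about the score only. I would also remark that strict containment follows because nonzero $W_{bg}$ or $W_{sw}$ give pixel-varying backgrounds and weights not representable by any fixed $(\bar\ell, C)$.
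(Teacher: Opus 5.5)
Your proposal is correct and uses the same construction as the paper: $W_{bg}=0$, $b_{bg}=\bar\ell$, $W_{sw}=0$, $b_{sw}=\log(\exp(1/\omega^2)-\mathbf{1})$. This is justified because softplus is a bijection from $\mathbb{R}$ onto $(0,\infty)$, $\theta_\Phi$ may be arbitrary, and substituting into Eq.~\eqref{eq:flame_score} gives the result. Like the paper, you establish the identity for the raw score-layer output; $\tau$ can be absorbed into $b_{sw}$, but the clipping to $[0,\tau_{\max}]$ does not preserve the equality because it zeroes negative inner products, so the pre-clipping reading is the right one.
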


\begin{proof}
We construct an explicit parameter setting that realizes the reduction. Set
\begin{equation}
    W_{bg} = 0, \quad b_{bg} = \bar\ell, \quad W_{sw} = 0, \quad b_{sw} = \log\!\big(\exp(1/\omega^2) - \mathbf{1}\big),
    \label{eq:app_construction}
\end{equation}
where all operations on $\omega^2$ are element-wise. The choice of $b_{sw}$ is well-defined because softplus is a continuous, strictly increasing bijection from $\mathbb{R}$ onto $(0,\infty)$ and $\omega^2$ has strictly positive entries. This setting yields
\begin{equation}
    \hat\ell_i^B = \bar\ell, \qquad \hat w_i = 1/\omega^2,
    \label{eq:app_heads}
\end{equation}
for every pixel. Any $\theta_\Phi$ is admissible because both heads are independent of $z_i$. Substituting Eq.~\eqref{eq:app_heads} into Eq.~\eqref{eq:flame_score} gives
\begin{equation}
    \hat\alpha_i(\ell;\theta_\alpha) = (\ell_i - \bar\ell)^\top \big((1/\omega^2) \odot s\big),
    \label{eq:app_substitution}
\end{equation}
and since $C^{-1} s = (1/\omega^2) \odot s$, this is Eq.~\eqref{eq:app_reduction}.
\end{proof}

\paragraph{Strict containment.}
The inclusion is strict because any $W_{bg} \neq 0$ produces a log-background that varies across pixels and cannot be reproduced by any tile-shared $\bar\ell$, and any $W_{sw} \neq 0$ yields a pixel-dependent spectral weight that no single diagonal $C$ can match. Together with the single-pass evaluation of the score layer in Eq.~\eqref{eq:flame_score}, these two extensions address the three limitations identified in Section~\ref{sec:mf_prelim} in a one-to-one manner, with a learned $\hat\ell_i^{B}$ replacing the tile-shared background, a learned $\hat w_i$ replacing the pixel-independent covariance, and a single forward pass replacing the iterative refinement of Eq.~\eqref{eq:mag1c_iter}.
\paragraph{Scalar normalization and exactness.}
The full estimator of Eq.~\eqref{eq:log_mf_estimator} differs from Eq.~\eqref{eq:app_reduction} by the data-independent scalar $1/(s^\top C^{-1} s)$, which rescales the detection map uniformly and does not affect which pixels are detected. In FLAME this global rescaling is applied through the fixed $\tau$-normalization of Section~\ref{sec:heads}. Crucially, Proposition~\ref{prop:flame_generalizes_mf} rests on the exact identity in Eq.~\eqref{eq:log_beer_lambert} rather than on the first-order linearization in Eq.~\eqref{eq:radiance_linearization}, so the matched-filter behavior FLAME inherits at $\theta_\alpha$ remains unbiased across the full range of plume intensities, including the strong ultra-emitters for which the radiance-domain linearization is known to underestimate $\alpha_i$.

\clearpage
\section{Dataset and Implementation Details}
\label{app:impl}

\paragraph{Dataset.}
We followed the train/test partition released with STARCOP~\cite{ruuvzivcka2023semantic}, using $3{,}425$ tiles for training and $342$ tiles for held-out evaluation, with no separate validation split. Plume tiles and tiles without plumes are nearly balanced at the tile level, but the positive class is sparse at the pixel level, with $0.26\%$ of training pixels carrying the plume label, rising to $0.52\%$ within tiles that contain plumes. The test split follows the same regime. All tiles are stored as $512\times512$ rasters, and AVIRIS-NG scene crops smaller than this footprint are zero-padded, with the padded margins excluded from training and evaluation through a per-pixel valid mask. \autoref{tab:dataset_stats} summarizes the tile- and pixel-level class balance of each split.

\begin{table}[h]
\centering
\caption{Tile- and pixel-level statistics for the STARCOP partition used in this work. The Plume \% column reports the fraction of pixels labelled positive among pixels that are not padded, and the value in parentheses restricts this fraction to tiles that contain plumes.}
\label{tab:dataset_stats}
\begin{tabular}{lcccc}
\toprule
Split & Tiles & Plume tiles & No-plume tiles & Plume \% (within plume tiles) \\
\midrule
Train & 3{,}425 & 1{,}712 & 1{,}713 & 0.26\% (0.52\%) \\
Test  & 342    & 166    & 176    & 0.21\% (0.44\%) \\
\bottomrule
\end{tabular}
\end{table}

\paragraph{Model architecture.}
The FLAME backbone is a U-FNO~\cite{wen2022u} with hidden width $d=14$ and $L_F=3$ FNO blocks followed by $L_U=3$ U-FNO blocks. Each spectral mixing layer retains $m_1=m_2=12$ Fourier modes per spatial axis. \autoref{tab:flame_arch} summarizes these settings.

\begin{table}[h]
\centering
\caption{Architectural hyperparameters of FLAME.}
\label{tab:flame_arch}
\begin{tabular}{lc}
\toprule
Hyperparameter & Value \\
\midrule
Backbone hidden width $d$ & 14 \\
Fourier modes $m_1, m_2$ & 12, 12 \\
FNO blocks $L_F$ & 3 \\
U-FNO blocks $L_U$ & 3 \\
\bottomrule
\end{tabular}
\end{table}

\paragraph{Training.}
We trained FLAME for 50 epochs with AdamW~\cite{loshchilov2017decoupled}, using initial learning rate $\eta_0 = 2\times 10^{-3}$ and weight decay $10^{-4}$, under a cosine schedule decaying to $\eta_{\min} = 10^{-6}$, with gradient clipping to $\ell_2$-norm $1.0$ and mixed-precision training. The global batch size was 24 and inputs were processed at the native $512\times512$ resolution. We applied random horizontal and vertical flips and uniform $\{0,90,180,270\}^{\circ}$ rotations jointly to the input maps, the binary label, and the cached MAG1C-SAS teacher map, and the valid mask $m$ was broadcast against every loss term and metric counter so that padded margins did not contribute to gradient or evaluation. The BCE component of the segmentation loss is reweighted per minibatch with $\beta = \min(N_{-}/N_{+},\, 50)$, where $N_{+}$ and $N_{-}$ are the positive and negative pixel counts under the valid mask, and the Dice term is masked by $m$ before summing. We set the score normalization constant to $\tau = 1750$. For FLAME and every learned baseline we used a fixed decision rule throughout training and inference, $\sigma(\hat{y}) > 0.5$ followed by a $3\times3$ cross-shaped morphological opening, without per-model threshold tuning. All learned models were trained on three NVIDIA RTX 4090 GPUs.

\paragraph{Baselines.}
We compare against three categories of methods. The classical matched filters use the canonical method-specific thresholds of~\cite{herec2025optimizing}, with $0.004$ for CEM~\cite{kraut2005adaptive} and MF~\cite{manolakis2002detection,funk2002clustering}, $0.03$ for ACE~\cite{chang2000constrained}, and $300$ for both MAG1C~\cite{foote2020fast} and MAG1C-SAS~\cite{herec2025optimizing}. The two-stage pipelines, namely UNet paired with MAG1C and LinkNet paired with MAG1C-SAS, retain the HyperSTARCOP normalization protocol of~\cite{herec2025optimizing}. The end-to-end baselines are UNet with a MobileNet-V2 encoder~\cite{sandler2018mobilenetv2}, SegFormer-B0~\cite{xie2021segformer}, and EfficientViT-B1~\cite{cai2023efficientvit}. All learned baselines share FLAME's training data, augmentation pipeline, valid-pixel mask, decision threshold, and three-seed protocol, with each retaining the optimizer recipe and learning-rate scale of its underlying architecture. Table~\ref{tab:baseline_hp} lists the resulting settings.

\begin{table}[h]
\centering
\caption{Hyperparameters of the end-to-end learned baselines trained in our environment. All models are initialized randomly and share FLAME's training recipe except for the initial learning rate.}
\label{tab:baseline_hp}
\begin{tabular}{lcccc}
\toprule
Method & Backbone & Initial LR & Epochs & Batch \\
\midrule
UNet                  & MobileNet-V2          & $2.0\times10^{-3}$ & 50 & 24 \\
SegFormer             & MiT-B0                & $1.2\times10^{-4}$ & 50 & 24 \\
EfficientViT          & EfficientViT-B1       & $1.0\times10^{-3}$ & 50 & 24 \\
\midrule
FLAME (ours)          & UFNO + dual heads     & $2.0\times10^{-3}$ & 50 & 24 \\
\bottomrule
\end{tabular}
\end{table}

\paragraph{Identical evaluation.}
Every row of Table~\ref{tab:main} is evaluated on the same $342$-tile STARCOP test partition described in Table~\ref{tab:dataset_stats}, with the same pixel-level metric definitions for recall, precision, F1, IoU, and FPR computed against all valid background pixels, and with the same per-row threshold rule, namely the canonical thresholds of~\cite{herec2025optimizing} for classical filters and the $\sigma>0.5$ rule followed by morphological opening for segmentation networks. As a self-consistency check, we verified that our local re-runs of the classical filters reproduce the reference F1 values of~\cite{herec2025optimizing} within $\pm 0.005$.

\section{Neural Operator Backbone Comparison}
\label{app:backbone}
\begin{table}[ht]
\centering
\caption{Neural operator backbone comparison within the FLAME physics-head framework. The physics heads, segmentation head, and training schedule are kept fixed, so this comparison isolates the effect of the operator backbone from the physics-guided formulation. Pixel FPR is reported as $\times 10^{-4}$.}
\label{tab:ablation2}
\small
\begin{tabular}{@{}l c c c c c c r@{}}
\toprule
Variant & F1 & IoU & Precision & Recall & Pixel FPR & Params (M) & Time (ms) \\
\midrule
\flame{} (UFNO) & \textbf{0.603} & \textbf{0.432} & 0.596 & \underline{0.611} & 10.0 & \underline{0.78} & \underline{6.2} \\
FNO & 0.531 & 0.361 & \underline{0.604} & 0.474 & \underline{7.0} & 0.79 & \textbf{5.6} \\
F-FNO & 0.356 & 0.216 & 0.371 & 0.342 & 14.0 & 2.39 & 18.1 \\
Tucker-FNO & \underline{0.571} & \underline{0.399} & 0.506 & \textbf{0.654} & 15.0 & 2.16 & 16.9 \\
UNO & 0.263 & 0.151 & 0.240 & 0.291 & 22.0 & 2.54 & 8.2 \\
WNO & 0.517 & 0.349 & \textbf{0.629} & 0.440 & \textbf{6.0} & \textbf{0.77} & 26.7 \\
\bottomrule
\end{tabular}
\end{table}

\paragraph{Backbone comparison.}
We compare different neural-operator backbones while keeping the physics heads, segmentation head, and training schedule fixed. As shown in Table~\ref{tab:ablation2}, UFNO leads in F1 and IoU while remaining among the most parameter-efficient and fastest options. WNO matches its parameter count and achieves the highest precision and lowest pixel FPR, but its inference time is roughly four times higher. FNO is the fastest at the cost of seven F1 points, and the larger backbones, namely F-FNO, Tucker-FNO, and UNO, increase parameter count and latency without a corresponding improvement in detection performance. UFNO therefore offers the best joint trade-off across detection accuracy, parameter count, and inference speed, which is why we adopt it as the FLAME backbone.

\paragraph{Backbone configurations.}
Table~\ref{tab:backbone_arch} lists the architectural hyperparameters of each backbone evaluated in Table~\ref{tab:ablation2}. All variants share the FLAME physics heads, segmentation head, training schedule, and input pipeline, so the metrics in Table~\ref{tab:ablation2} isolate the effect of the backbone design. For FLAME (UFNO), the value $3 + 3$ in the Layers column denotes three FNO blocks followed by three U-FNO blocks.

\begin{table}[ht]
\centering
\caption{Architectural configuration of each neural operator backbone in Table~\ref{tab:ablation2}.}
\label{tab:backbone_arch}
\small
\begin{tabular}{@{}l c c l l@{}}
\toprule
Variant & Layers & Width & Modes / sub-bands & Notes \\
\midrule
FLAME (UFNO) & $3 + 3$ & 14 & $m_1 = m_2 = 12$ & SE block and mini-UNet local path \\
FNO & 4 & 14 & $m_1 = m_2 = 32$ & canonical formulation \\
F-FNO & 8 & 48 & $m = 32$ per axis & factorized 1D spectral convolution \\
Tucker-FNO & 4 & rank 64 & $m = 32$ per axis & Tucker decomposition \\
UNO & 7 & $w = 8, f = 0.75$ & $m = [24, 14, 8, 8, 8, 14, 22]$ & U-shape channel schedule \\
WNO & 3 & 14 & db6, level 6 & DWT sub-band weighting \\
\bottomrule
\end{tabular}
\end{table}

\clearpage

\section{Additional Quantitative Results}
\label{app:fpr}

\begin{figure}[ht]
\centering
\begin{minipage}[t]{0.48\columnwidth}
    \centering
    \includegraphics[width=\linewidth]{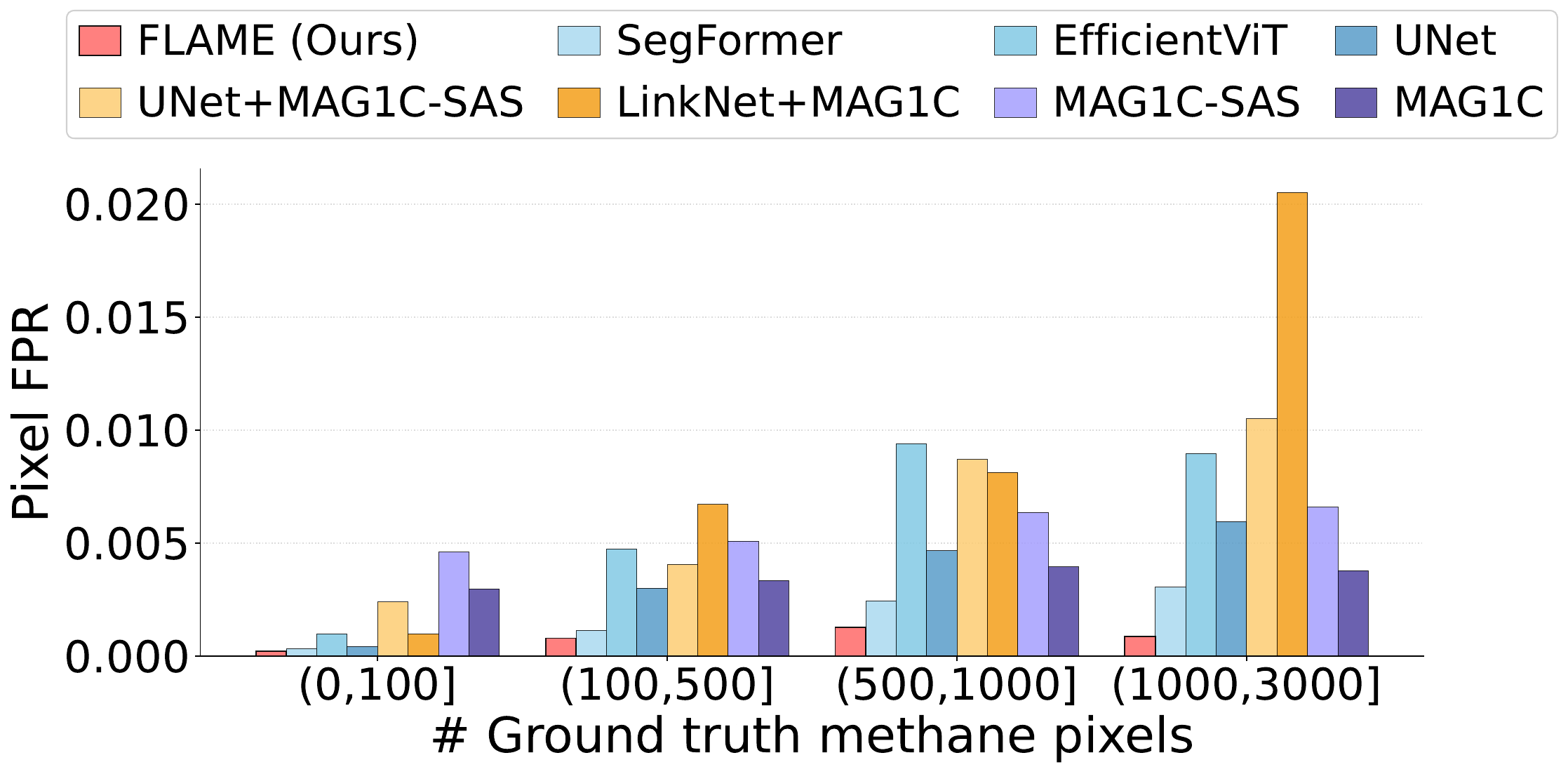}
    \caption{Pixel-level false positive rate across test tiles grouped by the number of ground-truth methane pixels.}
    \label{fig:plume_analysis_b}
\end{minipage}\hfill
\begin{minipage}[t]{0.48\columnwidth}
    \centering
    \includegraphics[width=\linewidth]{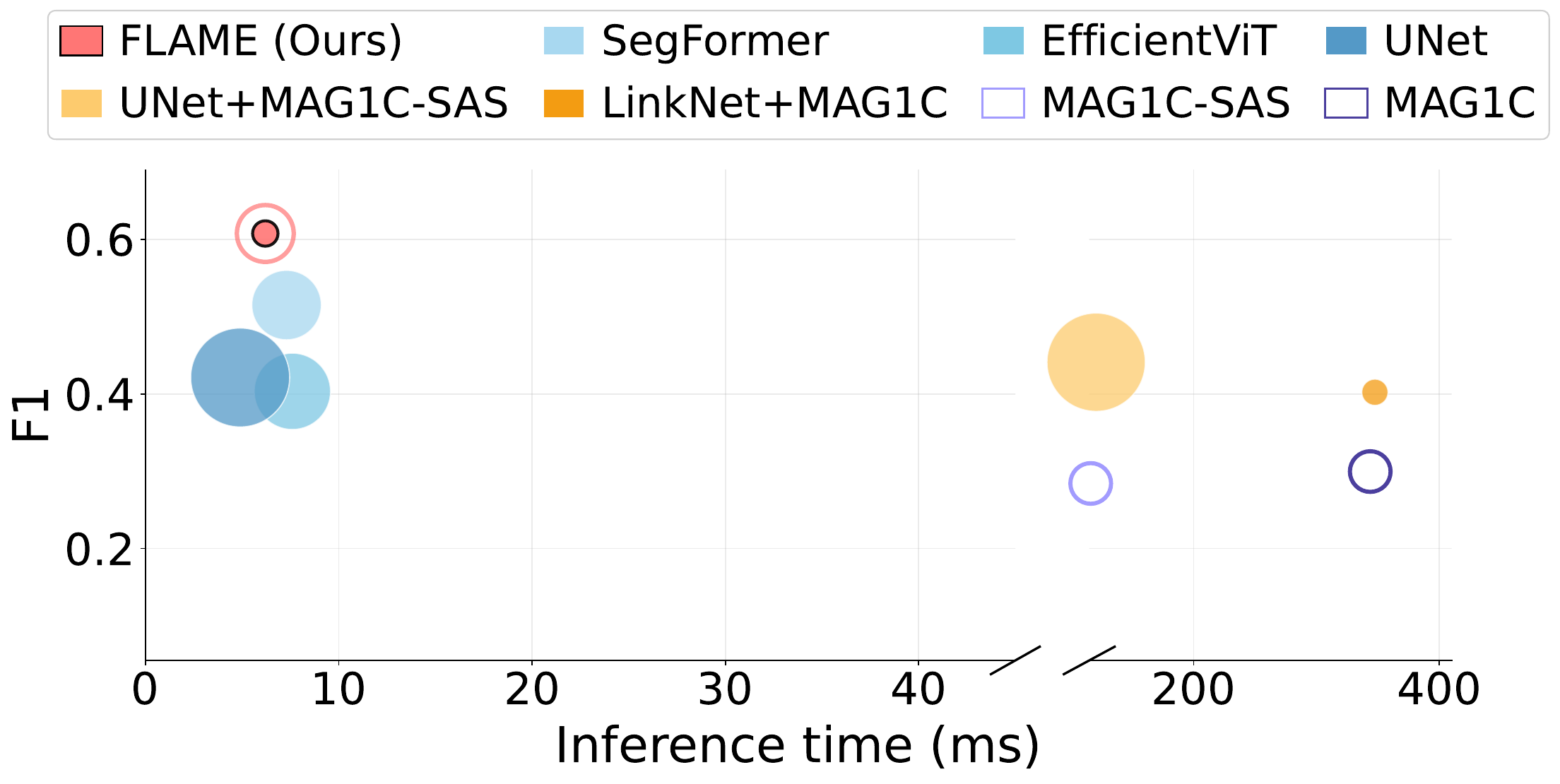}
    \caption{Comparison of speed, accuracy, and size across the evaluated methods. Marker size indicates the number of trainable parameters.}
    \label{fig:bubble}
\end{minipage}
\end{figure}

\paragraph{False-positive behavior across plume sizes.}
The plume-size analysis in the main text reports F1 scores across bins defined by the number of ground-truth methane pixels. Here we provide the complementary false-positive analysis. This analysis is important because methane plume detection is highly imbalanced at the pixel level, and an apparent gain in F1 can be caused by overly large predicted masks that improve recall while increasing false alarms over background regions. Figure~\ref{fig:plume_analysis_b} reports the pixel-level false positive rate across the same plume-size bins used in the main text. FLAME maintains a low false-positive rate across plume regimes, including larger plume cases where several segmentation baselines tend to over-extend the plume mask into surrounding background. This behavior indicates that the physics-guided score improves plume localization without simply expanding predictions around candidate plume regions. The result also supports the qualitative observations in Appendix~\ref{app:additional_qual}, where FLAME produces more compact predictions around plume boundaries and fewer activations on scenes without plumes.

\paragraph{Speed, accuracy, and size trade-off.}
Figure~\ref{fig:bubble} compares the evaluated methods in terms of inference latency, detection accuracy, and parameter count. This analysis complements the main hardware profiling results by showing where each method lies in the broader accuracy-efficiency landscape. Classical matched-filter methods have no trainable parameters but require expensive statistical estimation, while two-stage pipelines inherit the cost of their methane enhancement product. End-to-end learned models are fast, but they do not explicitly preserve the methane absorption prior. FLAME occupies a favorable trade-off region by combining single-pass inference with a compact physics-guided architecture.

\clearpage

\section{Onboard Satellite Deployment Details}
\label{sec:edge_hardware}
This appendix expands on the onboard satellite deployment results in Section~\ref{sec:edge}.
\paragraph{Hardware.}
\label{sec:edge_hardware_config}
Table~\ref{tab:jetson_hw} reports the configuration of the three NVIDIA Jetson modules used in Section~\ref{sec:edge}. The three modules span the compute and power tiers that current and next-generation onboard satellite payload computers are expected to provide. Jetson Orin NX 16GB represents the low-power tier suitable for cubesat-class platforms, Jetson AGX Orin 64GB represents the production-grade mid-range tier suitable for current onboard inference, and Jetson AGX Thor represents the next-generation high-performance tier. The TDP cap column reports the configured power budget under which the latency, power, and temperature measurements in Table~\ref{tab:jetson} were taken.

\begin{table}[h]
\centering
\caption{Hardware configuration of the NVIDIA Jetson platforms used in Section~\ref{sec:edge}.}
\label{tab:jetson_hw}
\footnotesize
\setlength{\tabcolsep}{4pt}
\begin{tabular}{@{}l l l r r@{}}
\toprule
Platform & CPU & GPU & Memory & TDP cap \\
\midrule
Jetson Orin NX   & $8\times$ A78AE @ 2.0 GHz           & Ampere    & 16 GB  & 15 W  \\
Jetson AGX Orin  & $12\times$ A78AE @ 2.2 GHz          & Ampere    & 64 GB  & 30 W \\
Jetson AGX Thor       & $14\times$ Neoverse-V3AE @ 2.6 GHz  & Blackwell & 128 GB & 120 W\\
\bottomrule
\end{tabular}
\end{table}

\paragraph{Measurement protocol.}
\label{sec:edge_hardware_protocol}
All measurements use the same trained FLAME weights and a fixed batch of five STARCOP test tiles. For latency, we issue 50 untimed warm-up forwards and then 200 timed repetitions over the five-tile set per platform, and report the mean per-tile wall-clock time. For power and temperature, on-device sensors are sampled every 100~ms throughout the timed window, and we report mean and peak values per power rail and thermal zone. The values in Table~\ref{tab:jetson} correspond to the total module power and to the GPU thermal zone. Each platform's TDP cap is set to the value listed in Table~\ref{tab:jetson_hw} before profiling.


\section{Limitations and Future Work}
\label{app:limitations}

\paragraph{Single-dataset evaluation.}
All results are reported on the STARCOP benchmark, following the evaluation protocol of MAG1C-SAS~\cite{herec2025optimizing}. The physics-guided score in Eq.~\eqref{eq:flame_score} depends only on the methane absorption spectrum and the log-radiance field, both of which are sensor-specific but conceptually transferable, so we expect the architecture to adapt to other SWIR spectrometers given matched training data. 

\paragraph{Absence of in-orbit validation.}
The onboard satellite deployment characterization in Section~\ref{sec:edge} is performed on NVIDIA Jetson platforms under laboratory conditions. Ground-based profiling does not capture the environmental stressors encountered in low Earth orbit, including total ionizing dose, single event effects, and thermal cycling induced by repeated eclipse transitions. The vacuum environment also eliminates atmospheric convection, so in-orbit heat dissipation must rely entirely on conduction to the spacecraft thermal control system and radiative emission to deep space, which can shift the steady-state operating temperatures reported in Table~\ref{tab:jetson}. Radiation-induced bit flips can degrade neural network inference in ways that depend on weight encoding and arithmetic precision, and the FP32 PyTorch configuration we profile has not been evaluated under such conditions. Validation on radiation-tolerant payload computers, and ultimately an in-orbit demonstration, is left for future work.

\clearpage

\section{Analyses of the Physics-Guided Score}
\label{app:physics_guided}

\begin{figure*}[ht]
\centering
\includegraphics[width=0.70\textwidth]{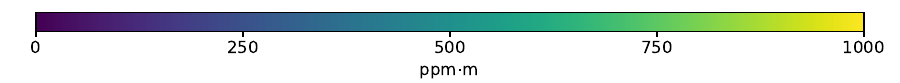}
\vspace{2pt}
\setlength{\tabcolsep}{1pt}
\renewcommand{\arraystretch}{0.9}
\begin{tabular}{ccccc}
\scriptsize RGB & \scriptsize MAG1C-SAS & \scriptsize \flame{} (Phase 1) & \scriptsize \flame{} (Phase 2) & \scriptsize GT \\[2pt]
\includegraphics[width=0.1800\textwidth]{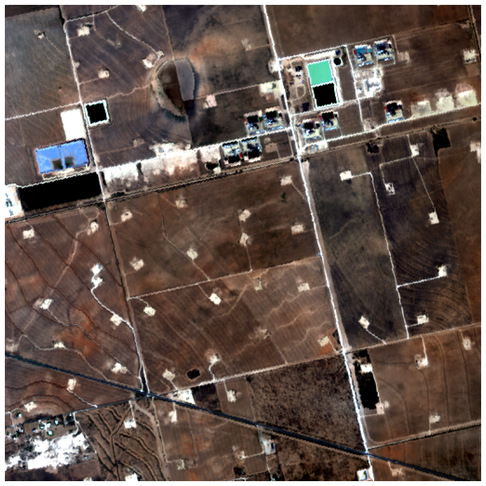} &
\includegraphics[width=0.1800\textwidth]{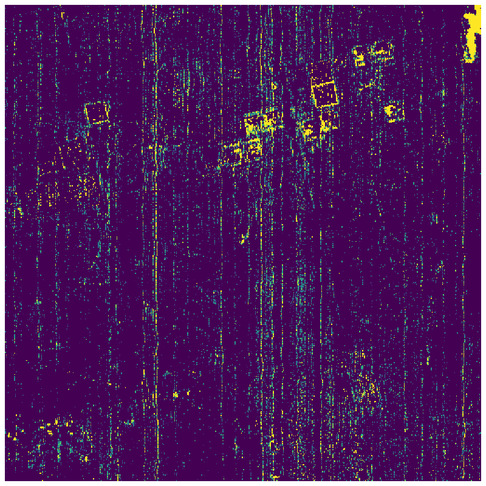} &
\includegraphics[width=0.1800\textwidth]{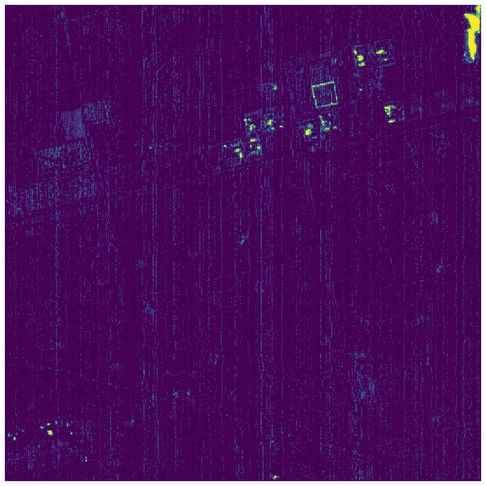} &
\includegraphics[width=0.1800\textwidth]{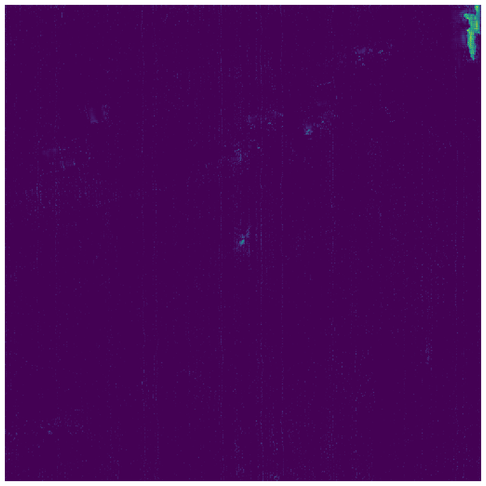} &
\includegraphics[width=0.1800\textwidth]{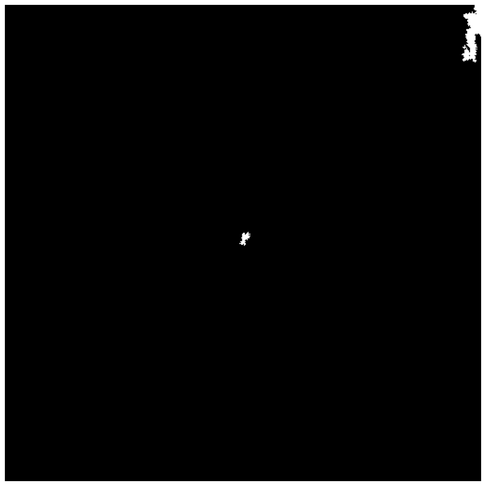} \\[2pt]
\includegraphics[width=0.1800\textwidth]{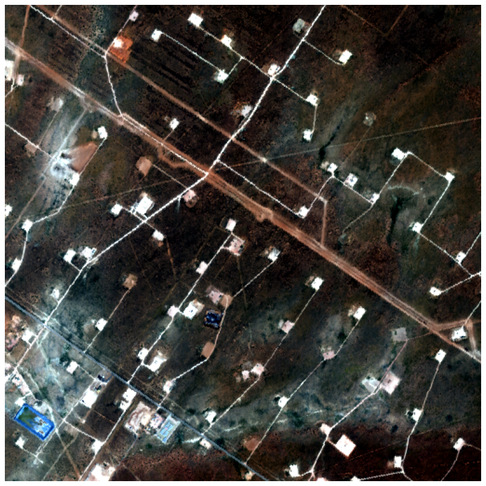} &
\includegraphics[width=0.1800\textwidth]{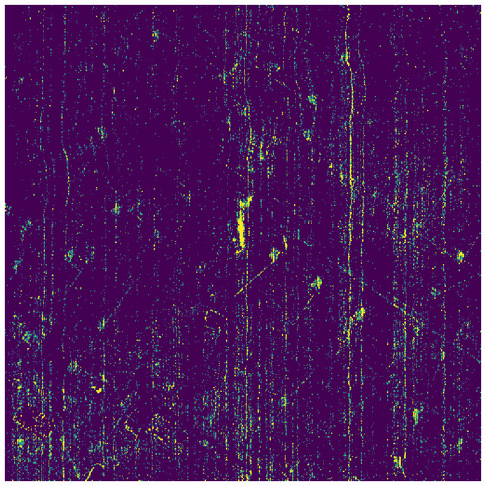} &
\includegraphics[width=0.1800\textwidth]{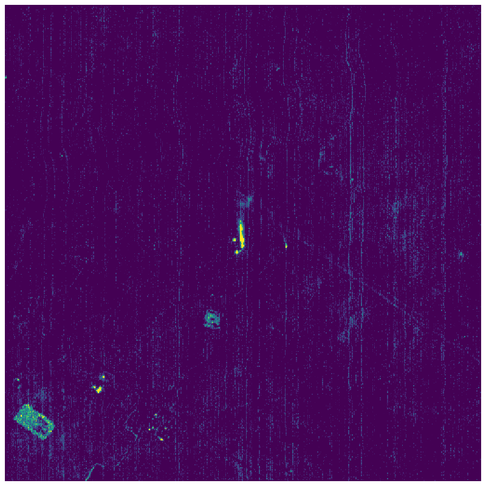} &
\includegraphics[width=0.1800\textwidth]{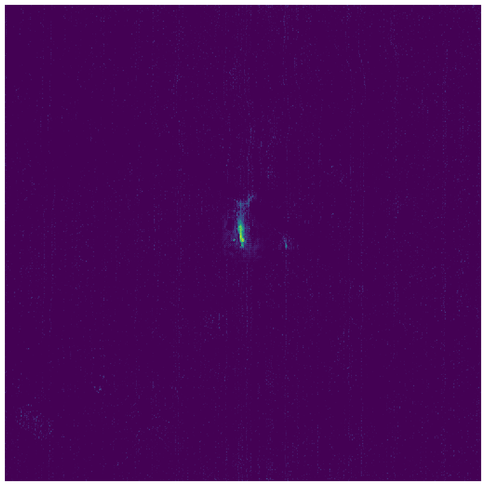} &
\includegraphics[width=0.1800\textwidth]{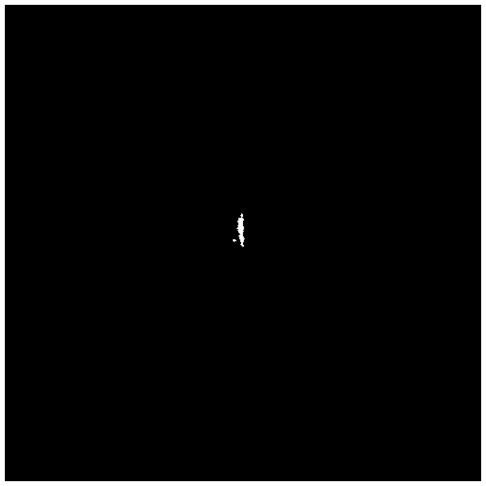} \\[2pt]
\includegraphics[width=0.1800\textwidth]{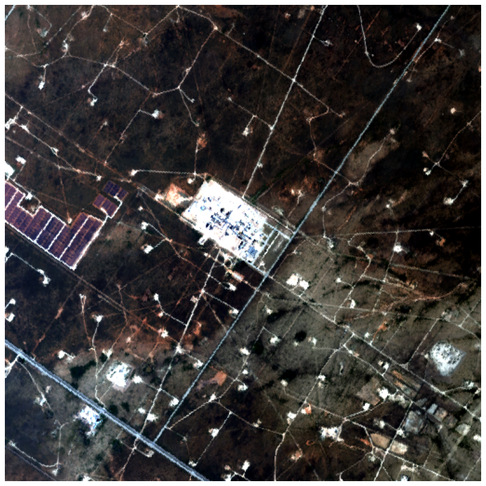} &
\includegraphics[width=0.1800\textwidth]{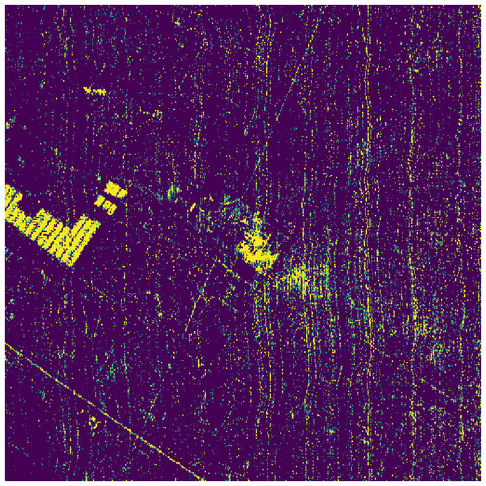} &
\includegraphics[width=0.1800\textwidth]{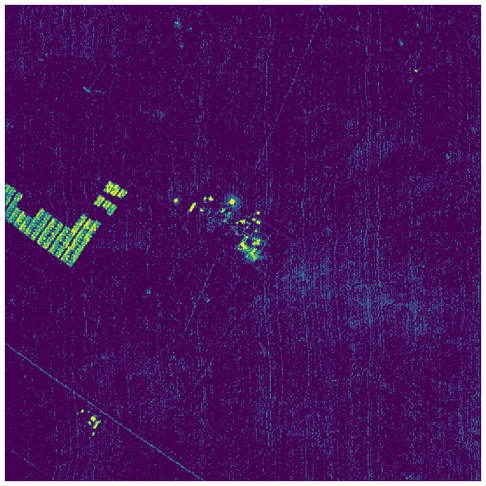} &
\includegraphics[width=0.1800\textwidth]{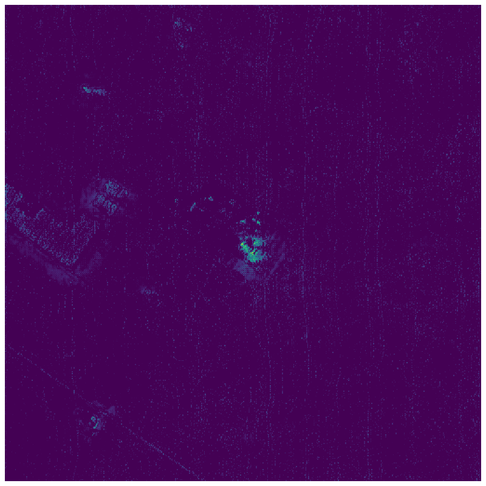} &
\includegraphics[width=0.1800\textwidth]{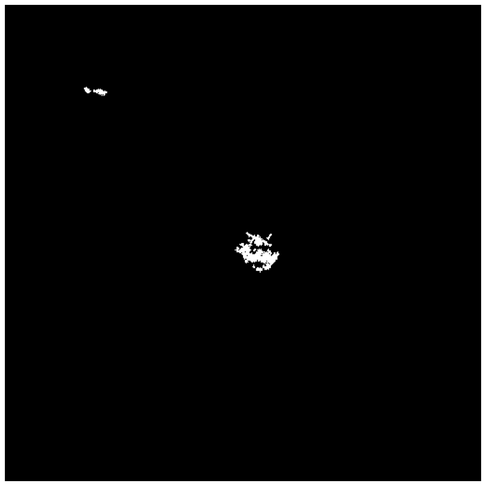} \\
\end{tabular}
\caption{Evolution of the FLAME physics score across the two-phase training curriculum described in the main text. Columns show the RGB tile, the MAG1C-SAS reference, the FLAME physics score after Phase 1, the FLAME physics score after Phase 2, and the ground-truth plume mask.}
\label{fig:curriculum_a}
\end{figure*}

\paragraph{Curriculum evolution.}
As described in the main text, the curriculum first aligns the physics score with the MAG1C-SAS teacher and then shifts the optimization toward the segmentation objective. This design gives the score layer a stable physics-aligned initialization while allowing the final model to depart from the teacher when ground-truth supervision provides a better plume boundary. Figure~\ref{fig:curriculum_a} visualizes this transition, with the Phase~$1$ and Phase~$2$ columns corresponding to the model state at epoch~$T=10$, when $\gamma(t)$ first reaches zero, and at the final epoch~$50$, respectively. The Phase~1 physics score follows the MAG1C-SAS response, while the Phase~2 physics score becomes more selective around annotated plume regions. This selectivity matters because MAG1C-SAS responds to a wide range of albedo variations and surface confounders, and the Phase~2 physics score suppresses these responses while preserving the plume signal, which directly contributes to the false-positive reduction reported in the main text.

\clearpage

\begin{figure*}[t]
\centering
\includegraphics[width=0.70\textwidth]{outputs/figures/fig_shared_colormap_ppm.pdf}
\vspace{3pt}

\newlength{\tilew}
\setlength{\tilew}{0.180\textwidth}
\newlength{\rowlabelw}
\setlength{\rowlabelw}{0.024\textwidth}

\newcommand{\rowlabel}[1]{%
  \makebox[\rowlabelw][c]{%
    \rotatebox[origin=c]{90}{\scriptsize #1}%
  }%
}

\setlength{\tabcolsep}{1.5pt}
\renewcommand{\arraystretch}{0.95}

\begin{tabular}{@{}c@{\hspace{1pt}}*{5}{c}@{}}
&
\scriptsize Tile 1 &
\scriptsize Tile 2 &
\scriptsize Tile 3 &
\scriptsize Tile 4 &
\scriptsize Tile 5
\\[2pt]

\adjustbox{valign=c}{\rowlabel{RGB image}} &
\adjustbox{valign=c}{\includegraphics[width=\tilew]{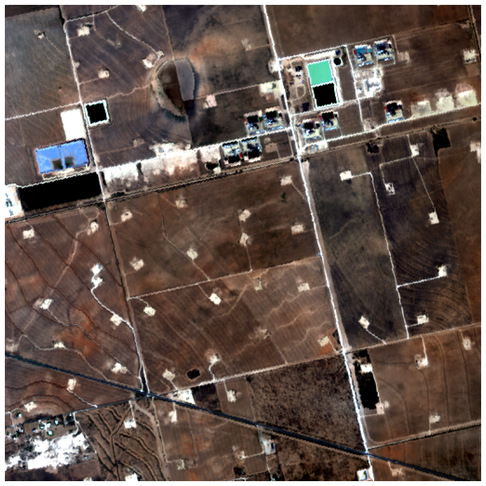}} &
\adjustbox{valign=c}{\includegraphics[width=\tilew]{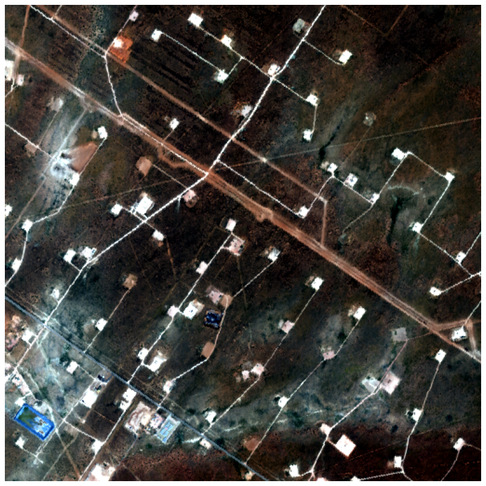}} &
\adjustbox{valign=c}{\includegraphics[width=\tilew]{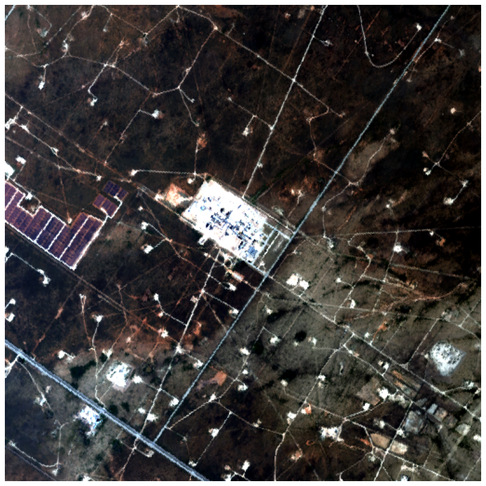}} &
\adjustbox{valign=c}{\includegraphics[width=\tilew]{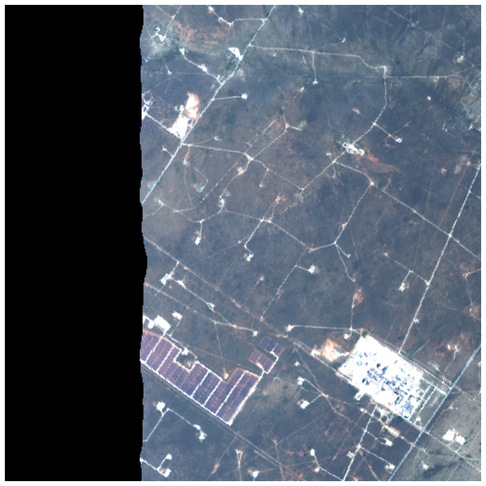}} &
\adjustbox{valign=c}{\includegraphics[width=\tilew]{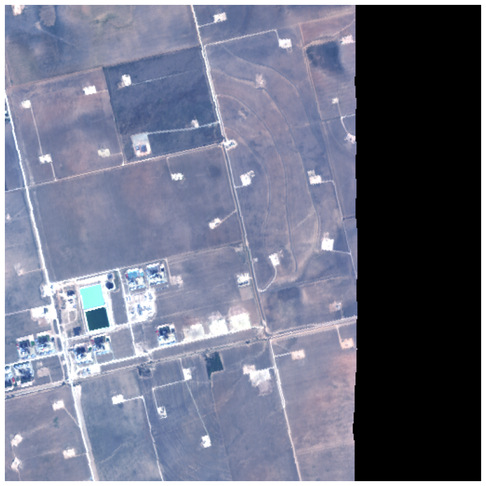}}
\\[3pt]

\adjustbox{valign=c}{\rowlabel{Physics score}} &
\adjustbox{valign=c}{\includegraphics[width=\tilew]{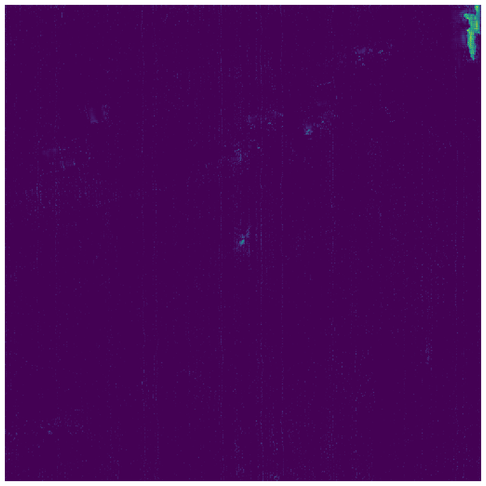}} &
\adjustbox{valign=c}{\includegraphics[width=\tilew]{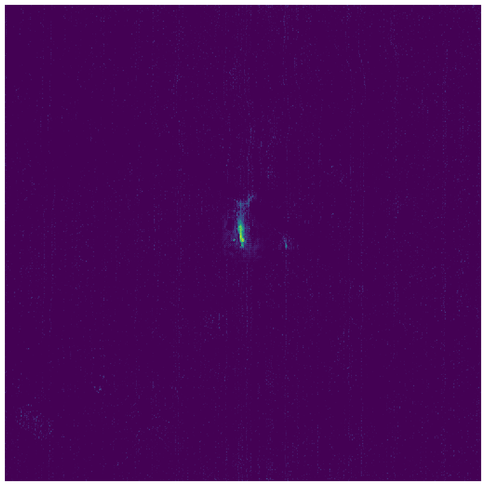}} &
\adjustbox{valign=c}{\includegraphics[width=\tilew]{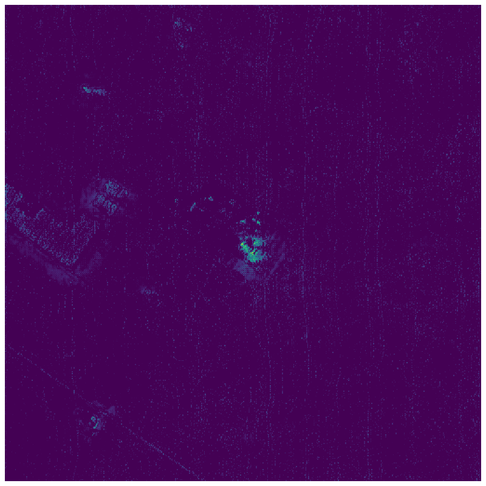}} &
\adjustbox{valign=c}{\includegraphics[width=\tilew]{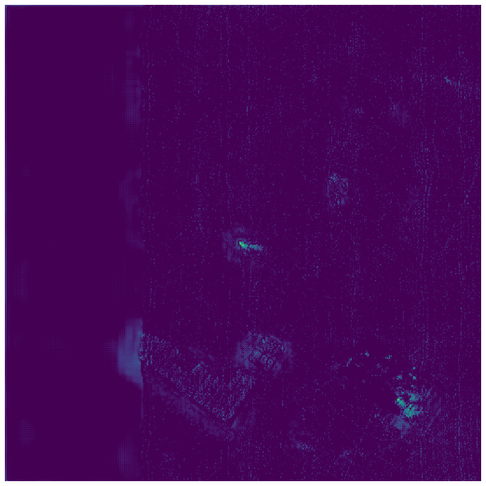}} &
\adjustbox{valign=c}{\includegraphics[width=\tilew]{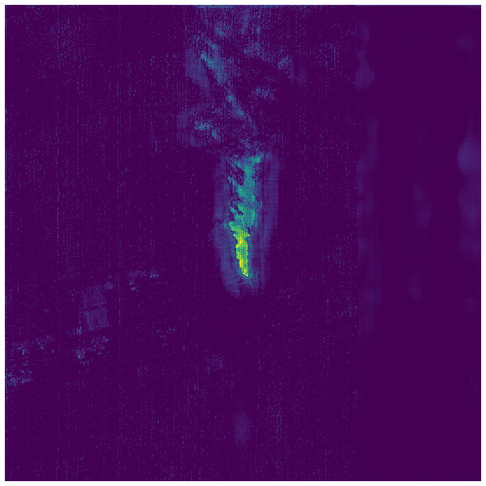}}
\\[3pt]

\adjustbox{valign=c}{\rowlabel{Probability map}} &
\adjustbox{valign=c}{\includegraphics[width=\tilew]{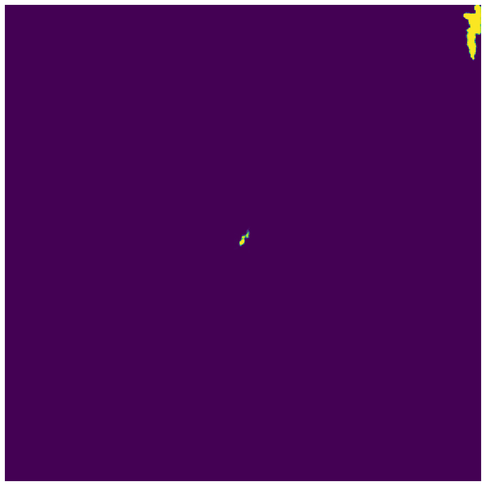}} &
\adjustbox{valign=c}{\includegraphics[width=\tilew]{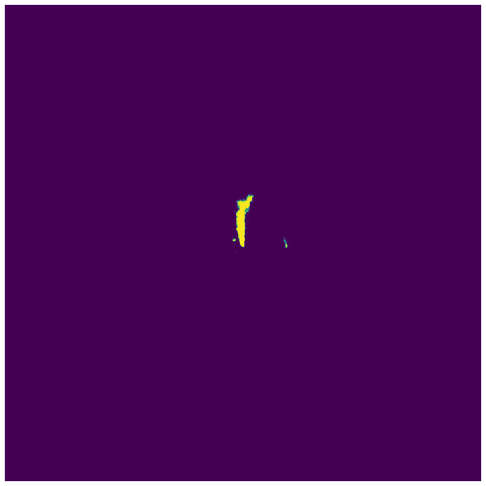}} &
\adjustbox{valign=c}{\includegraphics[width=\tilew]{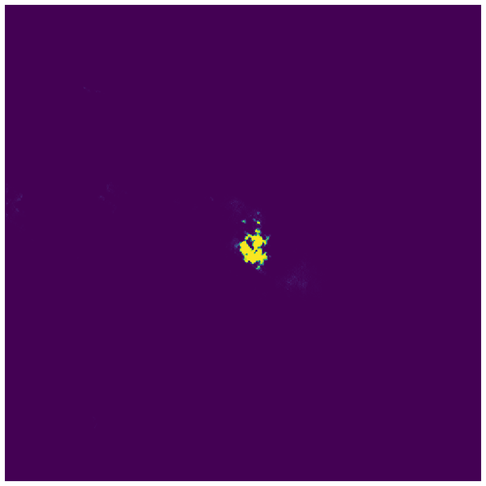}} &
\adjustbox{valign=c}{\includegraphics[width=\tilew]{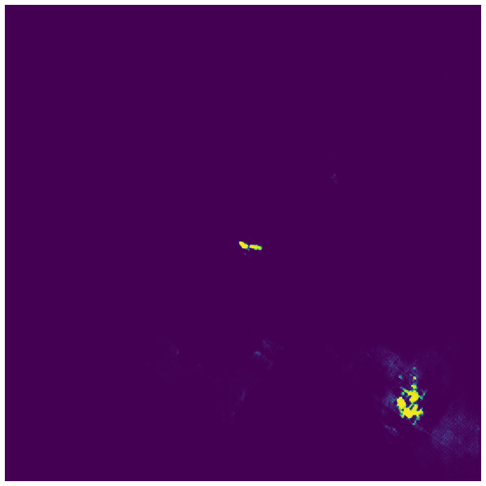}} &
\adjustbox{valign=c}{\includegraphics[width=\tilew]{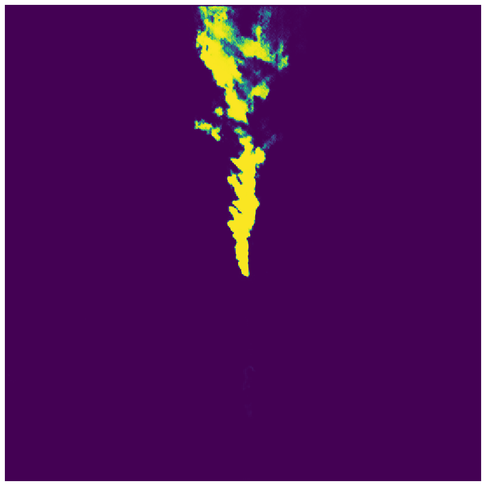}}
\\
\end{tabular}

\caption{Comparison between the FLAME physics score and the probability map produced by the segmentation head. Each column corresponds to a single test tile.}
\label{fig:curriculum_b}
\end{figure*}

\paragraph{Score-to-mask refinement.}
The physics score is an intermediate methane evidence map, while the segmentation head converts it into the probability map that feeds the final binary mask through thresholding. Figure~\ref{fig:curriculum_b} compares the physics score and the probability map on the same tiles. The segmentation head sharpens the response and calibrates the probability while preserving the spatial structure of the physics score. This indicates that the location and shape of the final binary mask are already determined at the physics score stage, and that the segmentation head acts as a refinement step rather than an independent decision module, which is consistent with the score-layer ablation in the main text.

\clearpage

\begin{figure*}[t]
\centering
\includegraphics[width=0.70\textwidth]{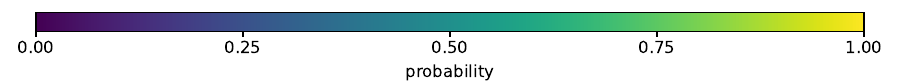}
\vspace{2pt}
\setlength{\tabcolsep}{1pt}
\renewcommand{\arraystretch}{0.9}
\begin{tabular}{cccccc}
\scriptsize RGB & \scriptsize No physics score & \scriptsize Fixed bg, fixed weights & \scriptsize Fixed bg, learned weights & \scriptsize \flame{} (Ours) & \scriptsize GT \\[2pt]
\includegraphics[width=0.1550\textwidth]{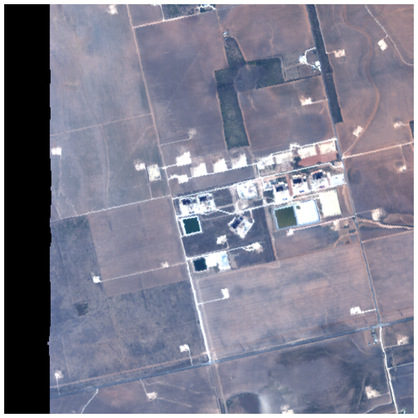} &
\includegraphics[width=0.1550\textwidth]{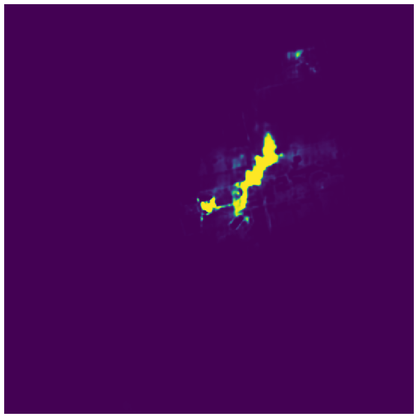} &
\includegraphics[width=0.1550\textwidth]{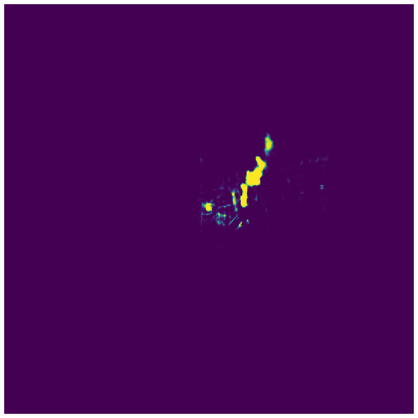} &
\includegraphics[width=0.1550\textwidth]{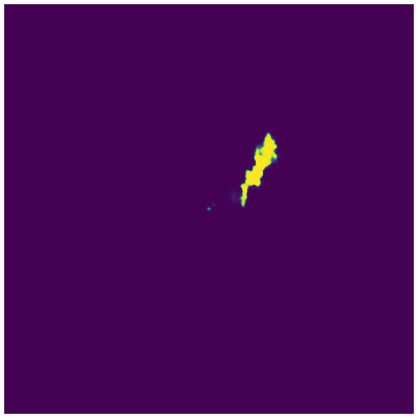} &
\includegraphics[width=0.1550\textwidth]{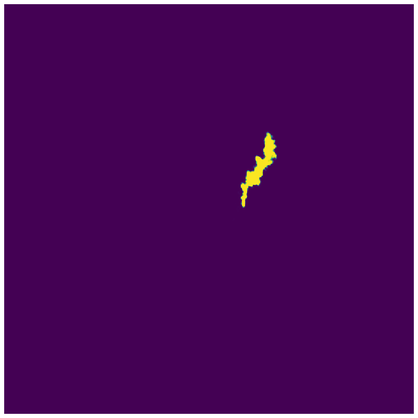} &
\includegraphics[width=0.1550\textwidth]{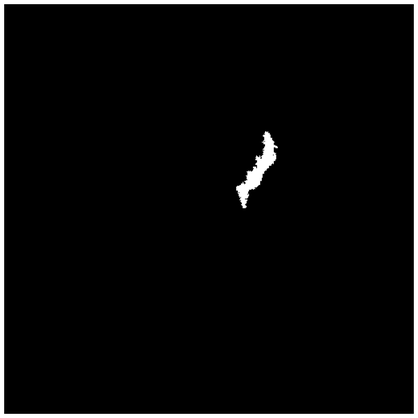} \\[2pt]
\includegraphics[width=0.1550\textwidth]{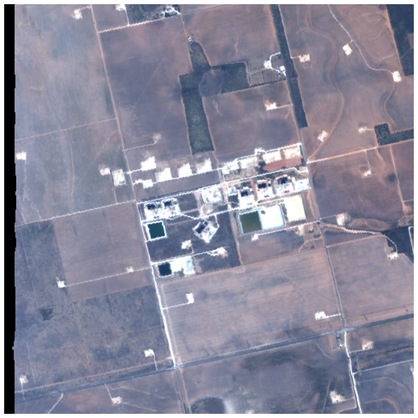} &
\includegraphics[width=0.1550\textwidth]{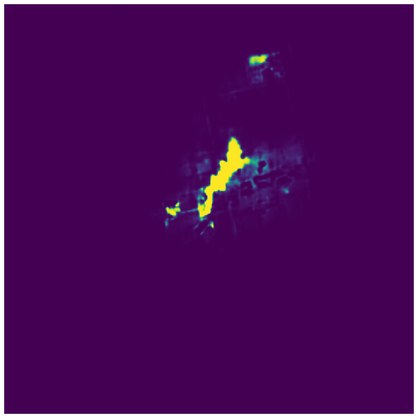} &
\includegraphics[width=0.1550\textwidth]{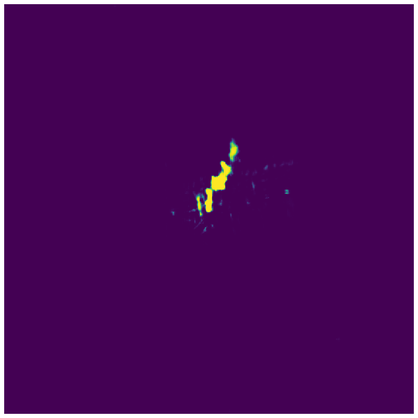} &
\includegraphics[width=0.1550\textwidth]{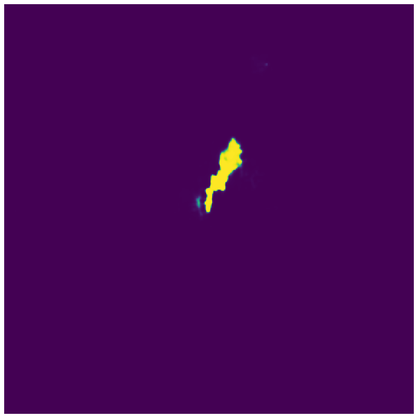} &
\includegraphics[width=0.1550\textwidth]{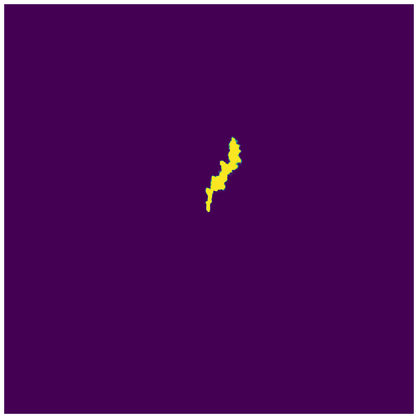} &
\includegraphics[width=0.1550\textwidth]{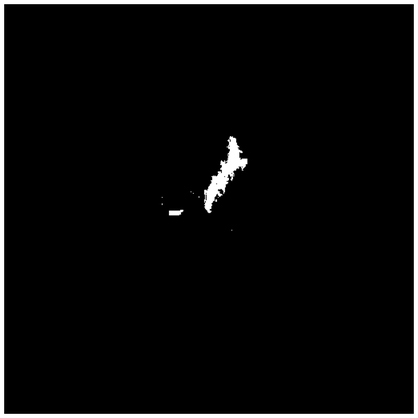} \\[2pt]
\includegraphics[width=0.1550\textwidth]{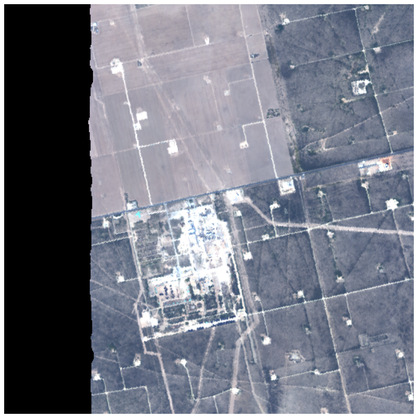} &
\includegraphics[width=0.1550\textwidth]{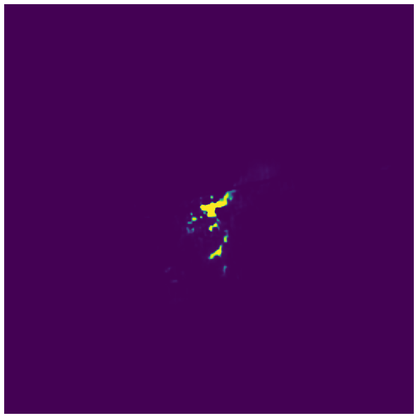} &
\includegraphics[width=0.1550\textwidth]{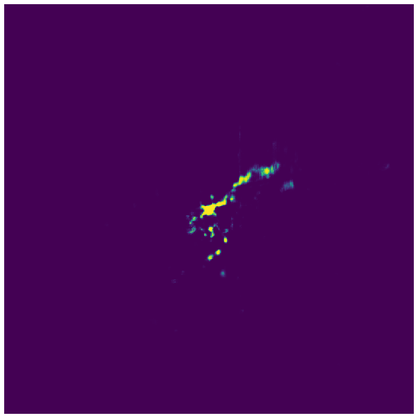} &
\includegraphics[width=0.1550\textwidth]{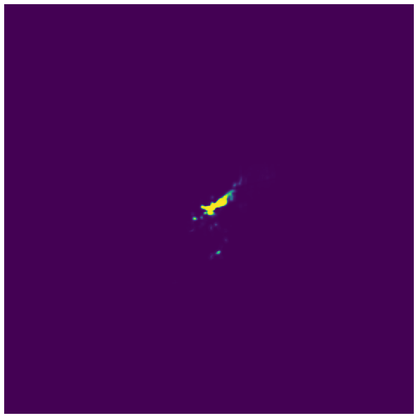} &
\includegraphics[width=0.1550\textwidth]{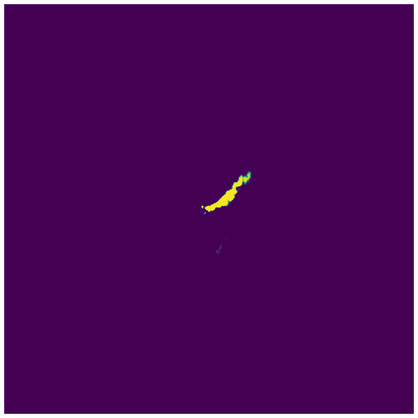} &
\includegraphics[width=0.1550\textwidth]{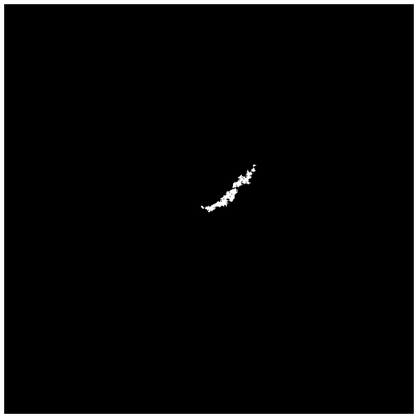} \\
\end{tabular}
\caption{Probability maps produced by physics-integration variants of FLAME. Columns show RGB, no physics score, fixed background with fixed spectral weights, fixed background with learned spectral weights, full FLAME, and the ground-truth plume mask, where bg denotes background.}
\label{fig:score_progression}
\end{figure*}

\begin{table}[t!]
\centering
\caption{Component-wise ablation of the physics score. The full FLAME model learns both the pixel-wise log-background and the pixel-wise spectral weighting, while the variants progressively remove these learned components.}
\label{tab:component_ablation}
\small
\begin{tabular}{@{}l c c c c@{}}
\toprule
Variant & Recall & Precision & F1 & IoU \\
\midrule
No physics score & 0.3841 & 0.4367 & 0.4087 & 0.2568 \\
Fixed background, fixed weights & 0.4340 & 0.5543 & 0.4868 & 0.3217 \\
Fixed background, learned weights & 0.4988 & \textbf{0.6702} & 0.5720 & 0.4005 \\
FLAME (Ours) & \textbf{0.6110} & 0.5957 & \textbf{0.6032} & \textbf{0.4319} \\
\bottomrule
\end{tabular}
\end{table}

\paragraph{Component-wise ablation.}
The score-layer ablation in the main text compares FLAME with and without the physics score. Here we further decompose the physics score itself to isolate the contribution of each learned physical field. We analyze four variants. The first variant uses no physics score and trains the neural backbone as a segmentation model, which corresponds to the w/o physics-guide condition in the main text. The second variant uses a fixed background and fixed spectral weights, recovering a closed-form matched-filter score. The third variant fixes the background but learns the spectral weights, isolating the contribution of pixel-adaptive spectral weighting. The fourth variant learns both the pixel-wise log-background and the pixel-wise spectral weights, which is the full FLAME model and corresponds to the w/ physics-guide condition in the main text. Table~\ref{tab:component_ablation} reports the corresponding detection metrics, and Figure~\ref{fig:score_progression} shows the probability maps. Each added component yields a clear F1 gain. Introducing a physics score lifts F1 from 0.4087 to 0.4868, learning the spectral weights raises it to 0.5720, and additionally learning the log-background reaches 0.6032. The largest jump comes from learning the spectral weights, but the background head is still needed to recover the recall lost in the third variant, where precision is high but plume coverage remains incomplete. This pattern supports the use of both learned physical fields rather than either component alone.

\clearpage

\section{Additional Qualitative Results}
\label{app:additional_qual}

This appendix complements the qualitative analysis in Section~\ref{sec:qualitative}.

\begin{figure*}[ht]
\centering
\setlength{\tabcolsep}{1pt}
\begin{tabular}{@{}ccccc@{}}
\scriptsize RGB & \scriptsize UNet+MAG1C-SAS & \scriptsize SegFormer (CU) & \scriptsize \flame{} (ours) & \scriptsize GT \\[2pt]
\includegraphics[width=0.17\textwidth]{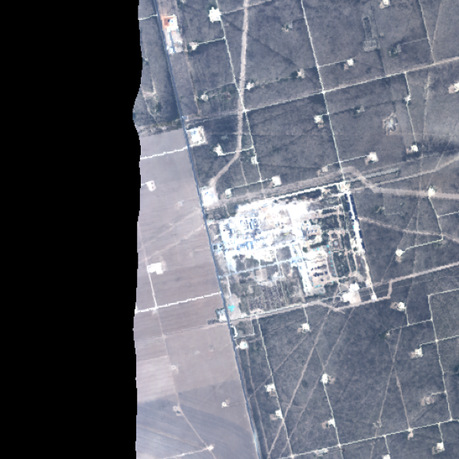} &
\includegraphics[width=0.17\textwidth]{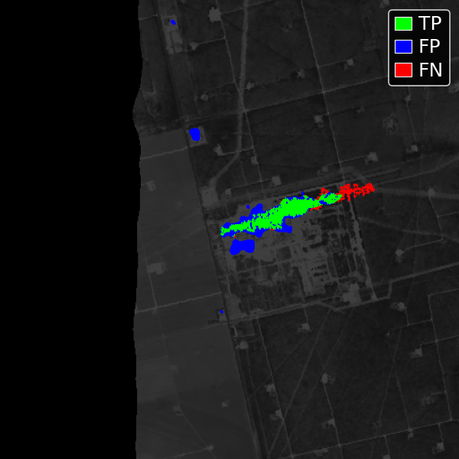} &
\includegraphics[width=0.17\textwidth]{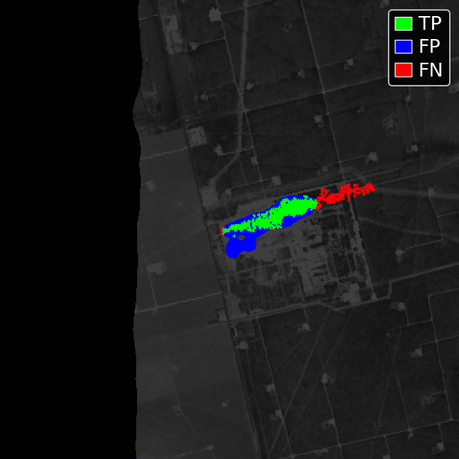} &
\includegraphics[width=0.17\textwidth]{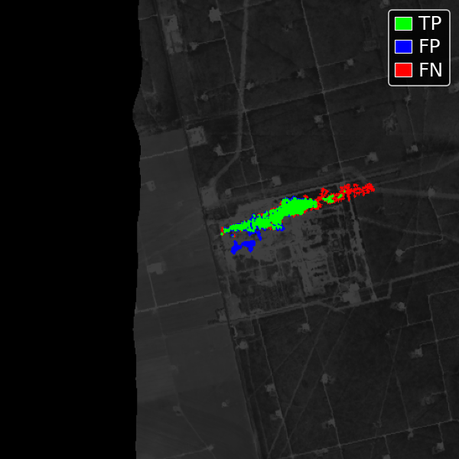} &
\includegraphics[width=0.17\textwidth]{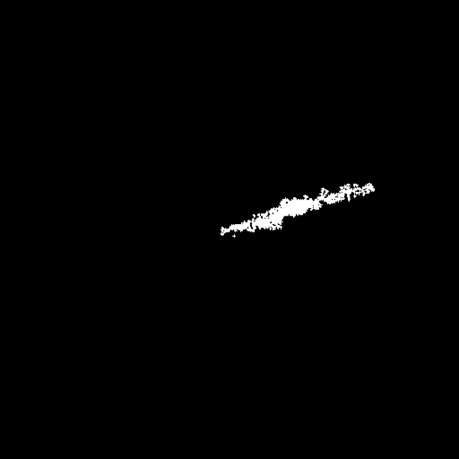} \\[2pt]
\includegraphics[width=0.17\textwidth]{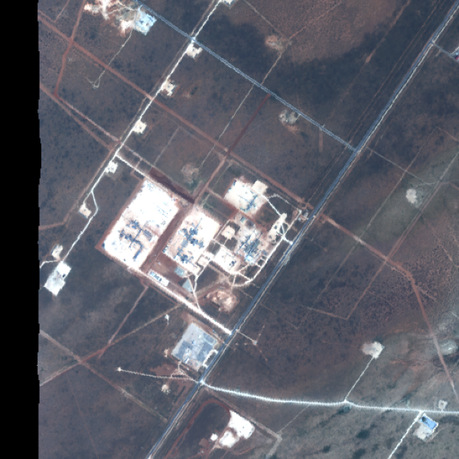} &
\includegraphics[width=0.17\textwidth]{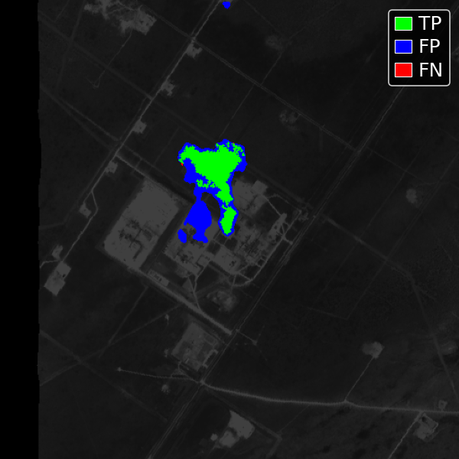} &
\includegraphics[width=0.17\textwidth]{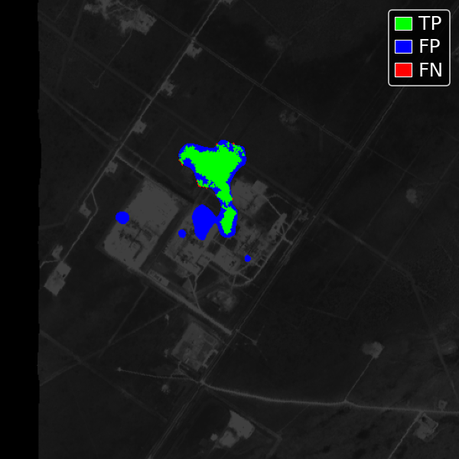} &
\includegraphics[width=0.17\textwidth]{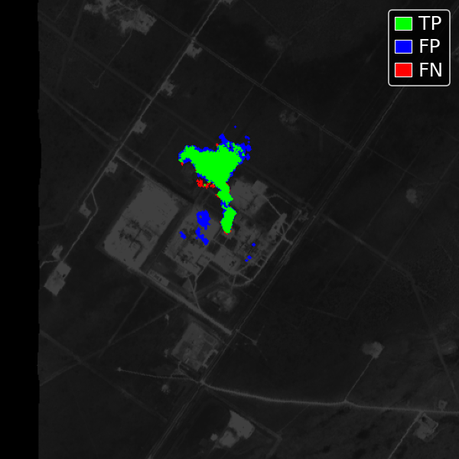} &
\includegraphics[width=0.17\textwidth]{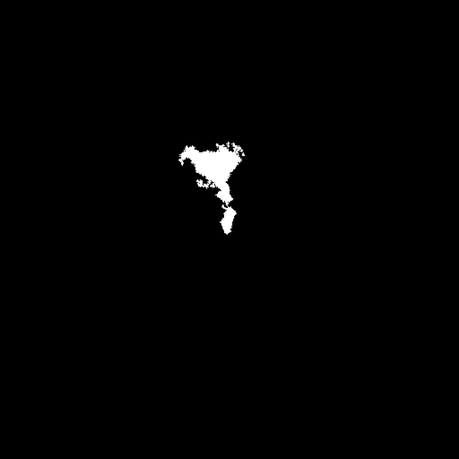} \\[2pt]
\includegraphics[width=0.17\textwidth]{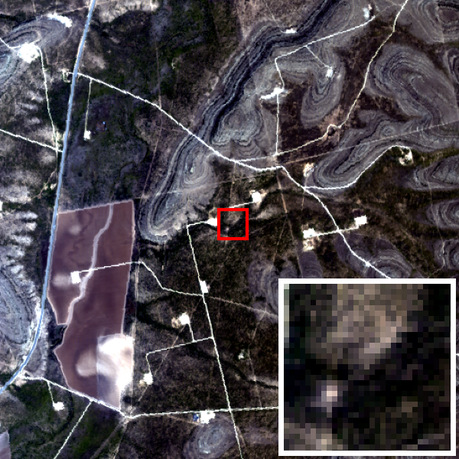} &
\includegraphics[width=0.17\textwidth]{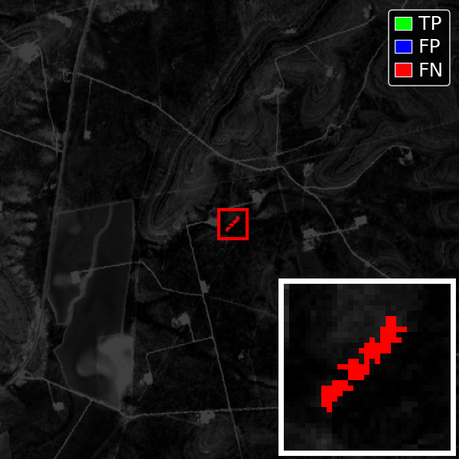} &
\includegraphics[width=0.17\textwidth]{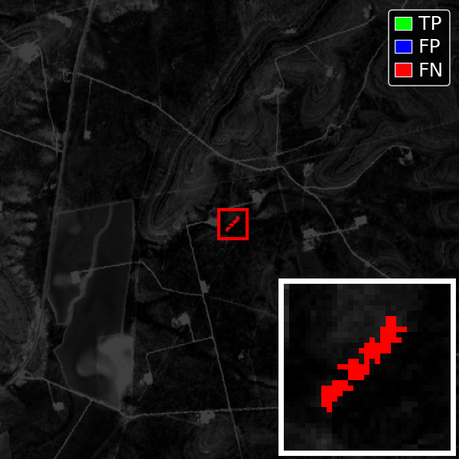} &
\includegraphics[width=0.17\textwidth]{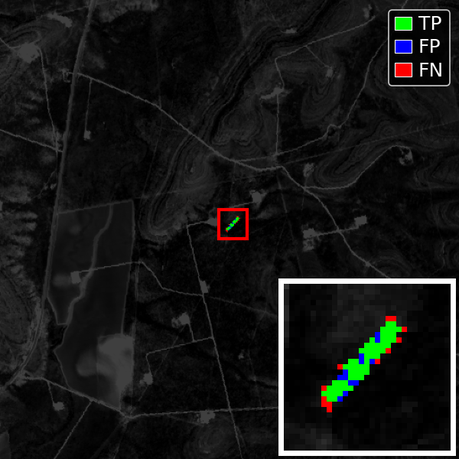} &
\includegraphics[width=0.17\textwidth]{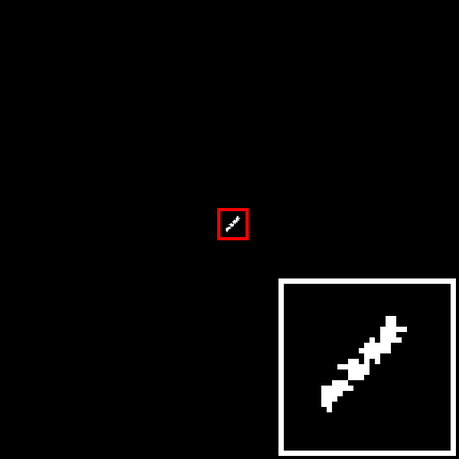} \\[2pt]
\includegraphics[width=0.17\textwidth]{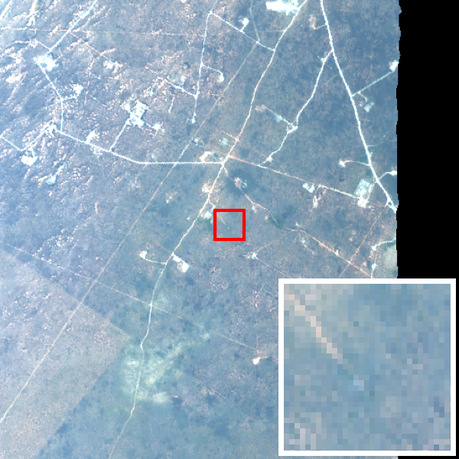} &
\includegraphics[width=0.17\textwidth]{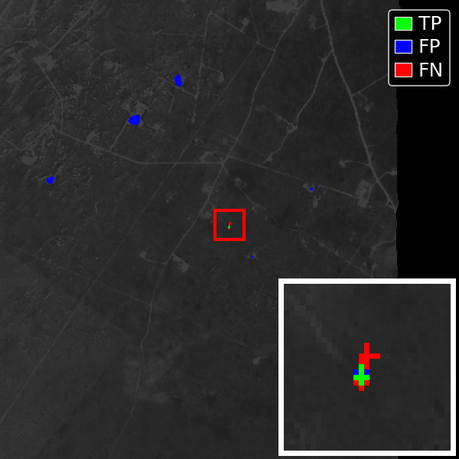} &
\includegraphics[width=0.17\textwidth]{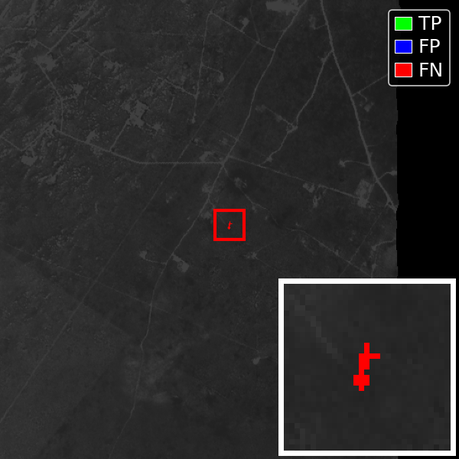} &
\includegraphics[width=0.17\textwidth]{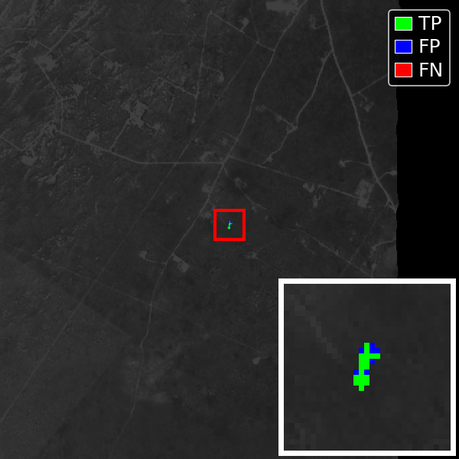} &
\includegraphics[width=0.17\textwidth]{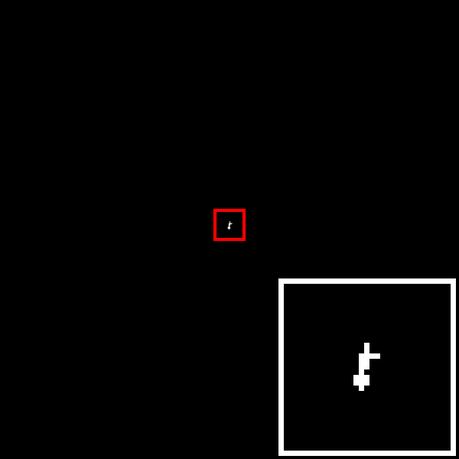} \\[2pt]
\includegraphics[width=0.17\textwidth]{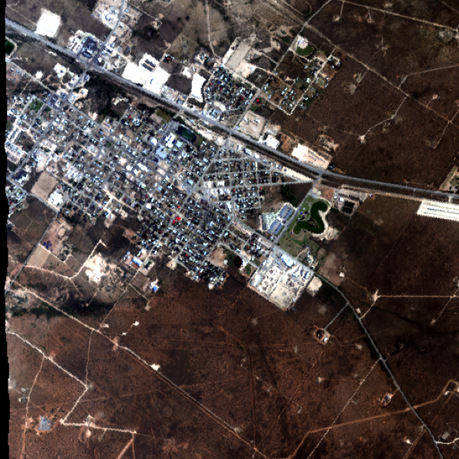} &
\includegraphics[width=0.17\textwidth]{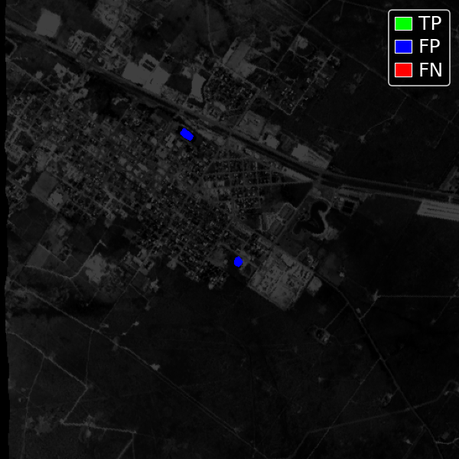} &
\includegraphics[width=0.17\textwidth]{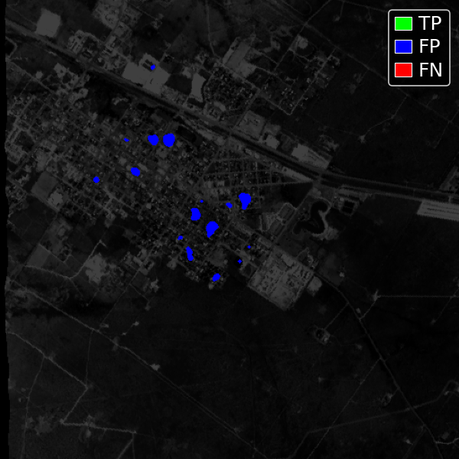} &
\includegraphics[width=0.17\textwidth]{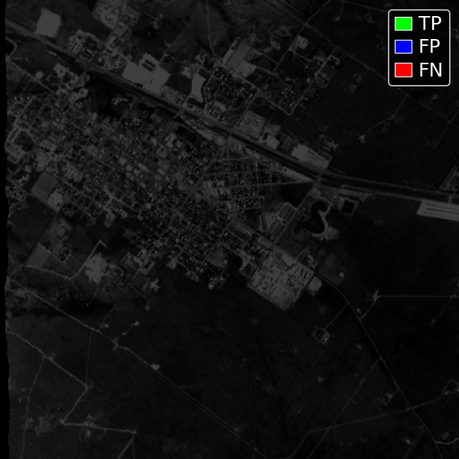} &
\includegraphics[width=0.17\textwidth]{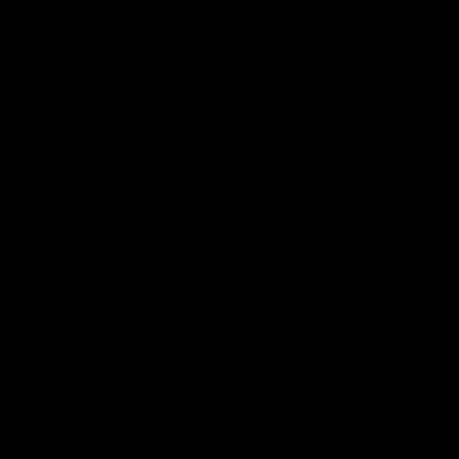} \\
\end{tabular}
\caption{Additional qualitative comparison on STARCOP test tiles. Columns show RGB, UNet+MAG1C-SAS, SegFormer with the ConvUp decoder, \flame{}, and the ground-truth mask. Prediction panels overlay true positives in green, false positives in blue, and false negatives in red on a darkened RGB backdrop. From top to bottom, rows show two strong-plume tiles, a weak-plume tile with a zoomed inset of the plume region, an over-extension case, and a plume-free urban tile.}
\label{fig:qualitative}
\end{figure*}

\paragraph{Qualitative Comparison} Across the regimes shown in Figure~\ref{fig:qualitative}, the dominant failure mode of UNet+MAG1C-SAS is over-prediction around the plume, visible as broad blue halos that extend the predicted mask beyond the annotated region; SegFormer with the ConvUp decoder either misses parts of the plume body, leaving red false negatives along the plume, or scatters small detections in plume-free areas. \flame{} produces masks that remain close to the annotated plume in the strong and weak regimes, recovers the elongated plume morphology in the zoomed weak-plume tile, and stays nearly empty on the plume-free urban scene where surface clutter triggers detections in the other models.

\clearpage

\begin{figure*}[t!]
\centering
\includegraphics[width=0.70\textwidth]{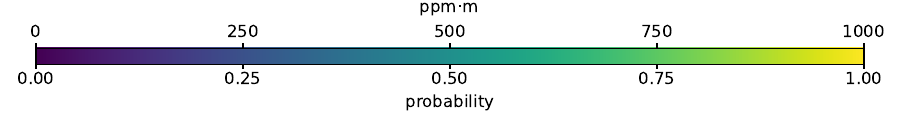}
\vspace{2pt}

\setlength{\rowlabelw}{0.024\textwidth}

\newcommand{\rowlabel}[1]{%
  \makebox[\rowlabelw][c]{%
    \rotatebox[origin=c]{90}{\scriptsize #1}%
  }%
}

\setlength{\tabcolsep}{1pt}
\renewcommand{\arraystretch}{0.95}

\begin{tabular}{@{}c@{\hspace{1pt}}*{5}{c}@{}}
&
\scriptsize RGB &
\scriptsize Probability map &
\scriptsize Final binary mask &
\scriptsize GT &
\scriptsize TP/FP/FN
\\[2pt]

\adjustbox{valign=c}{\rowlabel{MAG1C-tile}} &
\adjustbox{valign=c}{\includegraphics[width=0.1700\textwidth]{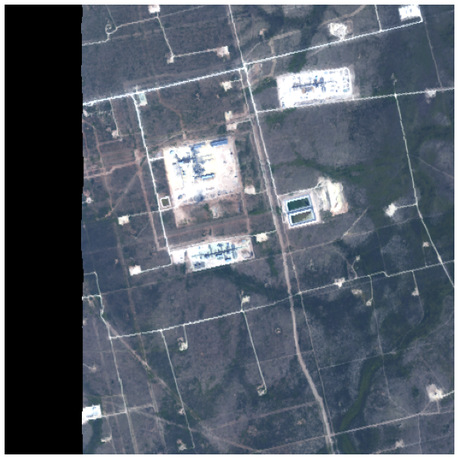}} &
\adjustbox{valign=c}{\includegraphics[width=0.1700\textwidth]{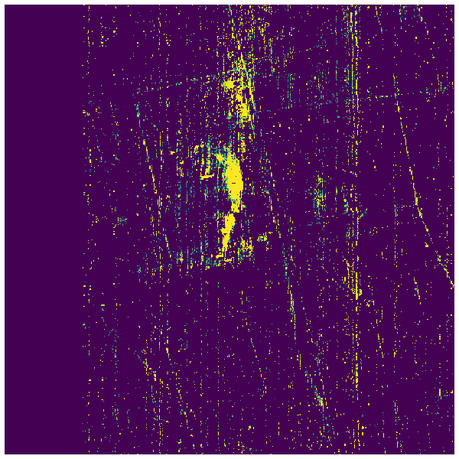}} &
\adjustbox{valign=c}{\includegraphics[width=0.1700\textwidth]{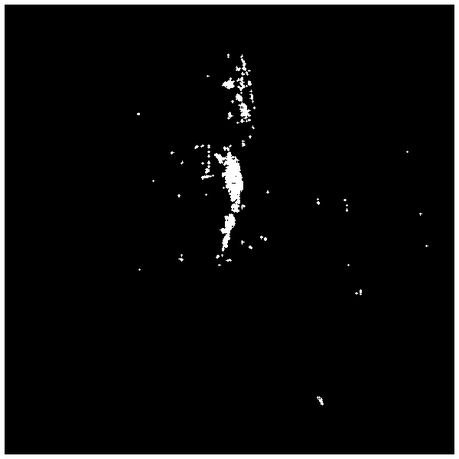}} &
\adjustbox{valign=c}{\includegraphics[width=0.1700\textwidth]{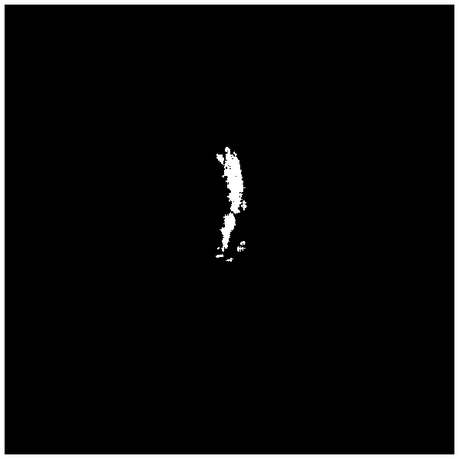}} &
\adjustbox{valign=c}{\includegraphics[width=0.1700\textwidth]{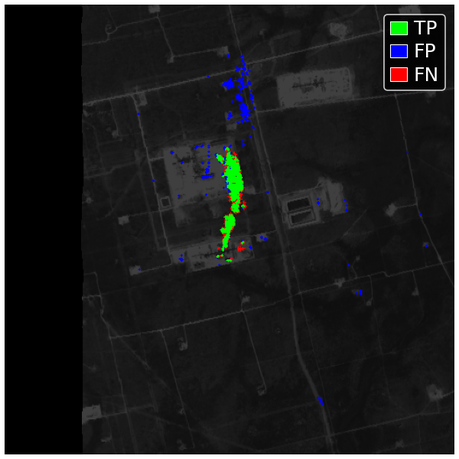}}
\\[2pt]

\adjustbox{valign=c}{\rowlabel{UNet+MAG1C-SAS}} &
\adjustbox{valign=c}{\includegraphics[width=0.1700\textwidth]{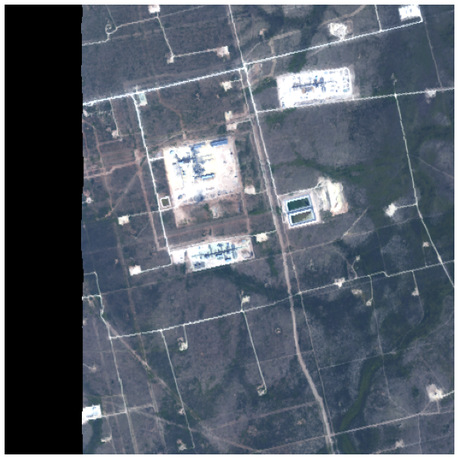}} &
\adjustbox{valign=c}{\includegraphics[width=0.1700\textwidth]{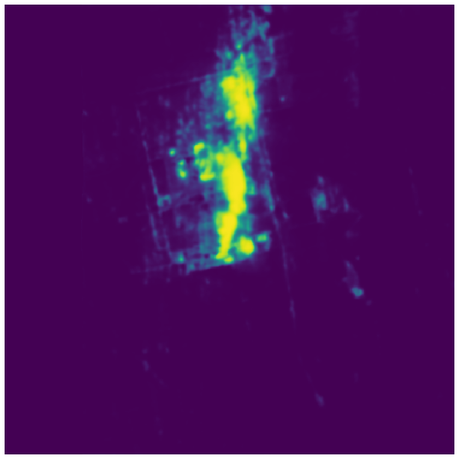}} &
\adjustbox{valign=c}{\includegraphics[width=0.1700\textwidth]{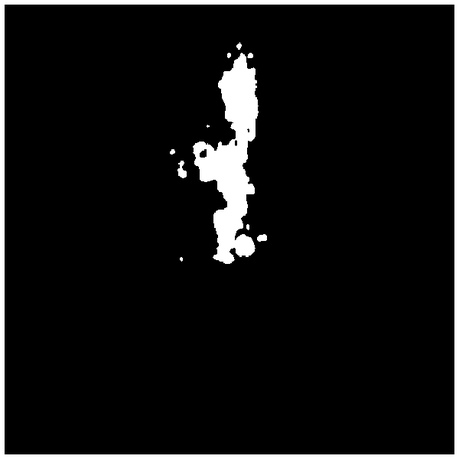}} &
\adjustbox{valign=c}{\includegraphics[width=0.1700\textwidth]{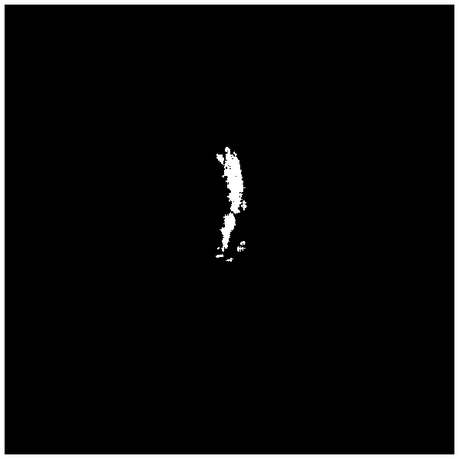}} &
\adjustbox{valign=c}{\includegraphics[width=0.1700\textwidth]{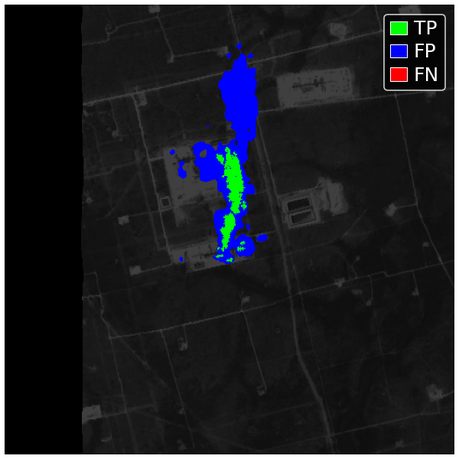}}
\\[2pt]

\adjustbox{valign=c}{\rowlabel{UNet (MNv2)}} &
\adjustbox{valign=c}{\includegraphics[width=0.1700\textwidth]{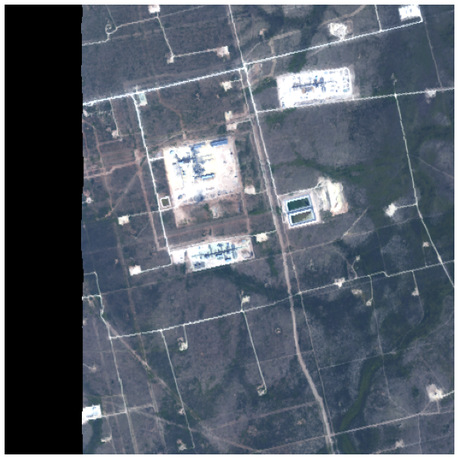}} &
\adjustbox{valign=c}{\includegraphics[width=0.1700\textwidth]{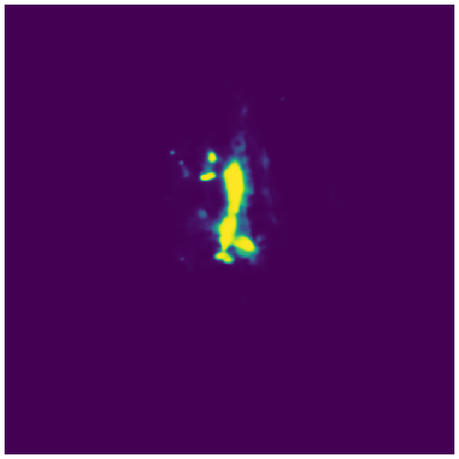}} &
\adjustbox{valign=c}{\includegraphics[width=0.1700\textwidth]{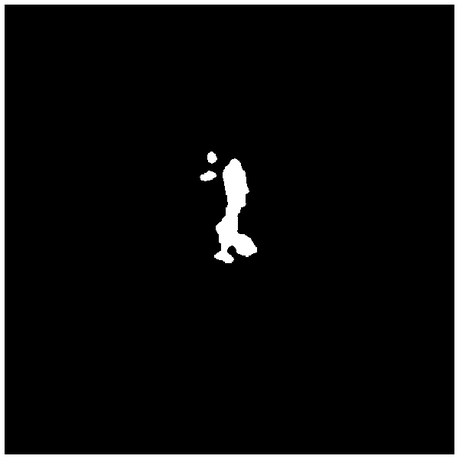}} &
\adjustbox{valign=c}{\includegraphics[width=0.1700\textwidth]{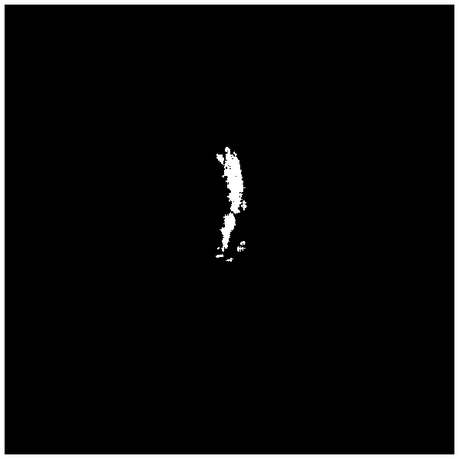}} &
\adjustbox{valign=c}{\includegraphics[width=0.1700\textwidth]{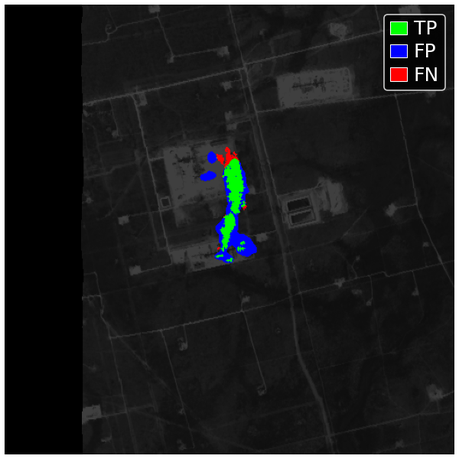}}
\\[2pt]

\adjustbox{valign=c}{\rowlabel{SegFormer (CU)}} &
\adjustbox{valign=c}{\includegraphics[width=0.1700\textwidth]{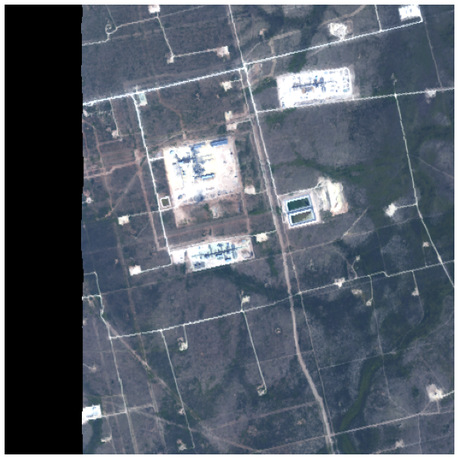}} &
\adjustbox{valign=c}{\includegraphics[width=0.1700\textwidth]{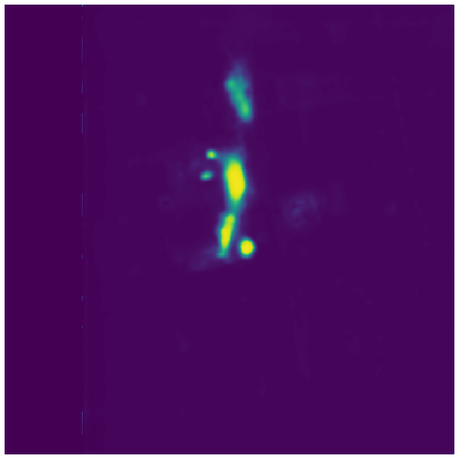}} &
\adjustbox{valign=c}{\includegraphics[width=0.1700\textwidth]{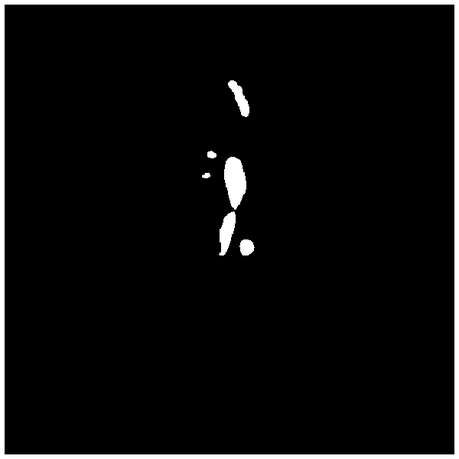}} &
\adjustbox{valign=c}{\includegraphics[width=0.1700\textwidth]{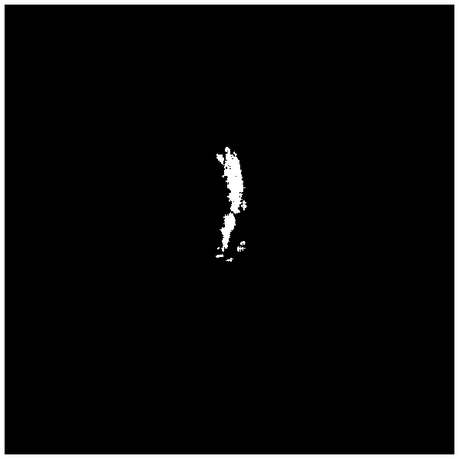}} &
\adjustbox{valign=c}{\includegraphics[width=0.1700\textwidth]{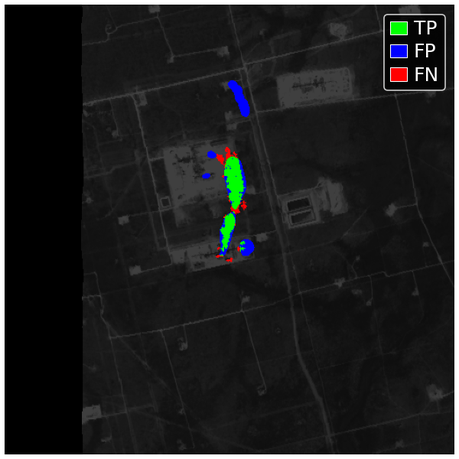}}
\\[2pt]

\adjustbox{valign=c}{\rowlabel{\flame{} (Ours)}} &
\adjustbox{valign=c}{\includegraphics[width=0.1700\textwidth]{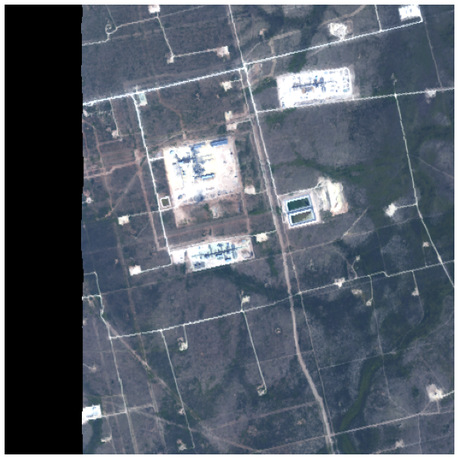}} &
\adjustbox{valign=c}{\includegraphics[width=0.1700\textwidth]{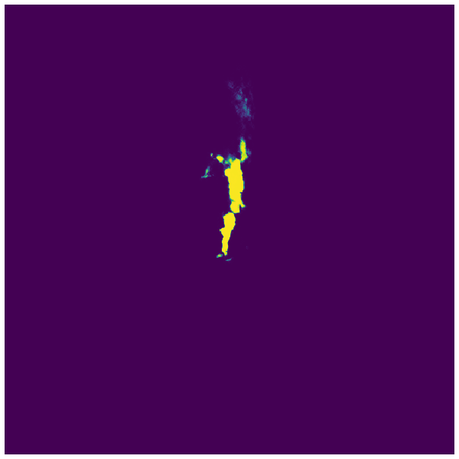}} &
\adjustbox{valign=c}{\includegraphics[width=0.1700\textwidth]{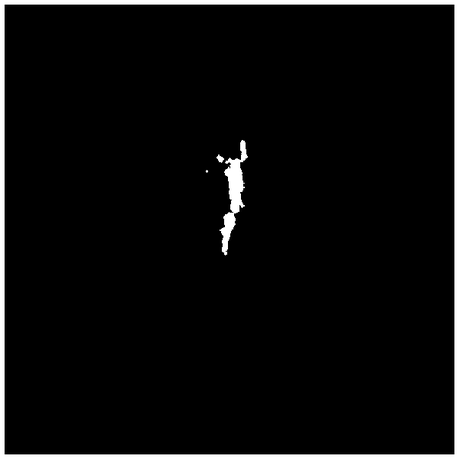}} &
\adjustbox{valign=c}{\includegraphics[width=0.1700\textwidth]{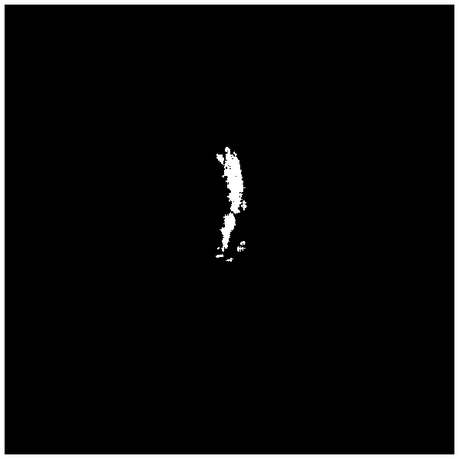}} &
\adjustbox{valign=c}{\includegraphics[width=0.1700\textwidth]{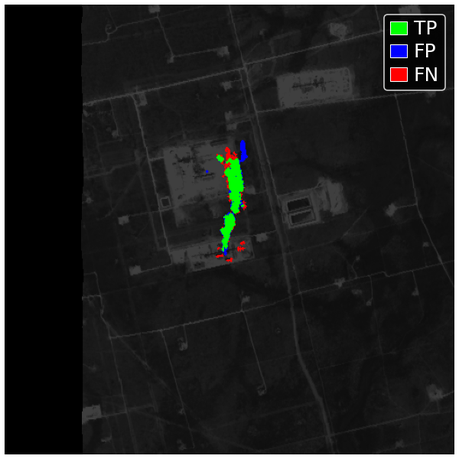}}
\\
\end{tabular}

\caption{Per-pixel outputs on a strong-plume test tile. Columns show the RGB tile, the probability map, the final binary mask, the ground-truth plume mask, and the TP/FP/FN overlay. From top to bottom, rows show MAG1C-tile, UNet+MAG1C-SAS, UNet with the MobileNetV2 encoder, SegFormer with the ConvUp decoder, and \flame{}. For the MAG1C-tile row, the probability-map column instead reports the matched-filter enhancement in ppm$\cdot$m using the top color bar; all learned models use sigmoid probabilities with the bottom color bar.}
\label{fig:qualitative_baselines_strong}
\end{figure*}

\paragraph{Detailed Strong-Plume Case} In Figure~\ref{fig:qualitative_baselines_strong}, the MAG1C-tile score activates broadly across the tile, with strong responses on roads and field boundaries that share a directional radiance pattern with the plume. UNet+MAG1C-SAS inherits parts of this background structure and produces a wide diffuse score around the plume body. The end-to-end UNet and SegFormer scores are concentrated near the plume but leak into adjacent surfaces, which translates into the over-extended binary masks visible in the third and fourth rows. The \flame{} score collapses onto a thin region that closely follows the annotated plume, and its binary mask correspondingly contains the smallest amount of background activation among the methods shown.

\clearpage

\begin{figure*}[t!]
\centering
\includegraphics[width=0.70\textwidth]{outputs/figures/fig_shared_colormap.pdf}
\vspace{2pt}

\setlength{\rowlabelw}{0.024\textwidth}

\newcommand{\rowlabel}[1]{%
  \makebox[\rowlabelw][c]{%
    \rotatebox[origin=c]{90}{\scriptsize #1}%
  }%
}

\setlength{\tabcolsep}{1pt}
\renewcommand{\arraystretch}{0.95}

\begin{tabular}{@{}c@{\hspace{1pt}}*{5}{c}@{}}
&
\scriptsize RGB &
\scriptsize Probability map &
\scriptsize Final binary mask &
\scriptsize GT &
\scriptsize TP/FP/FN
\\[2pt]

\adjustbox{valign=c}{\rowlabel{MAG1C-tile}} &
\adjustbox{valign=c}{\includegraphics[width=0.1700\textwidth]{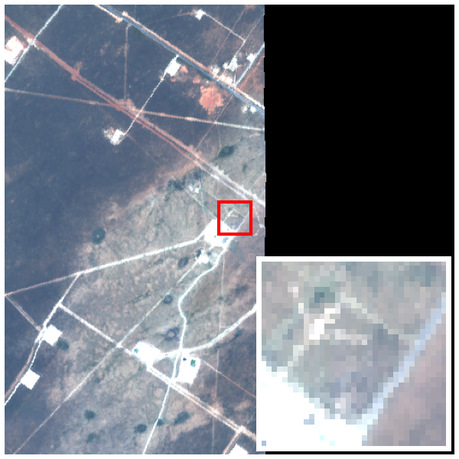}} &
\adjustbox{valign=c}{\includegraphics[width=0.1700\textwidth]{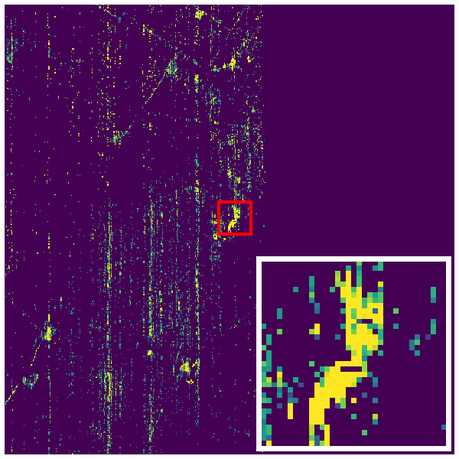}} &
\adjustbox{valign=c}{\includegraphics[width=0.1700\textwidth]{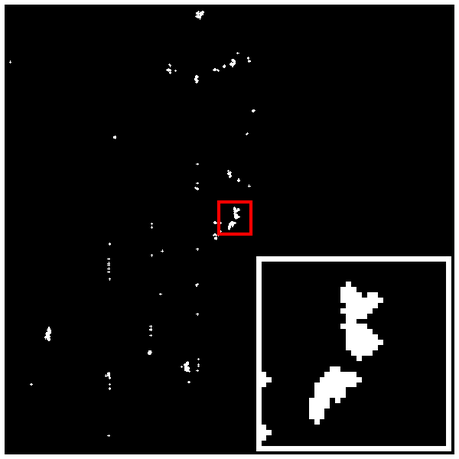}} &
\adjustbox{valign=c}{\includegraphics[width=0.1700\textwidth]{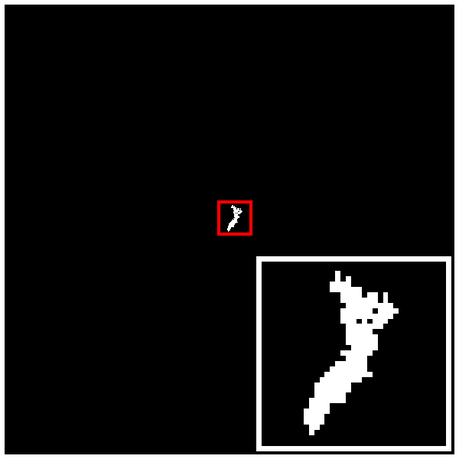}} &
\adjustbox{valign=c}{\includegraphics[width=0.1700\textwidth]{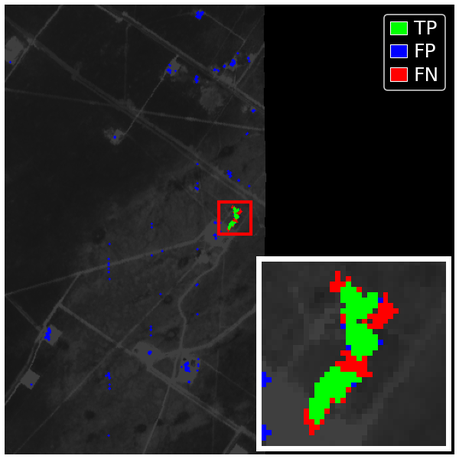}}
\\[2pt]

\adjustbox{valign=c}{\rowlabel{UNet+MAG1C-SAS}} &
\adjustbox{valign=c}{\includegraphics[width=0.1700\textwidth]{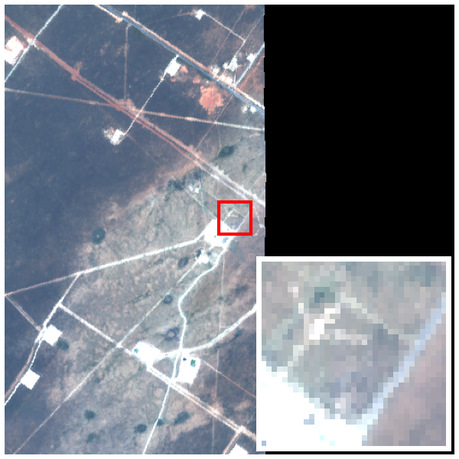}} &
\adjustbox{valign=c}{\includegraphics[width=0.1700\textwidth]{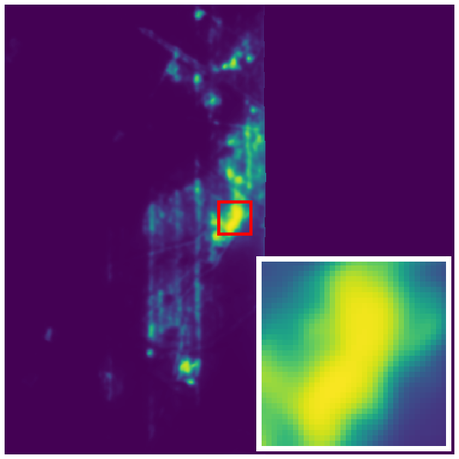}} &
\adjustbox{valign=c}{\includegraphics[width=0.1700\textwidth]{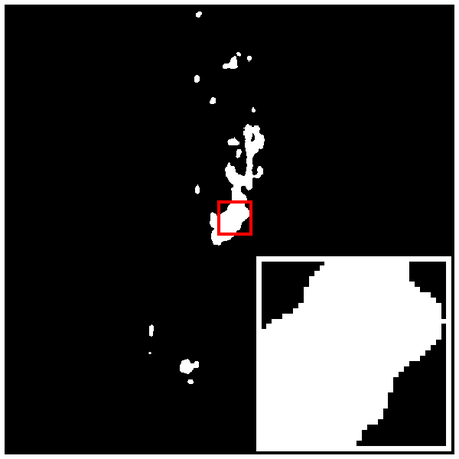}} &
\adjustbox{valign=c}{\includegraphics[width=0.1700\textwidth]{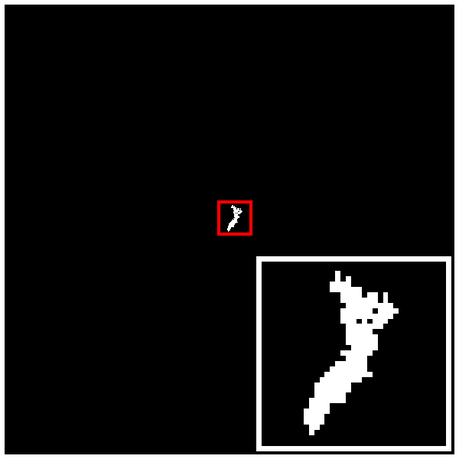}} &
\adjustbox{valign=c}{\includegraphics[width=0.1700\textwidth]{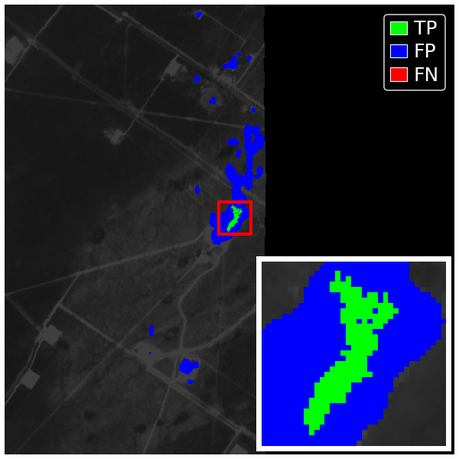}}
\\[2pt]

\adjustbox{valign=c}{\rowlabel{UNet (MNv2)}} &
\adjustbox{valign=c}{\includegraphics[width=0.1700\textwidth]{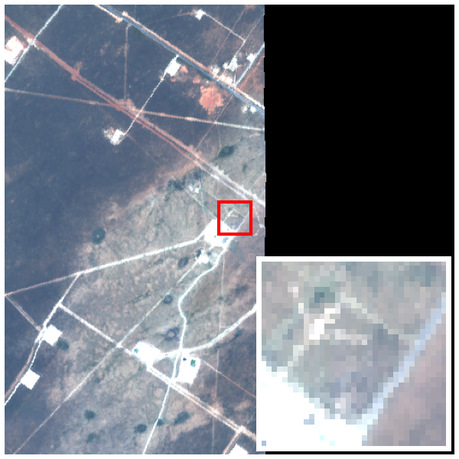}} &
\adjustbox{valign=c}{\includegraphics[width=0.1700\textwidth]{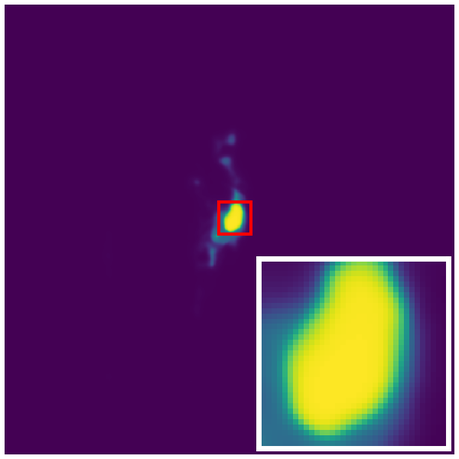}} &
\adjustbox{valign=c}{\includegraphics[width=0.1700\textwidth]{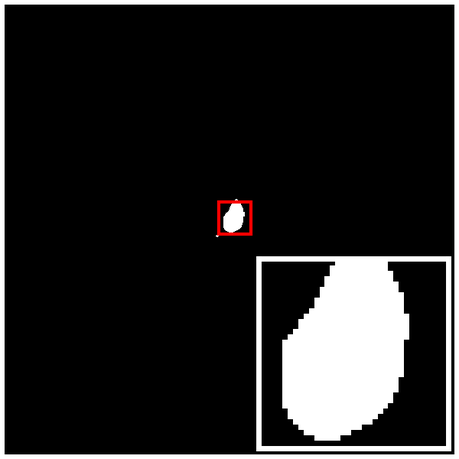}} &
\adjustbox{valign=c}{\includegraphics[width=0.1700\textwidth]{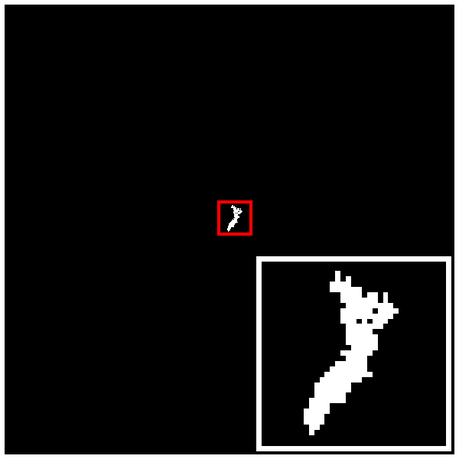}} &
\adjustbox{valign=c}{\includegraphics[width=0.1700\textwidth]{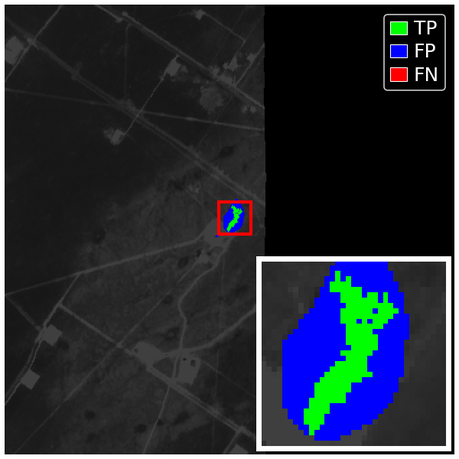}}
\\[2pt]

\adjustbox{valign=c}{\rowlabel{SegFormer (CU)}} &
\adjustbox{valign=c}{\includegraphics[width=0.1700\textwidth]{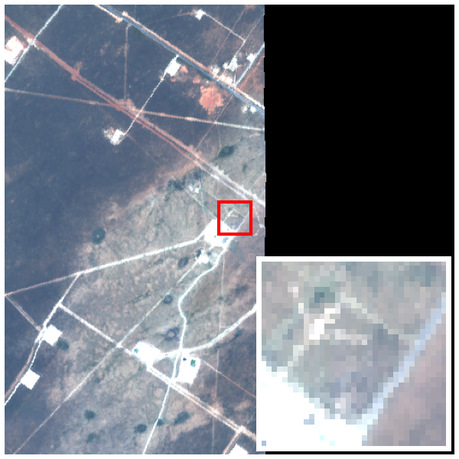}} &
\adjustbox{valign=c}{\includegraphics[width=0.1700\textwidth]{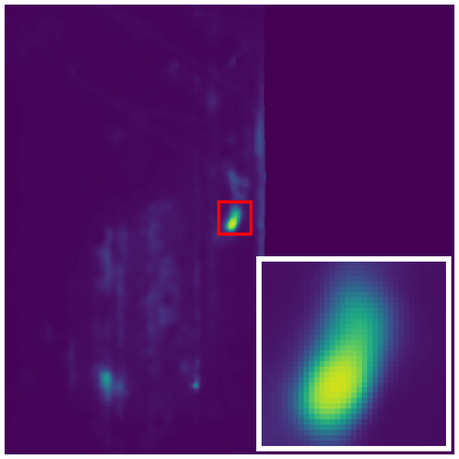}} &
\adjustbox{valign=c}{\includegraphics[width=0.1700\textwidth]{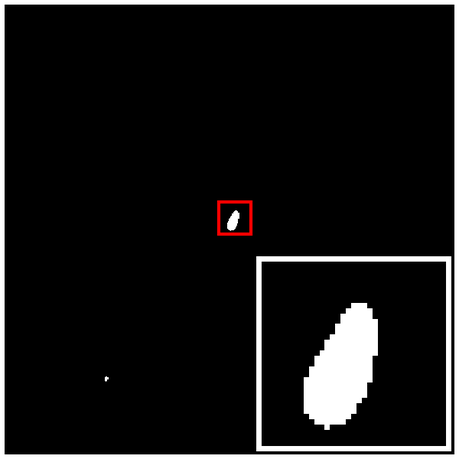}} &
\adjustbox{valign=c}{\includegraphics[width=0.1700\textwidth]{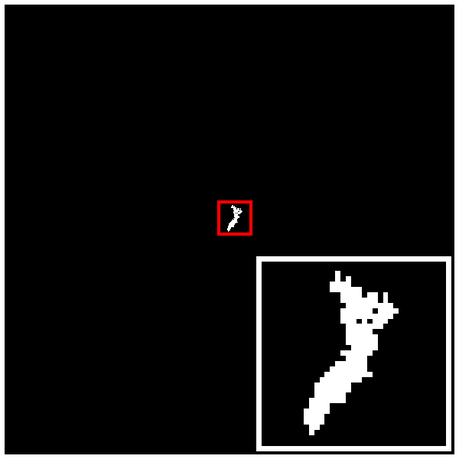}} &
\adjustbox{valign=c}{\includegraphics[width=0.1700\textwidth]{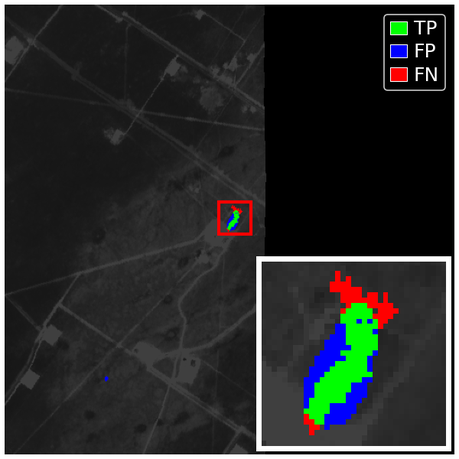}}
\\[2pt]

\adjustbox{valign=c}{\rowlabel{\flame{} (Ours)}} &
\adjustbox{valign=c}{\includegraphics[width=0.1700\textwidth]{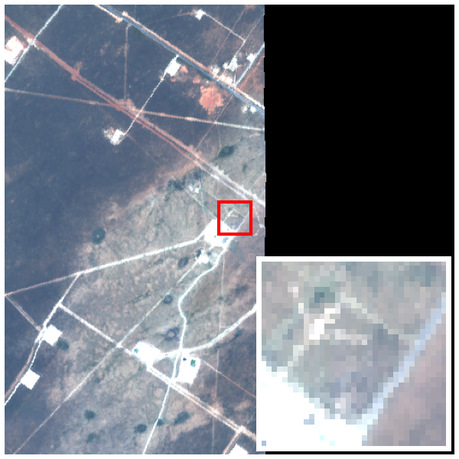}} &
\adjustbox{valign=c}{\includegraphics[width=0.1700\textwidth]{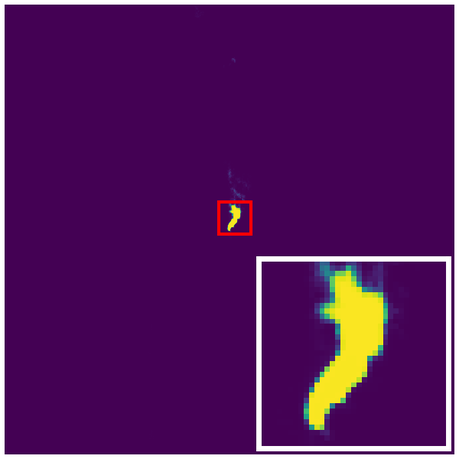}} &
\adjustbox{valign=c}{\includegraphics[width=0.1700\textwidth]{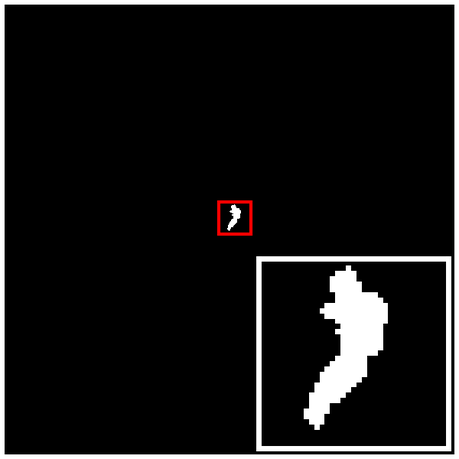}} &
\adjustbox{valign=c}{\includegraphics[width=0.1700\textwidth]{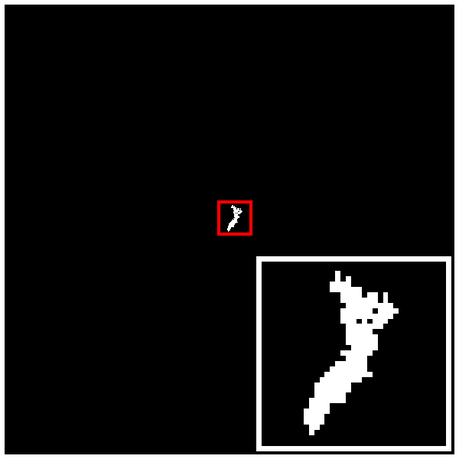}} &
\adjustbox{valign=c}{\includegraphics[width=0.1700\textwidth]{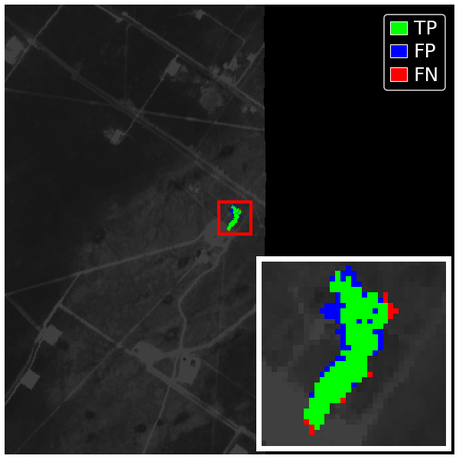}}
\\
\end{tabular}

\caption{Per-pixel outputs on a weak-plume test tile. Row and column layouts follow Figure~\ref{fig:qualitative_baselines_strong}. Each panel includes a zoomed inset of the plume region marked by the red rectangle.}
\label{fig:qualitative_baselines_weak}
\end{figure*}

\paragraph{Detailed Weak-Plume Case} Figure~\ref{fig:qualitative_baselines_weak} shows the same comparison on a tile where the methane signal is too faint to be reliably distinguished from radiance variations in the score map alone. MAG1C-tile leaves the plume region nearly indistinguishable from the surrounding clutter. UNet+MAG1C-SAS produces a strong but spatially diffuse response over the plume area, and the resulting mask spreads well beyond the annotated region. UNet with the MobileNetV2 encoder under-predicts the plume body, while SegFormer with the ConvUp decoder produces only fragmented activations that fail to capture the elongated plume shape. \flame{} yields a focused score that is largely confined to the plume region, and the corresponding mask preserves the elongated morphology visible in the ground truth.

\clearpage

\begin{figure*}[t!]
\centering
\includegraphics[width=0.70\textwidth]{outputs/figures/fig_shared_colormap.pdf}
\vspace{2pt}

\setlength{\rowlabelw}{0.024\textwidth}

\newcommand{\rowlabel}[1]{%
  \makebox[\rowlabelw][c]{%
    \rotatebox[origin=c]{90}{\scriptsize #1}%
  }%
}

\setlength{\tabcolsep}{1pt}
\renewcommand{\arraystretch}{0.95}

\begin{tabular}{@{}c@{\hspace{1pt}}*{5}{c}@{}}
&
\scriptsize RGB &
\scriptsize Probability map &
\scriptsize Final binary mask &
\scriptsize GT &
\scriptsize TP/FP/FN
\\[2pt]

\adjustbox{valign=c}{\rowlabel{MAG1C-tile}} &
\adjustbox{valign=c}{\includegraphics[width=0.1700\textwidth]{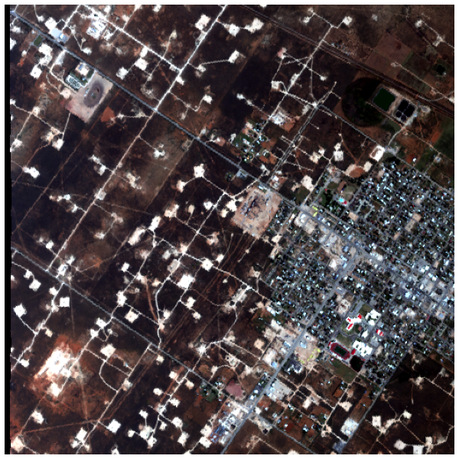}} &
\adjustbox{valign=c}{\includegraphics[width=0.1700\textwidth]{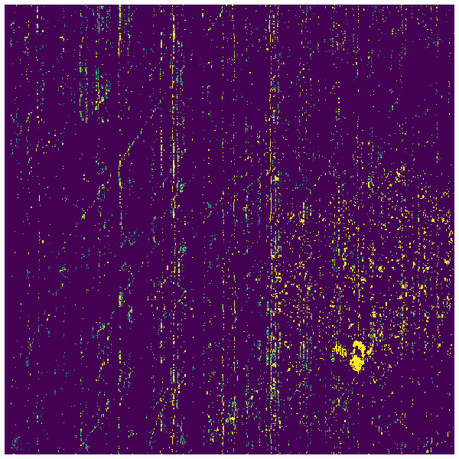}} &
\adjustbox{valign=c}{\includegraphics[width=0.1700\textwidth]{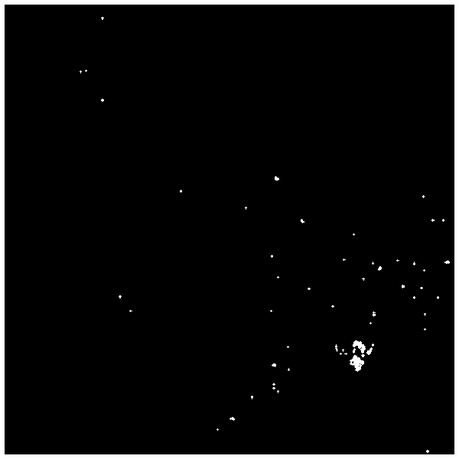}} &
\adjustbox{valign=c}{\includegraphics[width=0.1700\textwidth]{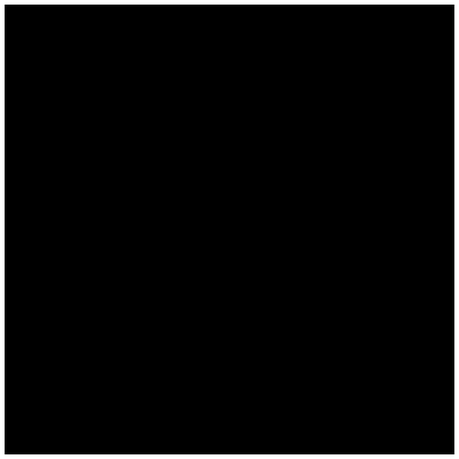}} &
\adjustbox{valign=c}{\includegraphics[width=0.1700\textwidth]{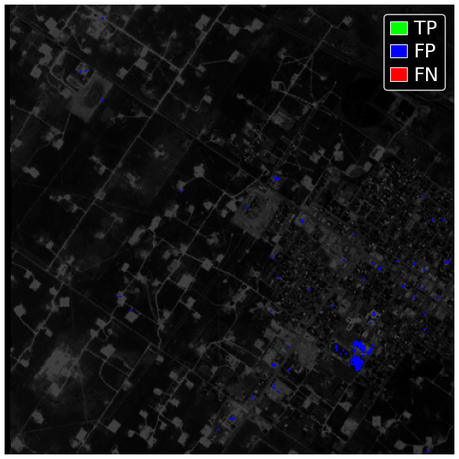}}
\\[2pt]

\adjustbox{valign=c}{\rowlabel{UNet+MAG1C-SAS}} &
\adjustbox{valign=c}{\includegraphics[width=0.1700\textwidth]{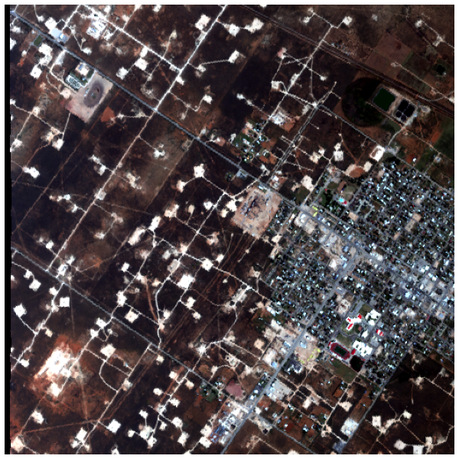}} &
\adjustbox{valign=c}{\includegraphics[width=0.1700\textwidth]{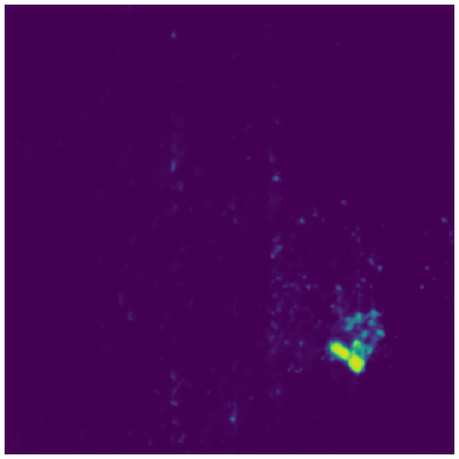}} &
\adjustbox{valign=c}{\includegraphics[width=0.1700\textwidth]{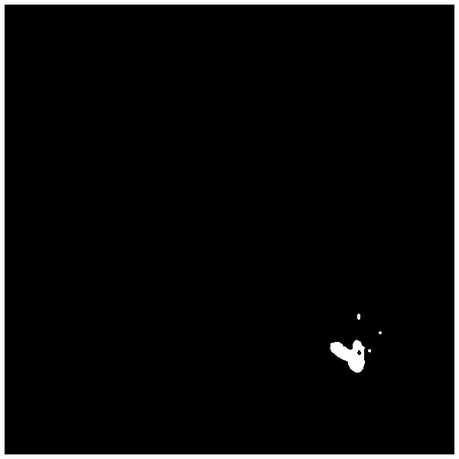}} &
\adjustbox{valign=c}{\includegraphics[width=0.1700\textwidth]{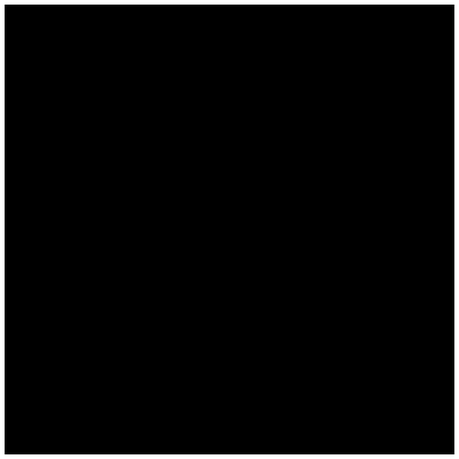}} &
\adjustbox{valign=c}{\includegraphics[width=0.1700\textwidth]{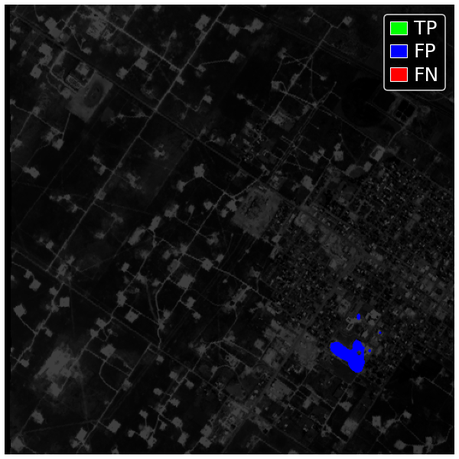}}
\\[2pt]

\adjustbox{valign=c}{\rowlabel{UNet (MNv2)}} &
\adjustbox{valign=c}{\includegraphics[width=0.1700\textwidth]{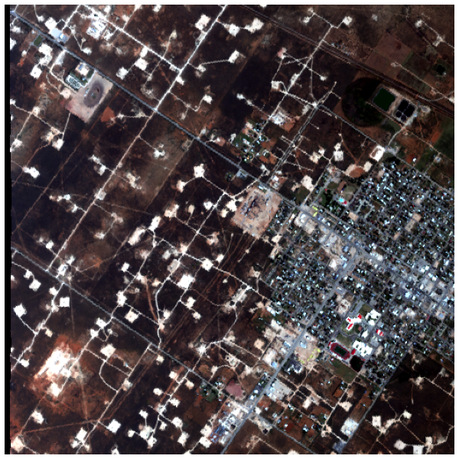}} &
\adjustbox{valign=c}{\includegraphics[width=0.1700\textwidth]{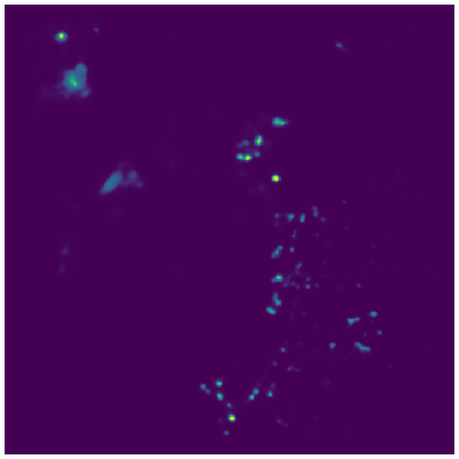}} &
\adjustbox{valign=c}{\includegraphics[width=0.1700\textwidth]{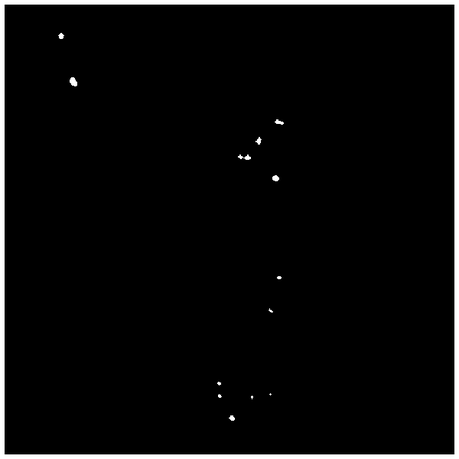}} &
\adjustbox{valign=c}{\includegraphics[width=0.1700\textwidth]{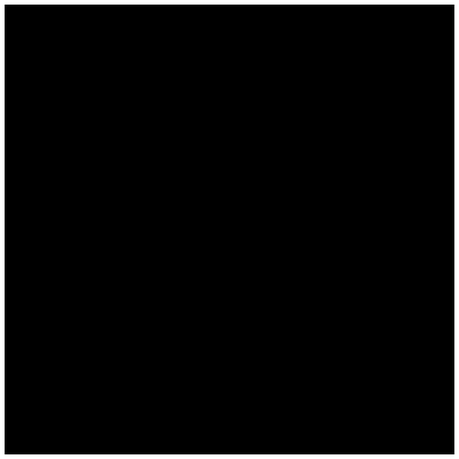}} &
\adjustbox{valign=c}{\includegraphics[width=0.1700\textwidth]{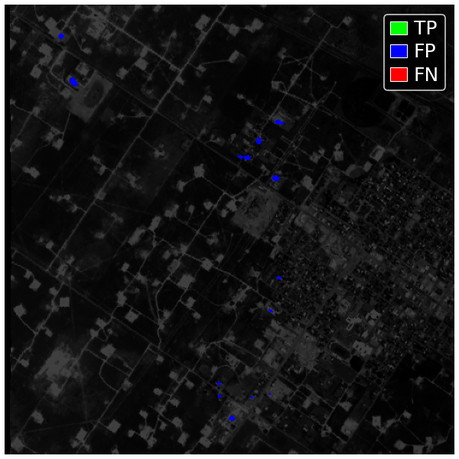}}
\\[2pt]

\adjustbox{valign=c}{\rowlabel{SegFormer (CU)}} &
\adjustbox{valign=c}{\includegraphics[width=0.1700\textwidth]{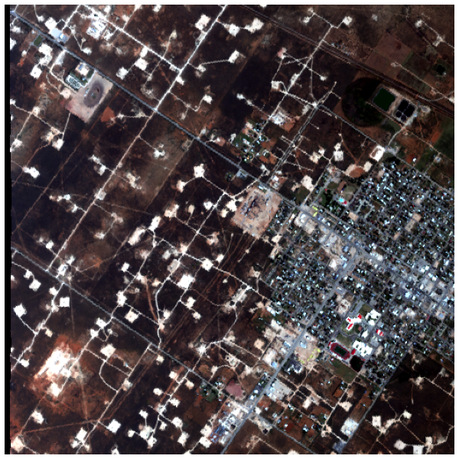}} &
\adjustbox{valign=c}{\includegraphics[width=0.1700\textwidth]{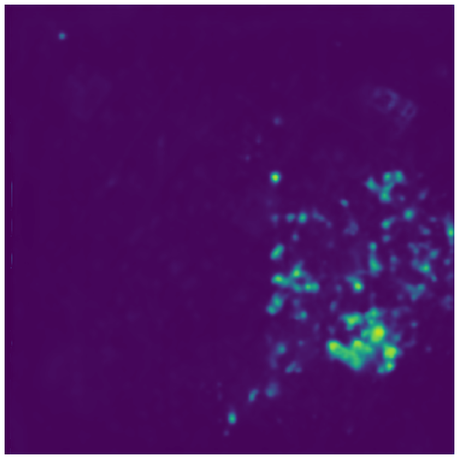}} &
\adjustbox{valign=c}{\includegraphics[width=0.1700\textwidth]{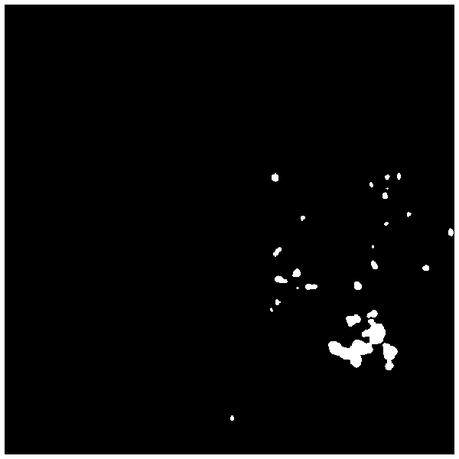}} &
\adjustbox{valign=c}{\includegraphics[width=0.1700\textwidth]{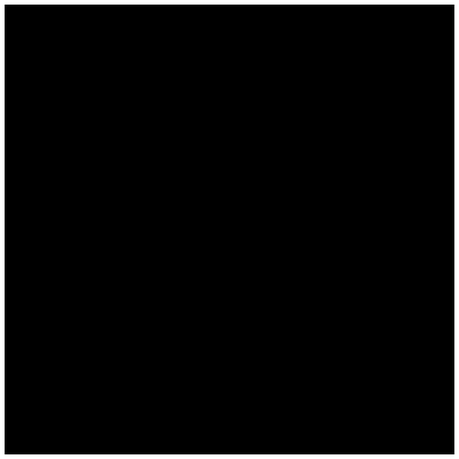}} &
\adjustbox{valign=c}{\includegraphics[width=0.1700\textwidth]{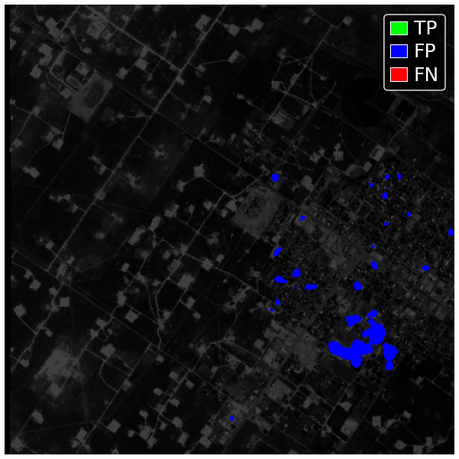}}
\\[2pt]

\adjustbox{valign=c}{\rowlabel{\flame{} (Ours)}} &
\adjustbox{valign=c}{\includegraphics[width=0.1700\textwidth]{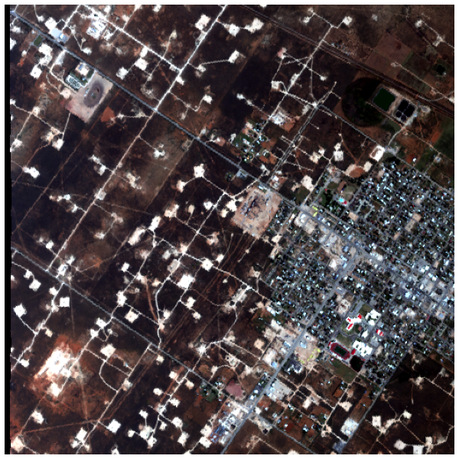}} &
\adjustbox{valign=c}{\includegraphics[width=0.1700\textwidth]{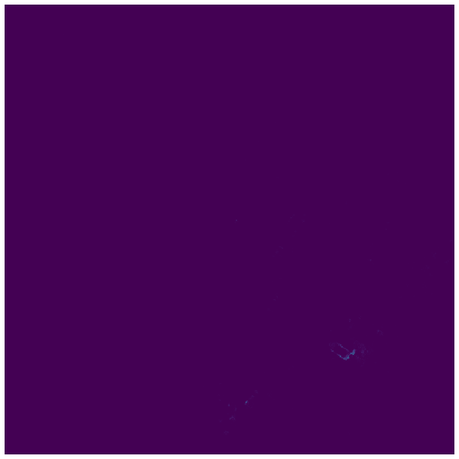}} &
\adjustbox{valign=c}{\includegraphics[width=0.1700\textwidth]{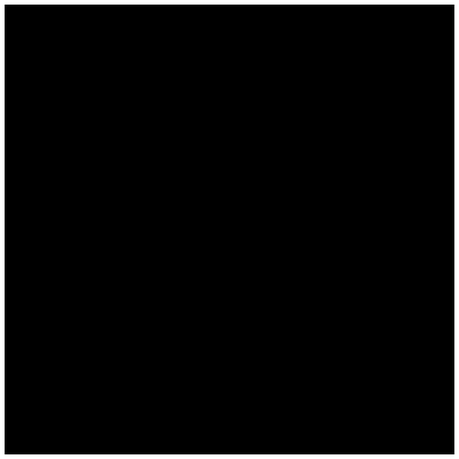}} &
\adjustbox{valign=c}{\includegraphics[width=0.1700\textwidth]{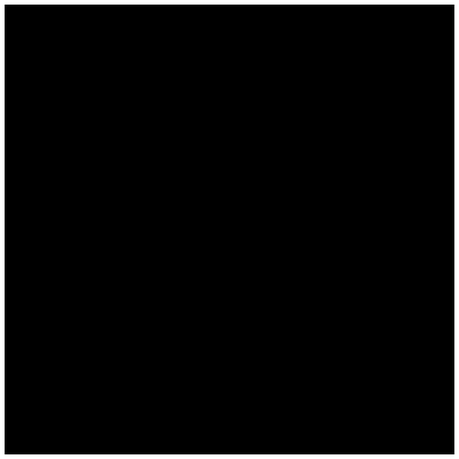}} &
\adjustbox{valign=c}{\includegraphics[width=0.1700\textwidth]{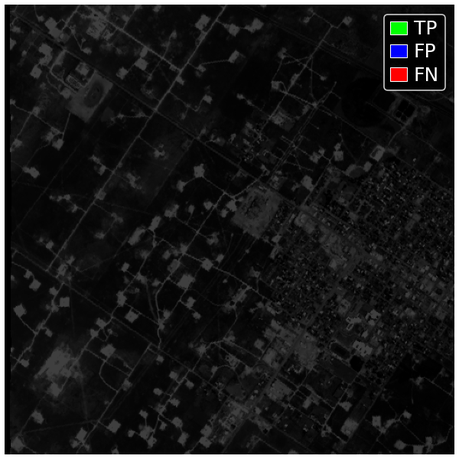}}
\\
\end{tabular}

\caption{Per-pixel outputs on a plume-free test tile. Row and column layouts follow Figure~\ref{fig:qualitative_baselines_strong}. The ground-truth column contains no plume pixels, and the TP/FP/FN overlay shows false positives in blue on a darkened RGB image.}
\label{fig:qualitative_baselines_non}
\end{figure*}
\paragraph{Detailed Plume-Free Case} Figure~\ref{fig:qualitative_baselines_non} shows the same comparison on a tile that contains no annotated plume but a high density of urban surface clutter. MAG1C-tile produces strong directional responses along streets and rooftops, and these responses propagate into the binary mask of UNet+MAG1C-SAS. UNet with the MobileNetV2 encoder and SegFormer with the ConvUp decoder both generate scattered detections in regions where surface materials share spectral cues with methane absorption. The \flame{} score remains close to zero across the entire tile, and its binary mask is empty, which is consistent with the lower pixel false positive rate reported in Section~\ref{sec:fpr_analysis}.


\end{document}